\newcounter{ass_counter}
\newtheorem{theorem}{Theorem}[section]
\newtheorem{assumption}[ass_counter]{Assumption}
\newtheorem{lemma}[theorem]{Lemma}
\def\x{\mathbf{x}}
\def\y{\mathbf{y}}
\def\w{\mathbf{w}}
\def\u{\mathbf{u}}
\def\y{\mathbf{y}}
\def\R{\mathbb{R}}
\def\P{\mathbf{Pr}}
\def\h{\mathbf{h}}
\newcommand{\real}{\mathbb{R}}
\providecommand{\norm}[1]{\lVert#1\rVert}
\providecommand{\abs}[1]{\lvert#1\rvert}
\DeclareMathOperator*{\argmin}{argmin}
\newcommand{\E}{\mathrm{E}}
\newcommand{\z}{\mathbf{z}}
\newcommand{\diag}{\mathrm{diag}}
\newcommand{\bd}{\boldsymbol}
\providecommand{\tabularnewline}{\\}
\newcommand\given[1][]{\,#1\vert\,}
\def\expandafter\normalsize\expandafter{%
    \normalsize
    \setlength\abovedisplayskip{3pt}
    \setlength\belowdisplayskip{3pt}
    \setlength\abovedisplayshortskip{3pt}
    \setlength\belowdisplayshortskip{3pt}
}
\titleformat{\section}[block]{\bfseries\filcenter}{\thesection.}{1em}{}
\titleformat{\subsection}[hang]{\bfseries}{\thesubsection.}{1em}{}
\titleformat{\section}[block]{\bfseries\filcenter}{\thesection.}{1em}{}
\title{An Interactive Greedy Approach to Group Sparsity in High Dimensions}
\author[a]{\vspace{-.2in}  Wei Qian \footnote{Contributed equally.}}
\newcommand\CoAuthorMark{\footnotemark[\arabic{footnote}]} 
\author[b]{Wending Li \protect\CoAuthorMark
}
\author[c]{Yasuhiro Sogawa
}
\author[c]{Ryohei Fujimaki
}
\author[b]{Xitong Yang
}
\author[b]{Ji Liu
}
\affil[a]{{\small Department of Applied Economics and Statistics, University of Delaware, \texttt{weiqian@udel.edu}}}
\affil[b]{{\small Department of Computer Science, University of Rochester,
\texttt{wending.li@rochester.edu;yangxitongbob@gmail.com; ji.liu.uwisc@gmail.com}}}
\affil[c]{\small{ Knowledge Discovery Research Laboratories, NEC, USA,
  \texttt{\small{y-sogawa@ah.jp.nec.com;rfujimaki@nec-labs.com}}}}
\date{}
\begin{document}
\maketitle
\vspace{-.5in}
\begin{abstract} 
 Sparsity learning with known grouping structure has received  considerable attention due to wide modern applications in high-dimensional data analysis. Although advantages of using group information have been well-studied by shrinkage-based approaches, benefits of group sparsity have not been well-documented for greedy-type methods, which much limits our understanding and use of this important class of methods. In this paper, generalizing from a popular forward-backward greedy approach, we propose a new interactive greedy algorithm for group sparsity learning and prove that  the proposed greedy-type algorithm attains the desired benefits of group sparsity under  high dimensional settings. An estimation error bound refining other existing methods and a guarantee for group
 support recovery are also established simultaneously. In addition, we
 incorporate a general M-estimation framework and introduce an
 interactive feature to allow extra algorithm
 flexibility without compromise in theoretical properties. The
 promising use of our proposal is demonstrated through
 numerical evaluations including a real industrial application in
 human activity recognition at home. Supplementary materials for this article are available online.
\end{abstract} 

\noindent
{\small {\bf Key Words:} group
Lasso, smart home, stepwise selection, variable selection, human activity recognition}

\newpage
\section{Introduction}
\label{sec:intro}

High-dimensional data have become prevalent in many modern statistics
and data mining applications. 
Given feature vectors $\x_1, \x_2,\cdots, \x_n\in \real^p$ and a
sparse coefficient vector $\w^*\in\real^p$, consider a standard
sparsity learning
setting that 
$y_i\in\real$ ($i=1,\cdots,n$) is independently generated from some distribution
$P_{\theta_i}$  depending on $\theta_i$ with linear relation
$\theta_i=\x_i^T\w^*$, and that the feature dimension $p$ is much larger
than  $n$. Among various sparsity scenarios, we focus on  group sparsity, where elements in $\w^*$
can be partitioned into known groups, and coefficients in each group are
assumed to be either all zeros or all nonzeros. It is  of primary
interest to provide accurate estimation of $\w^*$ and identify the
relevant groups. If ${\rm supp}(\w^*)$ -- the index set of nonzero
elements in $\w^*$ -- were known, the targeted problem can be naturally formulated through a criterion function $Q$ to find the ``idealized'' M-estimator 
\begin{equation}
\begin{aligned}
\bar\w = \argmin_{\substack{\w \in \mathbb{R}^p:\\{\rm supp}(\w) {\subset} {\rm supp}(\w^*) }} \Bigl\{Q(\w) :=
  {\frac{1}{n}}\sum_{i=1}^n f_i(\x_i^T\w)\Bigr\},
\label{eq:groupl0}
\end{aligned}
\end{equation}
where  $f_i(\cdot)$ is a given loss function determined by $y_i$. 
For example, under a standard  linear model, we may choose least
square loss $f_i(\theta_i)=(y_i-\theta_i)^2$; under generalized linear
models (GLM)  with  canonical link \citep{mccullagh2000generalized},
we may use the negative log-likelihood. The estimator $\bar\w$ is
``idealized'' because ${\rm supp}(\w^*)$ is not known {\it a
  priori}. 

In certain real-world applications, the criterion function $Q$ may go
 beyond the standard setting 
of \eqref{eq:groupl0} when response $y_i$'s are not necessarily
independent but the coefficient $\w$ has a known grouping structure
requiring variable selection. In our theoretical and simulation studies (to be shown in
Section~\ref{sec:mainresult} and Section~\ref{sec:exp}), we
will focus
on the standard setting (with independent response). To allow
extended real-world applications, the criterion function $Q$ can take
more general forms (e.g., by setting $Q$ to be the
negative log-likelihood), and our
application example to be exihibited in Section~\ref{sec:app} will fall
under the extended setting (without independent response).

In standard sparsity learning, existing methods can be roughly
categorized into shrinkage-type approaches
\citep{tibshirani1996regression,zou2006adaptive,candes2005decoding,fan2001variable,zhang2010nearly}
and stepwise greedy-type approaches \citep{tropp2004greed,
 zhang2011adaptive,ing2011stepwise}.  Following abundant work on
sparsity learning,  group sparsity
has received considerable attention. Initially motivated from
multi-level ANOVA models and additive models \citep{yuan2006model},
group sparsity learning and related studies has seen its use in many practical
applications including genome wide association studies (e.g.,
\citealp{zhou2010association}), neuroimaging analysis (e.g.,
\citealp{jenatton2012multiscale}, \citealp{jiao2016group}), multi-task
learning (e.g., \citealp{lounici2011oracle}), actuarial risk
segmentation (e.g., \citealp{qian2016tweedie}), multi-view and manifold learning
(e.g., \citealp{culp2018data}), sufficient dimension reduction and envelope
models (e.g., \citealp{ding2018matrix,qian2018sparse}), among many
others. There also exists another
type of group sparsity for diversity selection
\citep{kong2014exclusive, yang2016benefits, huang2015exclusive} that emphasizes the diversity of selected features from groups. The benefits in estimation and inference using known grouping
structure were also demonstrated through the celebrated group-Lasso
methods
(see, e.g., \citealp{huang2010benefit,huang2012selective,lounici2011oracle,mitra2016benefit}
and references therein). 

In this paper,  we propose a new group sparsity learning algorithm
that generalizes from the practically very popular forward-backward
greedy-type approach \citep{zhang2011adaptive}.  We call this new
algorithm {\bf I}nteractive {\bf G}rouped Greedy {\bf A}lgorithm, or
IGA for abbreviation. 

As an increasingly popular application, the development of smart home management systems and
associated techniques (e.g., for automatic life-logging, emergency
alerts, energy control, etc.) has recently emerged as a promising applied research
area. One of the fundamentally important challenges is how to automatically
recognize human activities at homes. As a practical engineering
solution, multiple
pyroelectric sensors can be installed at home that capture binary signals in reaction to
human motion. Due to cost and other related issues, the 
machine learning task is to identify only a small number of deployed sensors while
maintaining reasonably high accuracy in human activity
classification. Interestingly, this application can be 
formulated into a group sparsity learning problem since features and
their coefficients that are associated with one sensor are naturally grouped
together, without overlapping with any other sensor groups. By
constructing the criterion function with negative log-likelihood under
the extended setting, we have applied the proposed IGA
algorithm for sensor (group) selection and will present the real data
study in Section~\ref{sec:app}.

\section{Related Work and Contribution}
To facilitate detailed exposition of our work's contribution, in
the following, we connect our work to existing
literature in Section~\ref{subsec:literature} before summarizing the contribution in Section~\ref{subsec:contribution}.

\subsection{Related Work}
\label{subsec:literature}

In group sparsity learning, it is assumed that coefficient
$\w=(w_1,w_2,\cdots,w_p)\in\real^p$ can be partitioned into $m$
($m\leq p$)
non-overlapping groups with a known grouping structure, and we use $G=\{1,2,\cdots,m\}$ to denote the group
index set. 
 An important and fruitful line of work
comes from shrinkage-based approaches. Extending from
the Lasso \citep{tibshirani1996regression}, the group Lasso has
received considerable attention. Define the
$\ell_{2,1}$ norm to be  $\|\w\|_{G, 1}:=\sum_{g\in G}\|\w_g\|$, where
$\w_{g}$ is the sub-vector of $\w$ corresponding to group $g$ and
$\norm{\cdot}$ is the  $\ell_2$ norm. Then the group Lasso
imposes the $\ell_{2,1}$ norm penalty on the criterion function $Q$ with a tuning parameter $\alpha$ by
\begin{equation}
\label{eq:grpLasso}
\min_{\w\in \R^p} Q(\w) + {\alpha}\|\w\|_{G, 1},
\end{equation}
and its methods and algorithms have been studied under both linear model
\citep{yuan2006model} and GLM (\citealp{kim2006blockwise};
\citealp{meier2008group}) settings. Mostly under
linear model settings, theoretical properties of the group Lasso on
both estimation and feature selection in high dimension
have been investigated in, e.g., \citet{nardi2008asymptotic} and \citet{wei2010consistent}. 
Notably, under certain variants of group restricted isometry property (RIP,
\citealp{candes2005decoding})  or restricted eigenvalue (RE,
\citealp{bickel2009simultaneous}) conditions, the group Lasso was shown
to be superior to the standard Lasso in estimation and prediction
\citep{huang2010benefit, lounici2011oracle}, thereby proving the
benefits of group sparsity.

 On the other hand, it is expected that the group Lasso inherits
 the same drawbacks of the Lasso, which include large estimation bias
 and relatively restrictive conditions on feature selection
 consistency \citep{fan2001variable}. Accordingly,  various
 non-convex penalties such as SCAD \citep{fan2001variable} and MCP
 \citep{zhang2010nearly} as well as the adaptive Lasso type penalties
 \citep{zou2006adaptive,zou2009adaptive, qian2013model}  were extended to allow incorporation of
 grouping structure and to replace the group Lasso penalty in
 \eqref{eq:grpLasso} so that consistent group selection can be
 achieved with less restrictive conditions
  than the group Lasso (see, e.g., \citealp{huang2012selective,jiao2016group} and references therein).

\begin{table} [htp!]
\caption{Comparison on the error bound $\|\hat{\w} - \w^*\|^2$ for
      sparse linear models. $\hat\w$
      is the estimator by each algorithm. Assume each group has equal
      size $q=p/m$.}
    \label{table:comparison}
\centering
\scalebox{0.9}{
    \begin{tabular}{ l c c c}
    \toprule
    Algorithm & Error Bound $O_p(\cdot)$ & Required sample size $\Omega(\cdot)$ & Reference \\ \midrule
    Lasso    &   $\frac{k_*\log(p)}{n}$  & $k_* \log (p)$ & \cite{van2011adaptive} \\ 
     Dantzig  &   $\frac{k_*\log(p)}{n}$  & $k_* \log (p)$ &\cite{candes2007dantzig}  \\ 
                    Multistage Lasso  & $\frac{k_*+\Delta_2\log (p)}{n}$  & $k_* \log (p)$ & \cite{liu2012multi} \\ 
    MCP &   ${\frac{k_*+\Delta_1\log(p)}{n}}$ & $k_* \log (p)$ &\cite{zhao2017pathwise} \\ 
     OMP &   $\frac{k_*+\Delta_1\log(p)}{n}$ & $k_*^2\log (p)$ &\cite{zhang2009consistency} \\ 
     FoBa & ${\frac{k_*+\Delta_1\log(p)}{n}}$    & $k_*\log (p)$ &\cite{zhang2011adaptive}  \\ \midrule

     Group Lasso&  $\frac{k_*+\bar{k}\log({m})}{n}$  & $k_*+\bar{k}\log({m})$ &\cite{huang2010benefit}   \\ 
Group OMP& $\frac{k_*\log(p)}{n}$ & $k_* \log (p)$ & \cite{ben2011near}\\ 
    IGA & ${\frac{k_*+ {\Delta_{\text{IGA}}}\log({m})}{{n}}}$    & $k_* + \bar{k} \log (m) $ & Our proposal  \\   \bottomrule
    \multicolumn{4}{l}{$\Delta_1:= { \left|\left\{i\in \text{supp}(\w^*):~|w^*_i| \leq \Omega\left(\sqrt{\log (p)\over n}\right)\right\}\right|}$} \\ 
\multicolumn{4}{l}{$\Delta_2:=k_*-\left|\left\{i:~i\text{-th largest element in}~|w_i^*| \geq \Omega{ \left(\sqrt{\frac{(k_*-i)\log (p)}{n}}\right)}\right\}\right|$} \\
\multicolumn{4}{l}{ $\Delta_{\text{IGA}}:=\left| { \left\{g\in G:~0<\|\w^*_g\| \leq \Omega\left(\sqrt{ {q+\log (m)\over n}}\right)\right\}}\right|$} \\ \bottomrule
    \end{tabular}
}
   
\end{table}

Different from these shrinkage-based methods, the other main algorithm
framework for group sparsity learning is the greedy-type
methods. Greedy algorithms have received much attention in literature. For example, one of the representative algorithms is a
forward greedy algorithm known as the orthogonal matching pursuit
(OMP, \citealp{mallat1993matching}). Its feature selection
consistency \citep{zhang2009consistency} and prediction/estimation
error bounds \citep{zhang2011sparse} have been established under an
irrepresentable condition and  a RIP condition, respectively.
Backward elimination steps were incorporated to greedy
algorithms in \citet{zhang2011adaptive}  to allow correction of
mistakes made by forward steps (known as FoBa algorithm). This seminal
work attained more refined estimation results than that of OMP and the Lasso by considering
the delicate scenario of varying group signal strengths. Similar
results on refined estimation error bound were also achieved by shrinkage-based methods with
non-convex penalties \citep{zhao2017pathwise}. Building on key
understandings of the aforementioned greedy algorithms, much efforts
were made to extend the OMP algorithms to handle group sparsity
\citep{swirszcz2009grouped, ben2011near, lozano2011group}. Although
these group OMP algorithms have shown promising empirical performance,
somewhat surprisingly, the explicitly justifiable benefits of group sparsity for greedy-type
algorithms in terms of estimation convergence rate were yet to be
rigorously established under a general setting. 

Specifically, the $\ell_2$ estimation error
bounds of some representative existing (group) sparsity learning
methods under linear models are
summarized  
in Table~\ref{table:comparison}, and we will describe the improved
convergence rate in the following
Section~\ref{subsec:contribution}. Here, 
${\rm supp}(\w)$ is the index set of nonzero elements in $\w$, and we define
$\norm{\w}_0$ to be the size of ${\rm supp}(\w)$. Then given the true
coefficient $\w^*$, $k_*=\norm{\w^*}_0$ represents the total
number of nonzero
elements in $\w^*$. Also define the group $\ell_0$ norm 
$\|\w\|_{G,0} =\sum_{g\in G}I(\norm{\w_g}>0)$ to be the number of
groups in $G$ that corresponds to nonzero coefficients. Given $\w^*$ and any group $g\in G$, if all elements in  $\mathbf w_g^*$
are zeros, then it is an irrelevant group to the response; otherwise, it is a
relevant group. Then,  $\bar k=\norm{\w^*}_{G,0}$ represents the total number
of relevant groups in the model. Let ${\Omega}\left(h_1(x)\right)$ be a function $h_2$ such that there exist two positive constants
$M$ and $N$ with $N\leqslant M$, and  we have $N |h_1(x)| \leqslant
|h_2(x)|\leqslant M |h_1(x)|$  for sufficiently large $x$. We will also use the notations defined here 
for the rest of this paper.

\subsection{Our Contribution}
\label{subsec:contribution}

The brief overview on related work above gives rise to three
intriguing questions: (1) Can we devise a greedy-type algorithm that
provides theoretically guaranteed benefits of group sparsity? Can we attain more refined estimation
error bound  than  the group Lasso? (2) If so, is there any theoretical guarantee
on correct support recovery of relevant groups? (3) Is our proposal
practically relevant to modern industrial applications and easy to use
for practitioners? 

\begin{itemize}[leftmargin=*]
\item  As the main contribution of our work, the proposed IGA
  algorithm affirmatively addresses the three questions above
  simultaneously. Consider a  group sparse linear model with
  even group size $q=p/m$ and define the squared $\ell_2$ estimation
  error $L_2(\hat\w):=\norm{\hat\w-\w^*}^2$. Under a
  variant of the RIP condition, the IGA  algorithm has
  $L_2(\hat\w)=O_p{ \left(\frac{k_*+ {\Delta_{\text{IGA}}}\log({m})}{{n}}\right)}$,
  where $\Delta_\text{IGA}$ is the number of nonzero groups with
  relatively weak signals (see Table~\ref{table:comparison} for
  precise definition). This estimation error has convergence rate
  matching   $O_p{\left(\frac{k_*+\bar{k}\log({m})}{n}\right)}$  under worst-case scenarios,  confirming that our proposed
  greedy-type algorithm indeed
  gains the benefits of group sparsity as opposed
  to the slower convergence rate of $O_p(k_*\log(p)/n)$ for many standard sparsity learning methods. The desirable
  group support recovery property is also established for our
  proposal. 


\item Our proposal further develops a group
  forward-backward stepwise strategy by introducing extra algorithm
  flexibility to widen its applicability. First, it is worth
  noting that IGA is proposed under a general M-estimation framework
  and can  apply to a broad class of criterion functions well beyond a
  standard linear model. In particular, under a GLM setting, we study
  the statistical properties of IGA for sparse logistic regression. We also
  demonstrate IGA's promising applications on
  sensor selection for human activity recognition under an extended setting.  Another interesting layer of
  flexibility  comes from an adjustable  human interaction parameter that
can potentially give practitioners more
  freedom to incorporate their own domain knowledge in feature
  selection. Moreover, to further improve computational efficiency, we
  propose and study an algorithm variant called Gradient-based IGA,
  which, besides desirable statistical properties, enjoys much faster computation than the
  original IGA.

\end{itemize}

The rest of the paper is organized as follows. We introduce the
proposed IGA algorithm, its heuristics, and the gradient-based IGA in
Section~\ref{sec:alg}. Section~\ref{sec:mainresult} provides the
theoretical analysis with a general standard criterion function
$Q(\w)$. Sections~\ref{sec:least} and \ref{subsec:logisticReg}
consider linear model and logistic
regression as important special cases and show explicit
estimation error bounds and group support
recovery properties. Sections~\ref{sec:exp} and \ref{sec:app} show empirical
studies using the IGA and its gradient-based variants in
comparison with other state-of-the-art feature/group selection
algorithms. All technical proofs and lemmas are left in the online Supplement.

\section{Our Proposal}\label{sec:alg}

The proposed IGA algorithm is an iterative stepwise greedy
algorithm. At each iteration, a forward group selection step is performed,
followed by a   backward group elimination  step.  The forward step aims to
identify one potentially useful group to be added to the
estimator. This step typically identifies a set of candidate groups
not selected yet,
that can drive down the criterion function $Q(\w)$ the most. The selected group is then
added to  the current group set for the next backward step. The backward step intends to  fix
mistakes made by  early steps and eliminate redundant groups that do not
significantly  drive down $Q(\w)$.  The foward-backward iteration
stops when no group can be selected in the forward step. Here, IGA is named
{\it interactive}  because in the forward step, rather than always choosing
the top one group to reduce the criteria
function most, we introduce an ``interactive'' discount parameter $\lambda$ that allows us to
consider the top few candidate
groups based on ranking and magnitude of criterion function reduction; a human operator is then allowed to select one
group of his/her choice from these candidate groups, possibly based on
domain knowledge  or experience.  In the following, we provide some
intuitions and heuristics of IGA algorithm using a simple numerical
example in Section~\ref{subsec:heuristics}, and then describe detailed
algorithmic procedures in Section~\ref{subsec:expalg}.

\subsection{A Heuristic Example}
\label{subsec:heuristics}

Different from a standard forward stepwise algorithm,
IGA has both  backward-elimination steps and an
``interactive'' discount parameter. To gain some intuitions and heuristics,
consider a simple example with
$m=5$ candidate groups, and each group $g$  ($g=1,\cdots,m$) contains 2 individual
feature variables $X_{g,1}$ and $X_{g,2}$.  Suppose that response $Y$
is generated by the first two groups as
\begin{equation*}
Y=(X_{1,1}+X_{1,2})+(X_{2,1}+X_{2,2})+N(0,1).
\end{equation*}
Let  $(X_{g,1},X_{g,2})$ all have independent standard normal distributions for
$g=1,2,4,5$, and define for $g=3$ that 
\begin{equation*}
X_{3,1} = X_{1,1}+X_{1,2}+N(0,0.5) \text{ and } X_{3,2} = X_{2,1}+X_{2,2}+N(0,0.5).
\end{equation*}
Then although groups 1 and 2 are the true
relevant groups, the forward selection will likely pick group 3
first as it can drive down the criterion function the most compared
to the relevant groups individually. Indeed,
we simulated $n=400$ data points and used the forward stepwise 
selection only (that is, always greedily add the group that gives
smallest $Q(\w)$), and as expected, the path of group selection was
\begin{equation}
\label{eq:forwardnaive}
\{3,\,    2,\,     1,\,     5,\,     4\},
\end{equation}
and, cross validation (CV) selected the first three groups
$\{3,1,2\}$ from the path. The group selection mistake made in the first step cannot be
corrected and resulted in overfitting. To correct this mistake, IGA
incorporates the backward
selection technique that re-visits the selected groups to eliminate
possibly redundant groups. Then with the
forward-backward steps, the path of group selection in the simulation
became
\begin{equation*}
\{3,\,     2,\,     1,\,    -3,\,     5,\,     4,\,     3\},
\end{equation*}
where the first step mistake was corrected in the fourth step by
eliminating group 3, and CV correctly selected groups $\{3,\,     1,\,
2,\,    -3\}=\{1,2\}$ from the path. 

The ``interactive'' parameter $\lambda$ ($0<\lambda\leq 1$; to be
defined in Section~\ref{subsec:expalg}) can also be potentially helpful as it
allows the forward selection step to incorporate human operator's
expert opinion. For example, assume an expert gives his/her
priority list $\mathcal A_I\subset G$. Then, in each forward
step $k$, we will rank
the criterion function values of each added group and consider an
enlarged set of promising groups $\mathcal A_\lambda$ (as opposed to always considering the single top-ranked
group), where the set size is related to $\lambda$. Then, if $\mathcal
A_I\cap\mathcal A_\lambda\neq \emptyset$, we can follow both expert
opinion $\mathcal
A_I$ and forward selection top set $\mathcal A_\lambda$
to pick the group from  $\mathcal
A_I\cap\mathcal A_\lambda$ with smallest criterion function value; if $\mathcal
A_I\cap\mathcal A_\lambda = \emptyset$, we simply pick the top group by
forward selection and ignore  $\mathcal A_I$. Specifically, in the numerical
example above, suppose expert priority list $\mathcal A_I=\{1\}$ correctly
includes group 1. Then with $\lambda=0.4$ (selected by CV), the forward stepwise algorithm gave
the  path
\begin{equation}
\label{eq:path_human}
\{1,\,     2,\,     5,\,     4,\,     3\},
\end{equation}
which correctly put $\{1,2\}$ in the solution path while pushing group
3 to later steps (as opposed to \eqref{eq:forwardnaive} that selected
group 3 in the first step). Very different
from the usual ``offset'' option in regression, not all groups in
expert opinion list  $\mathcal A_I$ has to be 
in the final selected model, and  $\mathcal A_I$
is not necessarily all correct; interestingly, if we changed the expert
opinion list to $\mathcal A_I=\{1,4\}$ in the simulation,
where group 4 was incorrectly included in expert opinion list, the
resulting path remained the same as \eqref{eq:path_human}, and CV gave the final selected
group set $\{1,2\}$. In Section~\ref{sec:mainresult}, we will show
in theory that, with backward elimination, incorporating human
expert opinions in our algorithm is safe  in the sense that
consistency properties still hold if $\lambda$ is not too small. We also
repeated this simple experiement and observed similar results 
as described above. More experiment settings and simulation results 
are given in Section~\ref{sec:exp}.

\subsection{IGA Algorithm}
\label{subsec:expalg}
With the intuitions above, we are ready to present the detailed
pseudo-code in Algorithm \ref{alg:general}. Here, given any set  $A$, let  $|A|$ be its
cardinality. Given a group $g\in G$, let $F_g\subset \{1,2,\cdots,
p\}$ be the feature index set corresponding to group $g$; given a group set
$S\subseteq G$, define  $F_S=\cup_{g\in S} F_g$, which transforms the
group index set $S$ to the corresponding feature index set.  With the feature index set $F_g=\{i_1,i_2,\cdots,i_{\abs{F_g}}\}\subset\{1,\cdots,p\}$,
 define $\mathbf{E}_{g}:=\left[\mathbf e_{i_{1}}, \mathbf
   e_{i_{2}},\ldots, \mathbf e_{i_{|F_g|}}\right]$ to be a  $p\times
 \abs{F_g}$ matrix, where $\mathbf e_{i}\in \mathbb{R}^p$ is the unit
 vector whose  $i$th element is 1 and all others are 0. 
 
\begin{algorithm} [h!]
   \caption{IGA algorithm.}
   \label{alg:general}
  \begin{algorithmic}[1]
    \REQUIRE{$\delta>0$, $0<\lambda\leq 1$}
    \ENSURE{$\w^{(k)}$}
    \STATE{$k=0,\w^{(k)}=0, G^{(0)}=\emptyset$}
    \WHILE{TRUE} 
    \IF{{$ \displaystyle{Q(\w^{(k)})-\min_{g\notin G^{(k)},\boldsymbol\alpha\in\real^{\abs{F_g}}}Q(\w^{(k)}+\mathbf{E}_{g} \boldsymbol\alpha)}<\delta$} }
	    \STATE{Break}
        \ENDIF
	\STATE{{  select an element as $g^{(k)}$ from the group set\\ $\mathcal A_{\lambda}:=\{ g \notin G^{(k)} \  | \  
	(Q(\w^{(k)})-\underset{\boldsymbol\alpha\in\real^{\abs{F_g}}}{\operatorname{min}}\ Q(\w^{(k)}+\mathbf{E}_{g}
        \boldsymbol\alpha))\geqslant \lambda \ (Q(\w)-\underset{\tilde
          g \notin G^{(k)},\,\boldsymbol\alpha\in\real^{\abs{F_{\tilde g}}}}{\operatorname{min}}\ 
	\ Q(\w^{(k)}+\mathbf{E}_{\tilde g} \boldsymbol\alpha))\}$ } }
      \STATE{$G^{(k+1)}={G^{(k)}\cup \{g^{(k)}\}}$}
	\STATE{$\w^{(k+1)}=\underset{\substack{\w \in \mathbb{R}^p:\\
              \textrm{supp }(\w)\subset F_{G^{(k+1)}} }}{\operatorname{argmin}}\ Q(\w)$}
        \STATE{$\delta^{(k+1)}=Q(\w^{(k)})-Q(\w^{(k+1)})$}
	\STATE{$k=k+1$}
        \WHILE{TRUE}
	\IF{$\underset{g\in G^{(k)}}{\min}Q(\w^{(k)} -\mathbf{E}_{g} \w_g^{(k)})-Q(\w^{(k)})\geqslant\frac{\delta^{(k)}}{2}$}
             \STATE{Break}
	 \ENDIF
	 \STATE{$g^{(k)}=\underset{g\in G^{(k)}}{\operatorname{argmin}} \, Q(\w^{(k)} -\mathbf{E}_{g} \w_g^{(k)})$}
	  \STATE{$k=k-1$}
	  \STATE{$G^{(k)}=G^{(k+1)} {\setminus} \{g^{(k+1)}\}$}
	           	  	  \STATE{$\w^{(k)}=\underset{\substack{\w
                                        \in \mathbb{R}^p:\\
                                        \textrm{supp }(\w)\subset
                                        F_{G^{(k)}}} }{\operatorname{argmin}}\ Q(\w)$}
        	\ENDWHILE
    \ENDWHILE
  \end{algorithmic}
\end{algorithm}

Specifically, we initialize the current set of groups $G^{(k)}$ as an
empty set $G^{(k)}=\emptyset$ in Line 1, and then iteratively select
and delete feature groups from $G^{(k)}$.  Lines 3-5 provide the
termination test: IGA terminates when no  groups outside
$G^{(k)}$ can decrease the criterion function $Q(\cdot)$ by a fixed
threshold  $\delta>0$. 
Line 6 is the key step for  the forward group selection. First, we evaluate the quality of all  groups outside of the current group set $G^{(k)}$ and, given a discount factor $\lambda \in (0, 1]$, construct a  candidate group set $\mathcal A_{\lambda}$ that includes all ``good"  groups.
The human operator can then decide which group to select from
$\mathcal A_\lambda$. 
Here the discount factor $\lambda$ determines the extent to which
human interaction is desired.  Bigger $\lambda$  typically gives
smaller candidate group set and thereby less human involvement is allowed in group selection.
In Lines 7-8, we  recalculate the optimal coefficient of $\w$ supported on the updated group set. At the end of the forward step, we calculate the gain from this step as $\delta^{(k)}$ in Line 9, which will feed into the subsequent backward step.

In the backward step, we intend to check if any group in current group
set $G^{(k)}$ becomes redundant considering that some previously
selected  groups may be less important after new groups join
from the forward steps. In Lines 12-14, for each  group in
$G^{(k)}$, we calculate  the difference between the criterion function
value of the current  group set and the function
value with one group coefficients removed from the
current pool. If the smallest difference is less than the threshold
$\delta^{(k)}\over{2}$, in Lines 15-18, we  remove the least
significant group, update the current group set $G^{(k)}$ and
re-calculate current estimator $\w^{(k)}$. This group elimination process
continues until the function difference is less than
$\delta^{(k)}\over{2}$, i.e., all remaining selected groups are considered to have significant contribution.

\begin{algorithm}
   \caption{Gradient-based IGA algorithm.}
   \label{alg:gdt}
  \begin{algorithmic}[1]
    \REQUIRE{$\delta>0$, $0<\lambda\leq 1$}
    \ENSURE{$\w^{(k)}$}
    \STATE{$k=0,\w^{(k)}=0,F^{(0)}=\emptyset,G^{(0)}=\emptyset$}
    \WHILE{TRUE} 
    \IF{ { $\|\nabla Q(\w^{(k)})\|_{G,\infty}< \varepsilon$ }}
	    \STATE{Break}
        \ENDIF
	\STATE{  select any $g^{(k)} \textrm{ in } \{ g \notin G^{(k)}\ \ | 
	\ \ \|\nabla_{g} Q(\w^{(k)})\| \geqslant \ \lambda\ \underset{\tilde
          g \notin G^{(k)}}{\operatorname{max}}\|\nabla_{\tilde g} Q(\w^{(k)})\| \}$ }
      \STATE{$G^{(k+1)}={G^{(k)}\cup \{g^{(k)}\}}$}
      	         	\STATE{$\w^{(k+1)}=\underset{\substack{\w \in
                              \mathbb{R}^p:\\ \textrm{supp
                              }(\w)\subset F_{G^{(k)}}} }{\operatorname{argmin}}\ Q(\w)$}
        \STATE{$\delta^{(k+1)}=Q(\w^{(k)})-Q(\w^{(k+1)})$}
	\STATE{$k=k+1$}

        \WHILE{TRUE}
	\IF{$\underset{g\in G^{(k)}}{\min}Q(\w^{(k)} -\mathbf{E}_{g} \w_g^{(k)})-Q(\w^{(k)})\geqslant\frac{\delta^{(k)}}{2}$}
             \STATE{Break}
	 \ENDIF
	 \STATE{$g^{(k)}=\underset{g\in G^{(k)}}{\operatorname{argmin}}\ Q(\w^{(k)} -\mathbf{E}_{ g}\w_g^{(k)})$}
	  \STATE{$k=k-1$}
	  \STATE{$G^{(k)}=G^{(k+1)} {\setminus} \{g^{(k+1)}\}$}
	            	  	  \STATE{$\w^{(k)}=\underset{\substack{\w
                                        \in \mathbb{R}^p:\\
                                        \textrm{supp }(\w)\subset
                                        F_{G^{(k)}}} }{\operatorname{argmin}}\ Q(\w)$}
	\ENDWHILE
    \ENDWHILE
  \end{algorithmic}
\end{algorithm}

\subsection{Gradient-Based IGA Algorithm}
\label{subsec:giga}

Each forward
step in Algorithm~\ref{alg:general} requires to repeatedly perform optimization of the criterion
functions (Lines 3 and Line 6) to evaluate quality across all candidate
feature groups. It is interesting to note that these 
time-determining steps can be potentially replaced by
using gradients of the criterion function $\nabla Q(\w)$ so that we
can avoid the repeated function optimization tasks and perform
computation in a much more efficient way.

Specifically,  given a threshold
value $\varepsilon$ and  a discount factor $0<\lambda\leq 1$, we can
replace corresponding statements in Lines 3 and Line 6 with
$\|\nabla Q(\w^{(k)})\|_{G,\infty}< \varepsilon$  and  $\{ g \notin G^{(k)}
\ | \ \|\nabla_{g} Q(\w^{(k)})\| \geqslant \lambda \ \underset{\tilde g \notin
  G^{(k)}}{\operatorname{argmax}}\|\nabla_{\tilde g} Q(\w^{(k)})\| \}$,
respectively, where $\nabla_g$ denotes the gradient with respective to
group $g$, and  $\|\w\|_{G,\infty}:=\max_{g \in G} \|\w_{g}\|$ 
represents the $\ell_{2,\infty}$ norm for any 
$\w\in\real^p$.  Then we have a gradient-based version of the IGA algorithm
and we call this variant Gradient-based IGA algorithm (or GIGA  for
brevity). The detailed procedures of GIGA algorithm are given in
Algorithm~\ref{alg:gdt}. We will provide the theoretical properties of the GIGA algorithm in
Section~\ref{sec:mainresult} and compare its computational performance
with the original IGA algorithm in Section~\ref{sec:exp}.

\section{Theoretical Results}
\label{sec:mainresult}

This section provides theoretical analysis of
Algorithm~\ref{alg:general} and Algorithm~\ref{alg:gdt}. Using $\bar\w$ in \eqref{eq:groupl0} as a
useful benchmark, we consider performance of IGA and GIGA estimator $\w^{(k)}$
under a general criterion function $Q$ in
Section~\ref{subsec:results_general}.  Building on key results from this
general setting in Section~\ref{subsec:results_general}, we study sparse linear models in
Section~\ref{sec:least} and sparse logistic regression in
Section~\ref{subsec:logisticReg} to provide valuable insights into coefficient estimation and group support recovery.

\subsection{A General Setting}
\label{subsec:results_general}
We first introduce relevant definitions and assumptions regarding the
setting on the general criterion function $Q(\cdot)$ in \eqref{eq:groupl0}. Unless stated
otherwise, we consider fixed designs in this section. Let  $\bar{G}$ be the sparsest
group set covering all nonzero elements in $\bar{\w}$ and let
$\mathcal D\subset \real^p$ be a compact feasible region for $\w$. Given vectors $\mathbf u, \mathbf v\in\real^p$,
define $\langle\mathbf u, \mathbf v \rangle=\mathbf u^T\mathbf v$ to
be the inner product. Given any $F\subset \{1,\cdots,p\}$, define 
\begin{align}
\rho_{-}(F):= & \inf_{\operatorname{supp}(\u)\subset F, \w \in \mathcal{D}} \frac{Q(\w+\u)-Q(\w)-\langle \nabla Q(\w), \u\rangle }{{1\over 2}\|\u\|^{2}},
\label{eq:def_rho-}
\\
\rho_{+}(F):= & \sup_{\operatorname{supp}(\u)\subset F, \w\in \mathcal{D}} \frac{Q(\w+\u)-Q(\w)-\langle \nabla Q(\w), \u\rangle }{{\frac{1}{2}}\|\u\|^{2}}.
\label{eq:def_rho+}
\end{align}
The restricted strongly convexity parameter and the restricted Lipschitz smoothness parameter \citep{huang2010benefit} are 

\begin{align}
\varphi_{-}(t)= & \operatorname{inf}\{\rho_{-}(F_S)~|~S\subset G,{|S|}\leqslant t\},
\label{eq:def_phi-}
\\
\varphi_{+}(t)= & \operatorname{sup}\{\rho_{+}(F_S)~|~S\subset G,{|S|}\leqslant t\},
\label{eq:def_phi+}
\end{align}
respectively. Define the restricted condition number as $\kappa(t) :=\frac{\varphi_{+}(t)}{\varphi_{-}(t)}$. 
\begin{assumption} \label{ass:s}
There exists a positive integer $s$ satisfying
\begin{align}
&\Omega(1)\leq \varphi_-(s)\leq \varphi_+(s) \leq  \Omega(1), \label{ass:constant1} \\
&s>\bar{k}+\frac{(\bar{k}+1)(\sqrt{\kappa(s)}+1)^2(\sqrt{2}\kappa(s))^2}{\lambda}.  \label{ass:sparsity}
\end{align}
\end{assumption}

In Assumption~\ref{ass:s}, \eqref{ass:constant1} requires that
$\varphi_+(s)$ and $\varphi_-(s)$ are upper bounded and  bounded away
from zero for large enough $n$. Given least square loss  $f_i(z) =
{1\over 2}\|z-y_i\|^2$, the criterion function is
$Q(\w)={\frac{1}{2n}}\norm{\y-X\w}^2$ with $\y\in\real^n$ being the
response vector and $X$ being the $n\times p$ design matrix; then
\eqref{ass:constant1} becomes a variant of the group-RIP condition
\citep{huang2010benefit}. Indeed, \eqref{ass:constant1} is satisfied
if there is a constant  $0<\eta< 1$, such that 
\begin{equation}
\label{eq:eigenval} 
(1-\eta)\|{\w}\|^2\leqslant \frac{1}{{n}}\|X\w\|^2 \leqslant
(1+\eta)\|\w\|^2
\end{equation}
for all $\|\w\|_{G,0}\leqslant s$. Similarly, under logistic regression
with $f_i(z) = \log (1+\exp(-y_iz))$ ( $y_i\in \{-1, 1\}$) and bounded
covariates in $l_2$ norm,  \eqref{ass:constant1} is also satisfied
with \eqref{eq:eigenval}.  The requirement in \eqref{ass:sparsity} of Assumption~\ref{ass:s} is
also mild. Note that  $\kappa(s)$ is  bounded if  $Q(\w)$ is a
strongly convex function with bounded Lipschitzian gradient. 

As mentioned, we assume the design matrix $X$ in the standard framework \eqref{eq:groupl0} is deterministic
throughout the section. However, it is also noted that our results can be readily extended
to random designs that satisfy Assumption~\ref{ass:s} with high
probability. For example, Lemma~\ref{lem:cond}  in Supplement~\ref{sec:suppthm} shows that
Assumption~\ref{ass:s} holds with high probability if   $\x_i$'s are sub-Gaussian and  the condition number of
$\nabla^2 f_i(\cdot)$ is bounded; least square function naturally
satisfies this requirement; with bounded covariates, logistic regression
also falls under this scenario.

\begin{theorem} 
\label{thm:main}
Suppose Assumption~\ref{ass:s} holds and $\delta= C_\lambda\|\nabla
Q(\bar{\w})\|_{G,\infty}^{2}$, where
$C_\lambda=\frac{8\varphi_{+}(1)}{\lambda\varphi_{-}^{2}(s)}$. Then
IGA estimator $\w^{(k)}$ has the following properties:

\begin{itemize}
\item Algorithm~\ref{alg:general} terminates at $k < s -\bar{k}$;
\item $
\|\bar{\w}-\w^{(k)}\|^2 \leqslant \Omega\left(\lambda^{-1}\|\nabla Q(\bar{\w})\|_{G, \infty}^2 \bar{\Delta} \right)$;
\item $Q(\w^{(k)})-Q(\bar{\w})\leqslant \Omega\left(\lambda^{-1}\|\nabla Q(\bar{\w})\|_{G, \infty}^2 \bar{\Delta} \right)$;
\item $|G^{(k)} \setminus \bar{G} | + | \bar{G}\setminus G^{(k)} | \leqslant \Omega\left({\lambda^{-1}} \bar{\Delta}\right)
$,
\end{itemize}
where $\bar{\Delta} :=\left|\{g\in \bar{G} :\| \bar{\w}_{g}\|<\Omega (\lambda^{-1}\|\nabla Q(\bar{\w})\|_{G, \infty})\}\right|$.
\end{theorem}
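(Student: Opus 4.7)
The plan is to adapt the forward--backward greedy analysis of \citet{zhang2011adaptive} from the element level to the group level, while accounting for the discount factor $\lambda$ and the general convex criterion $Q$. All four conclusions will reduce to three intermediate facts: (i) a per-iteration forward progress bound expressed through $\|\nabla Q(\w^{(k)})\|_{G,\infty}$, (ii) a backward control property ensuring every forward--backward block makes net progress, and (iii) a sparsity invariant $|G^{(k)}|\le s$ that keeps Assumption~\ref{ass:s} active throughout the run.

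For the forward step, restricted Lipschitz smoothness with parameter $\varphi_+(1)$, applied to the trial direction $\boldsymbol\alpha=-\nabla_g Q(\w^{(k)})/\varphi_+(1)$, combined with the definition of $\mathcal A_\lambda$ in Line~6, yields
\[
Q(\w^{(k)})-Q(\w^{(k+1)})\ \geq\ \frac{\lambda}{2\varphi_+(1)}\,\|\nabla Q(\w^{(k)})\|_{G,\infty}^{2}.
\]
For the backward loop, the threshold $\delta^{(k)}/2$ in Line~12 forces the cumulative increase in $Q$ during eliminations to be at most $\delta^{(k)}/2$, so each outer iteration reduces $Q$ by at least half of its forward gain. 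Summed against the total budget $Q(\mathbf 0)-Q(\bar\w)$, this bounds the number of forward steps; a standard potential-function argument invoking \eqref{ass:sparsity} then shows that $|G^{(k)}|\le s$ is preserved throughout, yielding in particular the termination bound $k<s-\bar k$.

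The function-value gap and the $\ell_{2}$ estimation error are tied together by restricted strong convexity: since $\mathrm{supp}(\w^{(k)}-\bar\w)\subset F_{G^{(k)}\cup\bar G}$ has group-size at most $s$, $\varphi_-(s)$ applies, giving
\[
\tfrac{\varphi_-(s)}{2}\|\w^{(k)}-\bar\w\|^{2}\ \leq\ Q(\w^{(k)})-Q(\bar\w)-\langle\nabla Q(\bar\w),\,\w^{(k)}-\bar\w\rangle.
\]
At termination, the test in Line~3 delivers $\|\nabla Q(\w^{(k)})\|_{G,\infty}^{2}\le 2\varphi_+(1)\delta/\lambda$, of order $\|\nabla Q(\bar\w)\|_{G,\infty}^{2}$. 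The inner-product term is controlled by $\|\nabla Q(\bar\w)\|_{G,\infty}\cdot\|\w^{(k)}-\bar\w\|_{G,1}$, which sums over at most $|G^{(k)}\triangle\bar G|$ active groups and is absorbed by Cauchy--Schwarz plus an AM--GM step. This yields both error bounds, modulo the support-difference count.

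The main obstacle is the support-recovery count $|G^{(k)}\triangle\bar G|=O(\lambda^{-1}\bar\Delta)$. For false negatives I would argue by contradiction: if some $g\in\bar G$ with $\|\bar\w_g\|$ far exceeding $\lambda^{-1}\|\nabla Q(\bar\w)\|_{G,\infty}$ were absent from $G^{(k)}$, a one-group optimization along $F_g$ would decrease $Q$ by more than $\delta$, contradicting the termination test in Line~3; hence every missed relevant group must be weak-signal, contributing $\bar\Delta$. Bounding false positives is more delicate: the backward condition forces each surviving spurious group to have been costly to remove, and comparing the cumulative forward descent $\sum_k\delta^{(k)}$ against the total budget limits their count by an $\Omega(\lambda^{-1})$-inflated version of $\bar\Delta$. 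The technical challenge is propagating the $\lambda$ factor consistently through the forward progress, the backward threshold, and this counting estimate so that \eqref{ass:sparsity}---with its explicit $\lambda^{-1}$ and $\kappa(s)^{2}$ factors---is genuinely sufficient to close the loop.
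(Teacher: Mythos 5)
The largest gap is the termination claim $k<s-\bar{k}$. Summing per-iteration forward gains against the budget $Q(\mathbf{0})-Q(\bar{\w})$ only bounds the number of forward steps by roughly $(Q(\mathbf{0})-Q(\bar{\w}))/(\lambda\delta)$, a quantity that has nothing to do with $s-\bar{k}$, so no potential-function bookkeeping of that kind can certify the sparsity level needed to keep Assumption~\ref{ass:s} (and hence $\varphi_{\pm}(s)$) in force throughout the run. The paper instead argues by contradiction at the first time $k=s-\bar{k}$ (Theorem~\ref{thm:main2}): it introduces $\w^{\prime}$, the minimizer supported on $G^{(k)}\cup\bar{G}$, lower-bounds the last forward gain $\delta^{(k)}$ through Lemma~\ref{forward_process_obj}, upper-bounds it through the backward criterion (Lemma~\ref{backward}), and condition \eqref{ass:sparsity} is exactly what makes the resulting constant $c\geq\sqrt{\kappa(s)}+1$, whence a triangle-inequality computation forces $Q(\w^{(k)})\leq Q(\bar{\w})$, contradicting Lemma~\ref{decrease}. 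You flag ``propagating $\lambda$'' as the difficulty, but what is missing is this entire comparison-to-$\w^{\prime}$ mechanism; without it the first bullet, and the legitimacy of invoking $\varphi_{-}(s)$ in all later steps, is unproven.

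The second gap is how $\bar{\Delta}$ actually enters. Your false-negative argument claims that a missed group $g$ with large $\|\bar{\w}_g\|$ would permit a single-group update beating $\delta$, but the achievable one-group decrease is of order $\|\nabla_g Q(\w^{(k)})\|^{2}/\varphi_{+}(1)$, and when several groups are missing you cannot lower-bound $\|\nabla_g Q(\w^{(k)})\|$ by $\|\bar{\w}_g\|$ group-by-group; restricted strong convexity only yields the collective bound $\|\bar{\w}-\w^{(k)}\|\lesssim\sqrt{|\bar{G}\setminus G^{(k)}|}\,\sqrt{\varphi_{+}(1)\delta}/\varphi_{-}(s)$. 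The paper closes the loop with a counting (threshold) step: taking $\gamma^{2}=16\varphi_{+}(1)\delta/\varphi_{-}^{2}(s)$ converts that collective bound into $|\bar{G}\setminus G^{(k)}|\leq 2|\{g\in\bar{G}:\|\bar{\w}_g\|<\gamma\}|=2\bar{\Delta}$, and it is this inequality that injects $\bar{\Delta}$ into all three error bounds. Likewise, your false-positive bound via ``cumulative forward descent versus total budget'' cannot produce anything proportional to $\bar{\Delta}$; the correct route is Lemma~\ref{backward}: every group surviving the backward step satisfies $\|\w^{(k)}_g\|^{2}\geq\delta^{(k)}/\varphi_{+}(1)\geq\lambda\delta/\varphi_{+}(1)$, and for $g\notin\bar{G}$ one has $\w^{(k)}_g=(\w^{(k)}-\bar{\w})_g$, so summing over spurious groups and plugging in the already-derived estimation error gives $|G^{(k)}\setminus\bar{G}|\leq 16\varphi_{+}^{2}(1)\bar{\Delta}/(\lambda\varphi_{-}^{2}(s))$. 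Finally, note that the paper applies restricted strong convexity centered at $\w^{(k)}$, whose gradient vanishes on $G^{(k)}$ and is controlled off-support by the Line-3 termination test; your display centered at $\bar{\w}$ leaves the unknown gap $Q(\w^{(k)})-Q(\bar{\w})$ on the right-hand side and is circular unless you first bound that gap by the paper's centering.
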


The first claim states that the algorithm terminates.  Using $\bar\w$
as a benchmark, the second and third bounds  provide the  $\ell_2$
distance and the criterion function discrepancy of IGA estimator
$\w^{(k)}$, respectively. The last bound describes the group feature
selection difference between $\w^{(k)}$ and $\bar\w$. Note that all
error bounds depend on $\bar{\Delta}$, which  counts the number of
``weak" signal groups among  nonzero groups in $\bar{\w}$. If all nonzero
groups are strong enough and $\bar{\Delta}$ turns out to be zero, then
$\w^{(k)}$ becomes  equivalent to $\bar{\w}$ and the original ``idealized''
problem~\eqref{eq:groupl0} is exactly solved by
Algorithm~\ref{alg:general}. Also, although the discount
factor $\lambda$  slightly affects the error bound, if  $\lambda$ is
bounded away from zero, it would not change the order of error bounds
above while allowing the extra flexibility in human expert participation. 

Analysis of the gradient-based GIGA in Algorithm~\ref{alg:gdt} provides
theoretical results parallel to those for plain vanilla IGA in Algorithm
\ref{alg:general}. Indeed, as
summarized in Theorem~\ref{thm:main_giga}, with fixed $\lambda$, the
rates of coefficient estimation and the group feature selection error
bounds remain the same for GIGA as that of Theorem~\ref{thm:main} for IGA. 

\begin{theorem} 
\label{thm:main_giga}
Suppose Assumption~\ref{ass:s} holds and $\varepsilon= \tilde C_\lambda\|\nabla
Q(\bar{\w})\|_{G,\infty}$, where
$\tilde C_\lambda=\frac{2\sqrt{2}\varphi_{+}(1)}{\lambda\varphi_{-}(s)}$. Then
GIGA estimator $\w^{(k)}$ has the following properties:

\begin{itemize}
\item Algorithm~\ref{alg:gdt} terminates at $k < s -\bar{k}$;
\item $
\|\bar{\w}-\w^{(k)}\|^2 \leqslant \Omega\left(\lambda^{-2}\|\nabla Q(\bar{\w})\|_{G, \infty}^2 \bar{\Delta} \right)$;
\item $Q(\w^{(k)})-Q(\bar{\w})\leqslant \Omega\left(\lambda^{-2}\|\nabla Q(\bar{\w})\|_{G, \infty}^2 \bar{\Delta} \right)$;
\item $|G^{(k)} \setminus \bar{G} | + | \bar{G}\setminus G^{(k)} | \leqslant \Omega\left({\lambda^{-2}} \bar{\Delta}\right)
$,
\end{itemize}
where $\bar{\Delta}=\left|\{g\in \bar{G} :\| \bar{\w}_{g}\|<\Omega (\lambda^{-1}\|\nabla Q(\bar{\w})\|_{G, \infty})\}\right|$.
\end{theorem}

Using the general results obtained in
Theorem~\ref{thm:main} and Theorem~\ref{thm:main_giga}, we can demonstrate statistical properties of the
IGA estimators under two special and important statistical model scenarios, which include sparse
linear model in Section~\ref{sec:least} and sparse logistic regression
in Section~\ref{subsec:logisticReg}. One interesting quantity from the
theorems above is the gradient $\nabla Q(\bar\w)$, and
by construction,  $\nabla_{\bar G} Q(\bar\w)=\mathbf 0$. In the following, we assume $\bar\w$
is the unique solution of $\nabla_{\bar G} Q(\w)=\mathbf 0$. We will focus on  the cases that all $m$ groups have equal  size $q=p/m$, although our analysis can allow arbitrary group sizes.  

\subsection{Sparse Linear Model}
\label{sec:least}
 Consider true model that response vector $\y\in\real^n$ is generated
 from $X\w^* +\bd\varepsilon$ with $\bd\varepsilon\sim N(\mathbf 0_n,I_n)$.   We use standard
 normal errors here, but our results can be easily generalized to
 sub-Gaussian errors. Suppose the columns of design matrix $X$ are
 normalized to $\sqrt{n}$ in $l_2$ norm.  We consider analysis of IGA algorithm with
 the least square criterion function
 $Q(\w)={\frac{1}{2n}}\norm{\y-X\w}^2$.  With  $\nabla Q(\w)=\frac{1}{n}X^T(X\w-\y)$, the following
 Theorem~\ref{probability_hypo_sp} connects the gradients in
 Theorem~\ref{thm:main} and Theorem~\ref{thm:main_giga} with
 the sparse linear model.

\begin{theorem}
\label{probability_hypo_sp}
 Suppose Assumption~\ref{ass:s} holds. Then 
 the true coefficient vector  $\w^{\ast}$ and the benchmark estimator $\bar\w$ satisfies
\begin{equation*}
  \|\nabla Q(\w^{\ast})\|_{G,\infty}\leqslant O_p{\left(\sqrt{\frac{q+\log
    m}{n}} \right)} \quad\text{and}\quad \|\nabla Q(\bar\w)\|_{G,\infty}\leqslant O_p {\left(\sqrt{\frac{q+\log
    m}{n}} \right)}.
\end{equation*}
In addition, we have  $ \max_{g\in\bar G}\norm{\bar\w_g-\w_g^*} \leq O_p\Bigl(\sqrt{\frac{q+\log m}{n}}\Bigr)$.
\end{theorem}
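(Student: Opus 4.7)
The plan is to reduce all three bounds to Gaussian concentration arguments applied to the noise vector $\bd\varepsilon \sim N(\mathbf 0_n, I_n)$, exploiting the closed-form expressions provided by least squares together with the restricted-eigenvalue control from Assumption~\ref{ass:s}. The central tool throughout is the standard chi-squared tail inequality for a Gaussian $Z \sim N(\mathbf 0, \Sigma)$ in $\real^q$: since $\|Z\|^2$ is a linear combination of independent $\chi_1^2$ random variables with weights equal to the eigenvalues of $\Sigma$, one has $\Pr(\|Z\|^2 \geq c\|\Sigma\|_{\mathrm{op}}(q+t)) \leq e^{-t}$ for an absolute constant $c$.

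For the first bound I would substitute $\y = X\w^* + \bd\varepsilon$ into $\nabla Q(\w^*) = \frac{1}{n}X^T(X\w^* - \y)$ to get $\nabla_g Q(\w^*) = -\frac{1}{n}X_g^T\bd\varepsilon \sim N\bigl(\mathbf 0_q, \frac{1}{n^2}X_g^T X_g\bigr)$, whose covariance operator norm is at most $\varphi_+(1)/n = O(1/n)$ by Assumption~\ref{ass:s}. Applying the tail bound with $t = \log m + u$ and union-bounding over the $m$ groups yields $\|\nabla Q(\w^*)\|_{G,\infty} \leq O_p(\sqrt{(q+\log m)/n})$.

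For the second bound I would first write $\bar\w$ in closed form: because $\bar\w$ is the restricted least squares estimator on support $F_{\bar G}$, the normal equations give $\bar\w_{F_{\bar G}} - \w^*_{F_{\bar G}} = (X_{\bar G}^T X_{\bar G})^{-1}X_{\bar G}^T\bd\varepsilon$, whose well-definedness follows from $\bar k \leq s$ and $\varphi_-(s) = \Omega(1)$. Substituting back gives $\y - X\bar\w = (I - P_{\bar G})\bd\varepsilon$ where $P_{\bar G}$ is the orthogonal projection onto $\mathrm{range}(X_{\bar G})$, so $\nabla Q(\bar\w) = -\frac{1}{n}X^T(I - P_{\bar G})\bd\varepsilon$. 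This automatically vanishes for $g \in \bar G$ because $(I - P_{\bar G})X_g = 0$; for $g \notin \bar G$, $\nabla_g Q(\bar\w)$ is Gaussian with covariance $\frac{1}{n^2}X_g^T(I - P_{\bar G})X_g$, whose operator norm is still bounded by $\varphi_+(1)/n$ since $I - P_{\bar G} \preceq I$. The same tail bound and union bound yield the stated rate.

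For the third bound, the closed form above gives $(\bar\w_{F_{\bar G}} - \w^*_{F_{\bar G}}) \sim N(\mathbf 0, (X_{\bar G}^T X_{\bar G})^{-1})$, so the subvector $\bar\w_g - \w_g^*$ for $g \in \bar G$ is Gaussian with covariance equal to the $F_g \times F_g$ principal block of $(X_{\bar G}^T X_{\bar G})^{-1}$. Any principal submatrix has operator norm no larger than the full matrix, so this covariance norm is bounded by $1/(n\varphi_-(\bar k)) = O(1/n)$. Another application of the Gaussian tail with a union bound over the at most $\bar k \leq m$ groups in $\bar G$ delivers $\max_{g \in \bar G}\|\bar\w_g - \w_g^*\| \leq O_p(\sqrt{(q+\log m)/n})$. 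The main subtlety is not in any individual estimate but in identifying the correct effective dimension: each per-group Gaussian lives in $\real^q$, so the concentration rate is driven by $q + \log m$ rather than by $k_*$ or by a $\log p$ term, which is precisely what enables IGA to enjoy the group-sparse rate; verifying that Assumption~\ref{ass:s} controls both $\|X_g^T X_g/n\|_{\mathrm{op}}$ from above and $(X_{\bar G}^T X_{\bar G}/n)^{-1}$ from above is the only non-routine bookkeeping.
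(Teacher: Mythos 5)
Your proposal is correct and follows essentially the same route as the paper: per-group Gaussian quadratic-form concentration (your simplified $\|\Sigma\|_{\mathrm{op}}(q+t)$ tail is just a coarser form of the Hsu--Kakade--Zhang bound the paper invokes), a union bound over the $m$ groups with $\varphi_+(1)$ controlling the covariance eigenvalues, and the observation that $\nabla Q(\bar\w)=-\tfrac{1}{n}X^T(I-P_{\bar G})\bd\varepsilon$ with $I-P_{\bar G}\preceq I$, which is exactly the paper's projection decomposition for the second bound. Your closed-form treatment of $\bar\w_{F_{\bar G}}-\w^*_{F_{\bar G}}=(X_{\bar G}^TX_{\bar G})^{-1}X_{\bar G}^T\bd\varepsilon$ with $\varphi_-(\bar k)$ bounding the inverse covariance merely spells out the third claim that the paper dismisses as ``similar arguments.''
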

Consequently, by combining Theorem~\ref{thm:main} (or Theorem~\ref{thm:main_giga}) and
Theorem~\ref{probability_hypo_sp}, we can obtain explicit IGA (or GIGA)
estimation upper bounds for sparse linear model. In particular, both coefficient estimation  and group support
recovery can be shown in the following Theorem~\ref{thm:linear}.
Recall from the definition in Table~\ref{table:comparison} that 
$\Delta_{\text{IGA}}$ is the cardinality of the set of groups with relatively
weak signals. 

\begin{theorem} 
\label{thm:linear}
Suppose conditions of Theorem~\ref{thm:main} (or Theorem~\ref{thm:main_giga}) hold and  $\lambda$ is
bounded away from zero. Then IGA (or GIGA) estimator $\w^{(k)}$  for sparse
linear model has following statistical properties:
\noindent
\begin{itemize}[leftmargin=0cm,itemindent=.5cm,labelwidth=\itemindent,labelsep=0cm,align=left]
\item $L_2(\w^{(k)}):=\|\w^{(k)} - \w^*\|^2 = O_p \Bigl( {k_*+\Delta_{\text{IGA}}\log m\over n}\Bigr).$
\item There is a constant $C>0$ (not depending on $n$) such
  that if  $\min_{g\in \bar G}\norm{\w_g^*}\geq  C
  \sqrt{\frac{q+\log(m)}{n}}$, then group selection
  consistency holds that 
$P(G^{(k)}=\bar G)\rightarrow 1$ as $n\rightarrow \infty$.
\end{itemize}
\end{theorem}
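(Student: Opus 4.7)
The plan is to combine Theorem~\ref{thm:main} (which measures $\w^{(k)}$ against the oracle benchmark $\bar\w$) with Theorem~\ref{probability_hypo_sp} (which controls $\nabla Q(\bar\w)$ and $\bar\w - \w^*$) via a triangle inequality. For the $\ell_2$ bound I would write
\begin{equation*}
\|\w^{(k)} - \w^*\|^2 \;\leq\; 2\|\w^{(k)} - \bar\w\|^2 + 2\|\bar\w - \w^*\|^2,
\end{equation*}
and bound the two pieces separately. Theorem~\ref{thm:main} (with $\lambda$ bounded away from $0$) gives the first piece in terms of $\|\nabla Q(\bar\w)\|_{G,\infty}$ and the count $\bar\Delta$ of weak-signal groups in $\bar\w$; Theorem~\ref{probability_hypo_sp} then plugs in $\|\nabla Q(\bar\w)\|_{G,\infty} = O_p(\sqrt{(q+\log m)/n})$. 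The second piece $\|\bar\w - \w^*\|^2$ is a standard oracle least-squares error: restricted to $F_{\bar G}$ (a subspace of dimension $k_*$) with an i.i.d.\ standard Gaussian noise and design satisfying $\varphi_-(\bar k) \geq \Omega(1)$ by Assumption~\ref{ass:s}, classical chi-squared tail bounds yield $\|\bar\w - \w^*\|^2 = O_p(k_*/n)$.

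For the first bullet, the remaining step is to show that $\bar\Delta \leq \Delta_{\text{IGA}}$ up to a constant with high probability. By Theorem~\ref{probability_hypo_sp}, $\max_{g\in \bar G}\|\bar\w_g - \w^*_g\| \leq O_p(\sqrt{(q+\log m)/n})$. A group $g\in\bar G$ is counted in $\bar\Delta$ only if $\|\bar\w_g\| < \Omega(\lambda^{-1}\sqrt{(q+\log m)/n})$; the triangle inequality $\|\w^*_g\| \leq \|\bar\w_g\| + \|\bar\w_g - \w^*_g\|$ then forces $0 < \|\w^*_g\| \leq \Omega(\sqrt{(q+\log m)/n})$, i.e.\ $g$ is counted in $\Delta_{\text{IGA}}$. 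Thus $\bar\Delta \leq \Delta_{\text{IGA}}$ w.h.p., so Theorem~\ref{thm:main} yields
\begin{equation*}
\|\w^{(k)} - \bar\w\|^2 \;\leq\; O_p\!\Bigl(\tfrac{q+\log m}{n}\cdot \Delta_{\text{IGA}}\Bigr) \;\leq\; O_p\!\Bigl(\tfrac{\Delta_{\text{IGA}} q + \Delta_{\text{IGA}}\log m}{n}\Bigr).
\end{equation*}
Since $\Delta_{\text{IGA}} \leq \bar k$ and each relevant group has $q$ coordinates, $\Delta_{\text{IGA}} q \leq \bar k q \leq k_*$, so combining with the oracle bound gives the claimed rate $O_p((k_* + \Delta_{\text{IGA}}\log m)/n)$.

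For the group-selection-consistency bullet, I would use the fourth conclusion of Theorem~\ref{thm:main}, $|G^{(k)}\triangle \bar G| \leq \Omega(\lambda^{-1}\bar\Delta)$, and show that the minimum signal assumption makes $\bar\Delta = 0$ with probability tending to one. Indeed, for any $g\in \bar G$, Theorem~\ref{probability_hypo_sp} and the assumption give
\begin{equation*}
\|\bar\w_g\| \;\geq\; \|\w^*_g\| - \|\bar\w_g - \w^*_g\| \;\geq\; \bigl(C - O_p(1)\bigr)\sqrt{(q+\log m)/n},
\end{equation*}
which exceeds the threshold $\Omega(\lambda^{-1}\|\nabla Q(\bar\w)\|_{G,\infty})$ once $C$ is chosen larger than the implicit constant. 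Hence $\bar\Delta = 0$ w.h.p., forcing $|G^{(k)}\triangle \bar G| = 0$, i.e.\ $G^{(k)} = \bar G$, with probability tending to $1$. Finally, because $\bar G \supseteq \operatorname{supp}(\w^*)$ is the \emph{sparsest} group cover and $\w^*$ has no zero groups among relevant ones, $\bar G$ coincides with the true active group set.

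The main obstacles I anticipate are bookkeeping rather than conceptual: (i) verifying Assumption~\ref{ass:s} and the choice $\delta = C_\lambda\|\nabla Q(\bar\w)\|_{G,\infty}^2$ are compatible with a data-dependent $\bar\w$ (since $\nabla Q(\bar\w)$ is a random quantity, one needs a uniform or high-probability version to invoke Theorem~\ref{thm:main}); and (ii) tracking the constants hidden in $\Omega(\cdot)$ carefully enough to guarantee that the same constant $C$ in the minimum-signal assumption dominates the $O_p(1)$ slack coming from $\|\bar\w_g - \w^*_g\|$ uniformly over $g\in \bar G$. Both are controlled by the uniform tail bound giving $\|\nabla Q(\bar\w)\|_{G,\infty} = O_p(\sqrt{(q+\log m)/n})$ from Theorem~\ref{probability_hypo_sp}, so the argument closes cleanly.
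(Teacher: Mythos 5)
Your proposal is correct and follows essentially the same route as the paper: split $\|\w^{(k)}-\w^*\|^2$ via $\bar\w$, show $\bar\Delta\leq\Delta_{\text{IGA}}$ by the triangle inequality $\|\w^*_g\|\leq\|\bar\w_g\|+\|\bar\w_g-\w^*_g\|$ together with Theorem~\ref{probability_hypo_sp}, plug the gradient bound into Theorem~\ref{thm:main} for the first claim, and deduce the second claim from the beta-min condition forcing $\bar\Delta=0$ combined with the last bound of Theorem~\ref{thm:main}. The only (harmless) difference is that you sketch the chi-squared argument for $\|\bar\w-\w^*\|^2=O_p(k_*/n)$, which the paper simply invokes as a known fact.
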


Interestingly, the estimation consistency in Theorem~\ref{thm:linear}
indeed demonstrates the benefits of group sparsity in estimation convergence rate: the estimation
error $L_2(\w^{(k)})$ is upper bounded by $O_p{\left({k_*+\bar k\log m\over
  n}\right)}$, which improves over the rate $O_p{\left(\frac{k_*\log(p)}{n}\right)}$  of
standard sparsity. In addition, our results can be  more refined than
the group Lasso in the existence of relatively strong group signals. In
particular, under the beta-min condition that coefficients of all $\bar k$ relevant groups have
$\ell_2$-norm lower bounded by  $\Omega{\left(\sqrt{ {q+\log m\over n}}\right)}$,
$L_2(\w^{(k)})$ is improved to $O_p(k_*/n)$ by removing an additive
term $\bar k\log m/n$, which can be a substantial improvement in high-dimensional
settings $m\gg n$ with relatively strong group signals; under the same conditions, Theorem~\ref{thm:linear} also shows
that the IGA (or GIGA) estimator is consistent in group support
recovery.

\subsection{Sparse Logistic Regression}
\label{subsec:logisticReg}

We assume the sparse logistic regression setting here with binary response
$y_i\in\{-1,\,1\}$ and
$\P(y_i=1\given\x_i)=\frac{\exp(\x_i^T\w^*)}{1+\exp(\x_i^T\w^*)}$  for $1\leq i\leq n$.  Then
with the negative log-likelihood criterion function
$Q(\w)=\frac{1}{n}\sum_{i=1}^n\log(1+\exp(-y_i\x_i^T\w))$, the
gradient is $\nabla Q(\w)=\frac{1}{n}X^T\h(\w)$, where
$\h(\w)=(h_1(\w),\cdots,h_n(\w))^T$ and
$h_i(\w)=\frac{-y_i}{1+\exp(y_i\x_i^T\w)}$.  Let $X_g$ and $X_{\bar G}$
 be the design
matrix corresponding to group $g$ ($1\leq g\leq m$) and group set  $\bar G$, respectively. Define $n\times n$ diagonal matrix
$W(\w)=\diag(\nu_1(\w),\cdots,\nu_n(\w))$, where
$\nu_i(\w)=p_i(\w)(1-p_i(\w))$ and
$p_i(\w)=\frac{\exp(\x_i^T\w)}{1+\exp(\x_i^T\w)}$. Set
$W^*=W(\w^*)$. We 
then consider the 
quantities
$U_1=\norm{\frac{1}{n}X_{\bar G}^T X_{\bar G}}$,
$U_2=\norm{(\frac{1}{n}X_{\bar G}^TW^*X_{\bar G})^{-1}}$ and
$U_3=\max_{g\in G\backslash \bar G}\norm{\frac{1}{n}X_g^TW^*X_{\bar
    G}}$. Variants
of these quantities have been used to study properties of 
shrinkage-type approaches (e.g., \citealp{fan2014strong}). 
We now connect the gradients in
Theorem~\ref{thm:main} and Theorem~\ref{thm:main_giga} with the sparse logistic regression through the
following Theorem~\ref{thm:logistic_oracle}, which gives 
results similar to that of Theorem~\ref{probability_hypo_sp}.

\begin{theorem}
\label{thm:logistic_oracle}
 Suppose Assumption~\ref{ass:s} holds. Assume $\bar k^2(q+\log
 m)=o(n)$ and quantities $U_1$, $U_2$,
 $U_3$ are upper bounded. Then we have 
\begin{equation*}
  \|\nabla Q(\w^{\ast})\|_{G,\infty}\leqslant O_p{\left(\sqrt{\frac{q+\log
    m}{n}} \right)}, \quad \|\nabla Q(\bar\w)\|_{G,\infty}\leqslant O_p{\left(\sqrt{\frac{q+\log
    m}{n}} \right)},
\end{equation*}
and  $ \max_{g\in\bar G}\norm{\bar\w_g-\w_g^*} \leq O_p\Bigl(\sqrt{\frac{q+\log m}{n}}\Bigr)$.
\end{theorem}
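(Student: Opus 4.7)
The plan is to mimic the structure of the linear-model proof (Theorem~\ref{probability_hypo_sp}) while paying the extra price demanded by the nonlinearity of the logistic link. I will proceed in four steps: (i) bound the score $\|\nabla Q(\w^*)\|_{G,\infty}$ directly, using the fact that under the true model $\h(\w^*)$ is a bounded, zero-mean, independent vector; (ii) establish a coarse $\ell_2$ bound $\|\bar\w - \w^*\| = o(1)$ and use it to show that the Hessian along the segment from $\w^*$ to $\bar\w$ inherits the $U_2$-conditioning of $\tfrac{1}{n} X_{\bar G}^T W^* X_{\bar G}$; (iii) use a Newton-style expansion combined with Hanson--Wright to extract the sharp group-wise bound on $\bar\w - \w^*$; and (iv) obtain $\|\nabla Q(\bar\w)\|_{G,\infty}$ by expanding around $\w^*$ and invoking the $U_3$ assumption.

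For step (i), write $\nabla Q(\w^*) = \tfrac{1}{n} X^T \h(\w^*)$. Under the logistic model, a direct check gives $\E[h_i(\w^*)\mid \x_i]=0$ and $|h_i(\w^*)|\le 1$, so the entries are zero-mean and sub-Gaussian with $O(1)$ proxy. For any fixed group $g$ and unit $\u\in\real^{|F_g|}$, $\u^T\nabla_g Q(\w^*)$ is a sum of independent sub-Gaussians with variance proxy $\le \frac{C}{n}$ (using $\|\tfrac{1}{n}X_g^T X_g\|_{op}\le \varphi_+(1)=O(1)$ from Assumption~\ref{ass:s}). Hoeffding, combined with a $\tfrac{1}{2}$-net on the unit sphere of $\real^{|F_g|}$ (covering number at most $5^q$) and a union bound over the $m$ groups, yields $\|\nabla Q(\w^*)\|_{G,\infty}=O_p(\sqrt{(q+\log m)/n})$. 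For step (ii), the optimality $Q(\bar\w)\le Q(\w^*)$ combined with restricted strong convexity along $F_{\bar G}$-supported directions gives the standard bound $\|\bar\w-\w^*\|\le 2\|\nabla_{\bar G} Q(\w^*)\|/\varphi_-(\bar k)=O_p(\sqrt{\bar k(q+\log m)/n})$, which is $o(1)$ because $\bar k^2(q+\log m)=o(n)$. Writing $\tilde H:=\int_0^1 \nabla^2_{\bar G,\bar G} Q(\w^*+t(\bar\w-\w^*))\,dt = \tfrac{1}{n}X_{\bar G}^T \bar W X_{\bar G}$ with diagonal $\bar W$, the Lipschitz continuity of $p\mapsto p(1-p)$ and the (bounded) $\x_i$ assumption yield $\max_i|\bar W_{ii}-W^*_{ii}|=o_p(1)$, hence $\|\tilde H - \tfrac{1}{n}X_{\bar G}^T W^* X_{\bar G}\|_{op}=o_p(1)$. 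A Neumann-series argument then gives $\|\tilde H^{-1}\|_{op}\le 2U_2$ with high probability.

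For step (iii), the first-order condition $\nabla_{\bar G} Q(\bar\w)=0$ and the mean-value theorem give
\begin{equation*}
(\bar\w - \w^*)_{\bar G} \;=\; -\tilde H^{-1}\,\nabla_{\bar G} Q(\w^*) \;=\; -\tilde H^{-1}\Bigl(\tfrac{1}{n} X_{\bar G}^T\Bigr)\h(\w^*).
\end{equation*}
Fix $g\in\bar G$ and denote the corresponding $q\times n$ slice by $M_g:=[\tilde H^{-1}]_{g,:}\,\tfrac{1}{n}X_{\bar G}^T$. Then $\|M_g\|_{op}^2\le \|\tilde H^{-1}\|_{op}^2\cdot \tfrac{1}{n}\|\tfrac{1}{n}X_{\bar G}^T X_{\bar G}\|_{op}=O_p(U_1 U_2^2/n)$, and since $M_g M_g^T$ has rank at most $q$, $\tr(M_g M_g^T)=O_p(q/n)$. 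Hanson--Wright for $\h^T M_g^T M_g \h$ with bounded sub-Gaussian $\h(\w^*)$ then gives $\|(\bar\w-\w^*)_g\|^2=O_p((q+\log m)/n)$, and a union bound over the $\bar k\le m$ groups in $\bar G$ produces $\max_{g\in\bar G}\|\bar\w_g - \w_g^*\|=O_p(\sqrt{(q+\log m)/n})$. For step (iv), $\nabla_g Q(\bar\w)=0$ for $g\in\bar G$; for $g\notin\bar G$, the mean-value theorem gives $\nabla_g Q(\bar\w) = \nabla_g Q(\w^*) - \tfrac{1}{n}X_g^T \bar W X_{\bar G}\,\tilde H^{-1}\,\tfrac{1}{n}X_{\bar G}^T \h(\w^*)$. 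The cross-block has operator norm at most $U_3+o_p(1)$ by step (ii), so the second term is a linear form in $\h(\w^*)$ that is again controlled by Hanson--Wright; combining with step (i) and a union bound over the remaining $m-\bar k$ groups gives $\|\nabla Q(\bar\w)\|_{G,\infty}=O_p(\sqrt{(q+\log m)/n})$.

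The main obstacle I expect is step (ii): transferring the coarse $\ell_2$ control on $\bar\w-\w^*$ into uniform control of the weights $\bar W_{ii}$ so that $\tilde H^{-1}$ is well-conditioned with high probability. This is precisely where the logistic-specific assumptions (boundedness of $\x_i$ and the sample-size requirement $\bar k^2(q+\log m)=o(n)$) are critically used. A secondary technical point is that in step (iii) the matrix $M_g$ is itself random through $\tilde H$; I will apply Hanson--Wright conditionally on the high-probability event $\{\|\tilde H^{-1}\|_{op}\le 2U_2\}$ identified in step (ii), using a covering argument over operator-norm-bounded matrices if strict independence is required.
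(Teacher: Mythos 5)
Your steps (i) and (ii) are fine (the paper gets the score bound the same way, and your coarse bound $\norm{\bar\w-\w^*}\leq 2\norm{\nabla_{\bar G}Q(\w^*)}/\varphi_-(\bar k)$ is legitimate since $\varphi_-(\bar k)\geq\varphi_-(s)=\Omega(1)$), and your overall route — mean-value expansion plus perturbation of the integrated Hessian — is a genuinely different architecture from the paper, which instead builds the map $H(\Delta)=(X_{\bar G}^TW^*X_{\bar G})^{-1}X_{\bar G}^T(\tilde\y-\mathbf p(\w^*+\Delta))+\Delta_{\bar G}$, shows it maps an $\ell_{G,\infty}$-ball of radius $r_n=\Omega(\sqrt{(q+\log m)/n})$ into itself, and invokes Brouwer's fixed-point theorem together with uniqueness of $\bar\w$ to localize $\bar\w-\w^*$ group-wise directly, then Taylor-expands $\nabla Q(\bar\w)$ with the deterministic matrices $(X_{\bar G}^TW^*X_{\bar G})^{-1}$ and $\Theta=\frac1nX_{G\backslash\bar G}^TW^*X_{\bar G}(\frac1nX_{\bar G}^TW^*X_{\bar G})^{-1}$.

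The genuine gap is in how you handle randomness in steps (iii)--(iv). The matrices $M_g=[\tilde H^{-1}]_{g,:}\frac1nX_{\bar G}^T$ and $\frac1nX_g^T\bar WX_{\bar G}\tilde H^{-1}$ are functions of $\h(\w^*)$ (through $\bar\w$), and neither of your proposed remedies repairs this: conditioning on the event $\{\norm{\tilde H^{-1}}\leq 2U_2\}$ does not make $M_g$ independent of $\h(\w^*)$, and a uniform (covering) argument over operator-norm-bounded matrices is over a class parameterized by at least $q\bar k$ free coordinates, so its entropy injects an extra additive $q\bar k$ (up to logs) into the exponential tail and yields a rate like $\sqrt{q\bar k/n}$ rather than $\sqrt{(q+\log m)/n}$ once $\bar k$ grows. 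There is a second, smaller loss in step (iv): bounding the cross term by (operator norm of the cross-block) times $\norm{(\bar\w-\w^*)_{\bar G}}$ costs a factor $\sqrt{\bar k}$, since the full-support $\ell_2$ error is only $O_p(\sqrt{\bar k(q+\log m)/n})$. Both problems are fixable, and the fix is essentially what the paper does: replace $\tilde H^{-1}$ and $\bar W$ by their deterministic $W^*$-counterparts, apply the group-wise sub-Gaussian concentration (your step (i) machinery, per group plus union bound) only to the resulting \emph{deterministic} linear maps of $\h(\w^*)$ — this is where boundedness of $U_1,U_2,U_3$ enters — and bound the random perturbation crudely: e.g. $\norm{\tilde H-\frac1nX_{\bar G}^TW^*X_{\bar G}}\lesssim\norm{\bar\w-\w^*}$ by boundedness of $\x_i$, so the perturbation contribution is $O_p\bigl(\bar k(q+\log m)/n\bigr)$, which is $o_p\bigl(\sqrt{(q+\log m)/n}\bigr)$ precisely because $\bar k^2(q+\log m)=o(n)$ (this is the role of that assumption, mirroring the paper's remainder terms $R(\tilde\Delta)=O_p(\bar kr_n^2)$). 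As written, though, the dependence issue is a real hole in the argument, not a cosmetic one.
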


Combining Theorem~\ref{thm:main} (or Theorem~\ref{thm:main_giga}) and
Theorem~\ref{thm:logistic_oracle}, we establish the IGA (or GIGA) estimator's
statistical properties in
Theorem~\ref{thm:logistic} for logistic regression. The conclusion similar to that of sparse linear model still holds here,
which provides a refined estimation
convergence rate. In particular, if all the
relevant groups have relatively
large signals, the coefficient estimation error can be improved to
$O_p(k_*/n)$ and is consistent in group support recovery.

\begin{theorem} \label{thm:logistic}
Suppose conditions of Theorem~\ref{thm:main} (or Theorem~\ref{thm:main_giga}) and Theorem~\ref{thm:logistic_oracle} hold and  $\lambda$ is
bounded away from zero. Then IGA (or GIGA) estimator $\w^{(k)}$  under the logistic
regression setting has following estimation and group selection properties:
\noindent
\begin{itemize}[leftmargin=0cm,itemindent=.5cm,labelwidth=\itemindent,labelsep=0cm,align=left]
\item $\|\w^{(k)}-\w^*\|^2 = O_p \Bigl( {k_*+\Delta_{\text{IGA}}\log m\over n}\Bigr).$
\item There is a constant $\tilde C>0$ (not depending on $n$) such
  that if  $\min_{g\in \bar G}\norm{\w_g^*}\geq \tilde C
  \sqrt{\frac{q+\log(m)}{n}}$, then 
$P(G^{(k)}=\bar G)\rightarrow 1$ as $n\rightarrow \infty$.
\end{itemize}
\end{theorem}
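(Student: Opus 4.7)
The plan is to mirror the two-step argument behind Theorem~\ref{thm:linear}, substituting the logistic ingredients of Theorem~\ref{thm:logistic_oracle} for the linear-model ingredients of Theorem~\ref{probability_hypo_sp}. I would split $\|\w^{(k)}-\w^*\|^2 \leq 2\|\w^{(k)}-\bar\w\|^2 + 2\|\bar\w-\w^*\|^2$. The algorithmic piece $\|\w^{(k)}-\bar\w\|^2$ is controlled by Theorem~\ref{thm:main}, into which the gradient and oracle bounds from Theorem~\ref{thm:logistic_oracle} are plugged. The oracle piece $\|\bar\w-\w^*\|^2$ is controlled by a direct M-estimator argument on the fixed support $F_{\bar G}$.

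For the algorithmic piece, Theorem~\ref{thm:logistic_oracle} supplies $\|\nabla Q(\bar\w)\|_{G,\infty}^2 = O_p((q+\log m)/n)$. A separate step is to show $\bar\Delta \leq \Delta_{\text{IGA}}$ with probability tending to one: any $g\in\bar G$ counted in $\bar\Delta$ has $\|\bar\w_g\|$ below $\Omega(\sqrt{(q+\log m)/n})$, and the uniform bound $\max_{g\in\bar G}\|\bar\w_g - \w^*_g\| = O_p(\sqrt{(q+\log m)/n})$ from Theorem~\ref{thm:logistic_oracle} places $\|\w^*_g\|$ in the same regime, so $g$ is counted in $\Delta_{\text{IGA}}$ as well. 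Substituting into Theorem~\ref{thm:main} and using $q\Delta_{\text{IGA}} \leq q\bar k = k_*$ yields $\|\w^{(k)}-\bar\w\|^2 = O_p((k_* + \Delta_{\text{IGA}}\log m)/n)$. For the oracle piece, a Taylor expansion of $\nabla_{F_{\bar G}} Q$ around $\w^*$ gives $\bar\w - \w^* = -H^{-1}\nabla_{F_{\bar G}}Q(\w^*)$ for a mean-value Hessian $H$; the boundedness of $U_2$ together with the scaling $\bar k^2(q+\log m)=o(n)$ keep $H$ uniformly close to $X_{\bar G}^\top W^* X_{\bar G}/n$ and hence well-conditioned, while the score $\nabla_{F_{\bar G}}Q(\w^*)$ is an average of $n$ bounded, zero-mean, $k_*$-dimensional sub-Gaussian summands, so $\|\nabla_{F_{\bar G}}Q(\w^*)\|^2 = O_p(k_*/n)$. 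The triangle inequality then delivers the first claim.

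For the group support recovery claim, the beta-min condition $\min_{g\in\bar G}\|\w^*_g\| \geq \tilde C\sqrt{(q+\log m)/n}$ combined with the uniform per-group bound of Theorem~\ref{thm:logistic_oracle} forces $\min_{g\in\bar G}\|\bar\w_g\|$ above the $\bar\Delta$-threshold once $\tilde C$ is sufficiently large, so $\bar\Delta = 0$ with probability tending to one; it also makes each $\|\bar\w_g\|$ (for $g$ in the group support of $\w^*$) nonzero, so that $\bar G$ coincides with the group support of $\w^*$ on the same event. The fourth bullet of Theorem~\ref{thm:main} then collapses to $G^{(k)} = \bar G$, giving $P(G^{(k)} = \bar G) \to 1$.

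The main obstacle I anticipate is the sharp oracle rate $\|\bar\w - \w^*\|^2 = O_p(k_*/n)$. Theorem~\ref{thm:logistic_oracle} itself only provides a group-wise uniform rate of $O_p(\sqrt{(q+\log m)/n})$; squaring and summing naively over the $\bar k$ active groups gives the inferior $O_p((k_* + \bar k\log m)/n)$, which would replace $\Delta_{\text{IGA}}\log m$ by the larger $\bar k\log m$ and thereby destroy the refinement claimed in the theorem. One must instead execute the Hessian-inversion argument directly in $\ell_2$ on the full $k_*$-dimensional support; the scaling $\bar k^2(q+\log m)=o(n)$ in the hypothesis of Theorem~\ref{thm:logistic_oracle} is exactly what lets one replace the mean-value Hessian by $X_{\bar G}^\top W^* X_{\bar G}/n$ with negligible error throughout the relevant neighbourhood of $\w^*$, thereby yielding the desired $k_*/n$ rate.
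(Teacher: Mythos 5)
Your proposal is correct and follows essentially the same route as the paper, which proves this result by repeating the argument of Theorem~\ref{thm:linear} with Theorem~\ref{thm:logistic_oracle} in place of Theorem~\ref{probability_hypo_sp}: establish $\bar\Delta \leq \Delta_{\text{IGA}}$ via the groupwise bound on $\bar\w_g-\w^*_g$, plug the gradient bound into Theorem~\ref{thm:main}, add the oracle rate $\|\bar\w-\w^*\|^2=O_p(k_*/n)$, and use the beta-min condition to force $\bar\Delta=0$ for group recovery. Your explicit Hessian-inversion treatment of the $O_p(k_*/n)$ oracle piece fills in a step the paper only asserts, and correctly identifies why the groupwise rate from Theorem~\ref{thm:logistic_oracle} alone would not suffice.
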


\section{Simulation}\label{sec:exp}

In this section, we evaluate the performance of the proposed algorithms on
simulation data. As illustrated in the heuristic example of
Section~\ref{subsec:heuristics}, the forward selection and backward elimination scheme
in IGA naturally provides a group selection
path similar to the solution path of shrinkage-based methods like the
group Lasso; correspondingly, rather than directly setting the parameter $\delta$ to
determine the final model's group
sparsity level, we can equivalently generate a group selection path 
and choose the appropriate sparsity level $k$ from the
path to estimate the final model. Without setting constraints on model's
sparsity level, we used ten-fold CV to
automatically determine $k$ in the simulation.

Another parameter in IGA is the ``interactive'' parameter
$\lambda$, which potentially allows help from human expert opinions. As
illustrated in Section~\ref{subsec:heuristics}, if an expert provides a
priority list $\mathcal A_I$, and $\mathcal
A_I\cap\mathcal A_\lambda \neq \emptyset$ in the forward selection step, then
IGA adds the group in $A_I\cap\mathcal A_\lambda$ that gives the
smallest $Q(\w)$; if $\mathcal
A_I\cap\mathcal A_\lambda = \emptyset$ in the forward selection step,
IGA adds the group in $\mathcal A_\lambda$  that gives the
smallest $Q(\w)$; if there is
no expert opinion (or $\mathcal A_I=\emptyset$), simply set
$\lambda=1$. Recall from Table~\ref{table:comparison} that  $\bar k=\norm{\w^*}_{G,0}$ is the total number
of relevant groups in the true model. To mimic a more realistic scenario that expert
opinion may contain both correct and incorrect components,  $\mathcal A_I$ correctly contains
$\lfloor \frac{3}{5} \bar k \rfloor$ of relevant groups and incorrectly
contains an equal number of $\lfloor \frac{3}{5} \bar k \rfloor$
irrelevant groups throughout this simulation study. With $\mathcal A_I$,  we used ten-fold CV  to automatically
select $(k, \lambda)$, and $\lambda$'s candidate values were 
$\{0.2, 0.4, 0.6, 0.8, 1.0\}$. To differentiate methods based on whether IGA 
uses $\mathcal A_I$, we denote the IGA algorithm that selected
$\lambda$ with CV  as  ``IGA-$\lambda$'', and denote the IGA algorithm
with pre-specified 
$\lambda=1$ (that is, it ignored the interactive parameter and $\mathcal A_I$) as ``IGA''.
 
In light of our previous theoretical understandings, to gauge the proposal's
numerical performance, we considered FoBa and the group Lasso as the
representative benchmark methods and implemented them in MATLAB. Both methods
have their own tuning parameters: the sparsity-level parameter of
FoBa was tuned the same way as $k$ in IGA, but the known group structure
was ignored; we also implemented the group Lasso by the accelerated
proximal gradient descent \citep{beck2009fast} and used ``warm start'' to generate the
solution path (that corresponds to a decreasing sequence of shrinkage tuning
parameters; \citealp{friedman2010regularization}). Ten-fold CV was used to select tuning
parameters for these benchmark methods. 

Let $\hat\w$ be the estimator of an
algorithm and let  $\hat G$  be the set of nonzero groups in
$\hat\w$. To compare the coefficient estimation performance, we
considered the estimation error $\norm{\hat\w-\w^*}$. To evaluate the
group support recovery performance, we used the number of
correctly identified relevant groups $\abs{\hat G\cap \bar G}$ and the
number of incorrectly identified relevant groups $\abs{\hat
  G\backslash \bar G}$.

\begin{table}[!ht]
\caption{Averaged simulation results for Case 1 based on 100 runs.}
\begin{center}
\scalebox{0.9}{
\begin{tabular}{lcccccccccc}
\toprule 
& \multicolumn{5}{c}{$\beta=0.4$} & \multicolumn{5}{c}{$\beta=1$} \tabularnewline
\cmidrule(lr){2-6} \cmidrule(lr){7-11}
\multicolumn{1}{c}{$\bar k$} & 5 & 7 & 9 & 11 & 13 & 5 & 7 & 9 & 11 & 13 \tabularnewline
\midrule
 \multicolumn{2}{l}{$\norm{\hat\w-\w^*}$}   &  &  &  &  &  &  &  &  &  \tabularnewline
\cmidrule(lr){1-2}
FoBa & 2.01  & 2.39 & 2.75 & 3.06  & 3.36
  & 2.59  & 4.55  & 5.89  & 6.83 & 7.64\tabularnewline
&  (0.01)  & 
 (0.01) & (0.01) & (0.01)  &  (0.01)
  & (0.11)  & (0.08)  & (0.04)  & (0.04) & (0.04)\tabularnewline
group Lasso & 1.43  & 1.65  & 1.84 & 2.06 
  & 2.21  & 1.76  & 2.09  & 2.37  & 2.70
  & 2.96 \tabularnewline
 & (0.03)  & (0.02)  & (0.02) & (0.02)
  & (0.02)  & (0.03)  & (0.03)  & (0.03)  & (0.03)
  & (0.04)  \tabularnewline
IGA & 0.97 & 1.14  & 1.32 & 1.45  & 1.62
  & 1.15  & 1.30  & 1.42 & 1.55  & 1.66 \tabularnewline
 & (0.02)& (0.02)  & (0.03)  & (0.03)  & (0.03)
  & (0.01)  & (0.01)  & (0.01)  & (0.02)  & (0.02)  \tabularnewline
IGA-$\lambda$ & 0.92 & 1.11 & 1.25  & 1.39
  & 1.57 & 1.08  & 1.19  & 1.32  & 1.44
  & 1.60  \tabularnewline
 & (0.02)  & (0.02) & (0.02)  & (0.02)
  & (0.02)  & (0.01)  & (0.01)  & (0.01)  &  (0.02)
  & (0.02)  \tabularnewline
GIGA & 1.04 & 1.34 & 1.55  & 1.84
  & 2.05 & 0.99  & 1.14  & 1.29  & 1.38
  & 1.59  \tabularnewline
 & (0.02)  & (0.03) & (0.03)  & (0.04)
  & (0.04)  & (0.01)  & (0.01)  & (0.01)  &  (0.02)
  & { (}0.03)  \vspace{0.05in}\tabularnewline

\multicolumn{2}{l}{$\abs{\hat G\cap \bar G}$}   &  &  &  &  &  &  &  &  &  \tabularnewline
\cmidrule(lr){1-2}
FoBa & 3.59 & 4.72   & 5.61 & 6.64  & 7.27  & 4.99  & 6.77  & 8.51  &
                                                                      9.73  & 11.13  \tabularnewline
group Lasso & 4.39  & 6.43  & 8.50  & 10.28 & 12.25  & 5.00  & 7.00  &
  9.00 & 11.00  & 13.00 \tabularnewline
IGA & 4.77   & 6.68  & 8.59  & 10.45  & 12.26  & 5.00  & 7.00  & 9.00
  & 11.00  & 13.00 \tabularnewline
IGA-$\lambda$ & 4.89 & 6.83  & 7.56 & 10.70  & 12.48  & 5.00  & 7.00
  &9.00  & 11.00  & 13.00 \tabularnewline
GIGA & 4.53 & 5.90  & 8.79 & 8.69  & 10.13  & 5.00  & 7.00
  &9.00  & 11.00  & { 12.94} \vspace{0.05in}\tabularnewline

 \multicolumn{2}{l}{$\abs{\hat G\backslash \bar G}$}   &  &  &  &  &  &  &  &  &  \tabularnewline
\cmidrule(lr){1-2}
FoBa & 1.19  & 1.29  &1.38 & 2.16  & 1.93 & 5.34 & 4.26  & 3.77  & 13.86  & 4.78 \tabularnewline
group Lasso & 5.12  & 10.01  & 14.27  & 17.28  & 21.24  & 11.99  &
                                                                   21.36
  & 27.47  & 17.20  & 39.03  \tabularnewline
IGA & 0.70  & 0.66  & 0.72  & 0.67  & 0.83  & 1.97  & 1.99 & 1.92  &
                                                                     2.22  & 1.95  \tabularnewline
IGA-$\lambda$ & 0.79 & 0.98  & 0.82  & 0.78  & 0.82  & 2.00  & 2.04  &
                                                                    2.00
  & 4.42  & 1.98  \tabularnewline
GIGA & 0.98 & 1.07  & 1.15  & 1.27  & 1.11  & 1.99  & 2.00  &
                                                                    1.99
  & 1.96  & 1.85  \tabularnewline

\bottomrule
\end{tabular}
}
\end{center}
\label{tab:sim1}
\end{table}

\begin{figure}[!ht]
\vspace{-.1in}
{
\centering
\includegraphics[scale=.65]{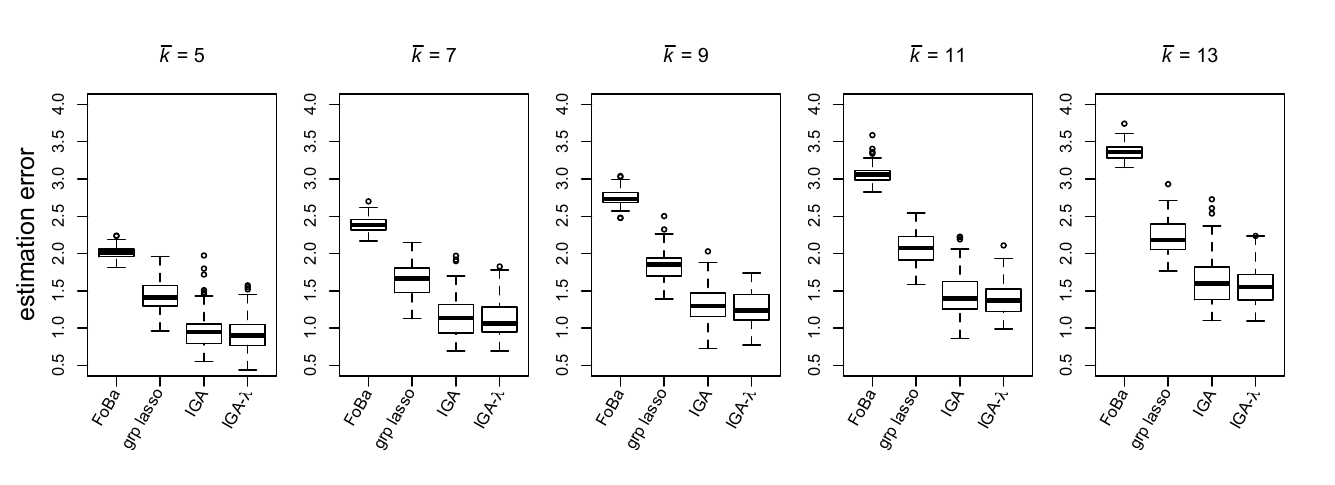}
\vspace{0in}
\includegraphics[scale=.65]{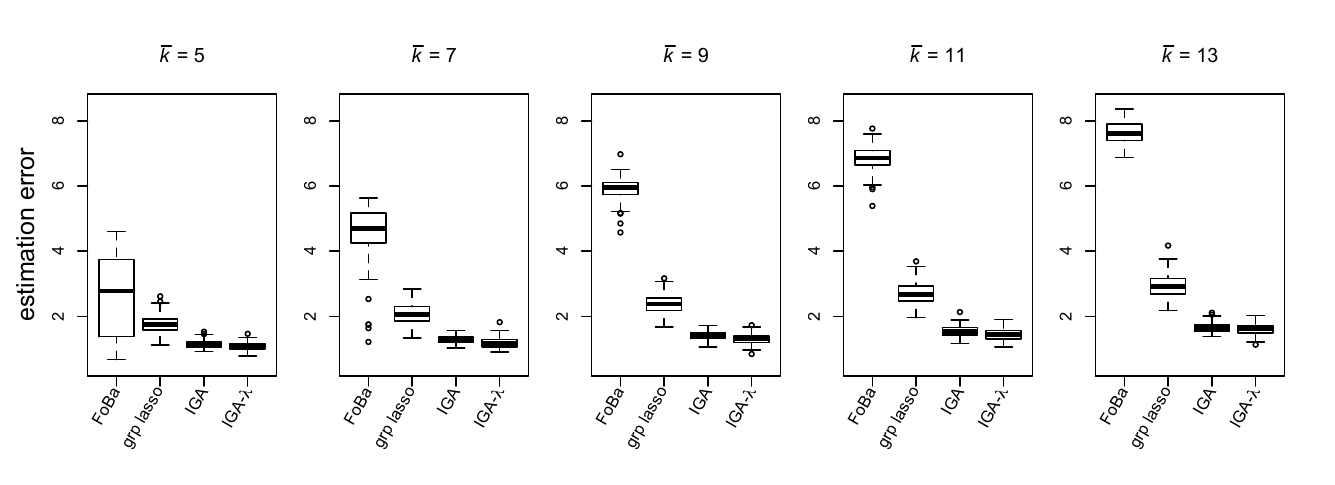}
\caption{Boxplots of estimation errors $\norm{\hat\w-
\w^*}$ for Case 1. Upper panel: $\beta=0.4$; Lower panel: $\beta=1$.} 
\label{fig:least_square}
}
\vspace{-.1in}
\end{figure}

\begin{table}[!ht]
\caption{Averaged simulation results for Case 2 based on 100 runs.}
\begin{center}
\scalebox{0.9}{
\begin{tabular}{lcccccccccc}
\toprule 
& \multicolumn{5}{c}{$\beta=0.4$} & \multicolumn{5}{c}{$\beta=1$} \tabularnewline
\cmidrule(lr){2-6} \cmidrule(lr){7-11}
\multicolumn{1}{c}{$\bar k$} & 5 & 7 & 9 & 11 & 13 & 5 & 7 & 9 & 11 & 13 \tabularnewline
\midrule
 \multicolumn{2}{l}{$\norm{\hat\w-\w^*}$}   &  &  &  &  &  &  &  &  &  \tabularnewline
\cmidrule(lr){1-2}
FoBa & 1.95  & 2.32 & 2.64 & 2.93  & 3.18
  & 4.43  & 5.46  & 6.35  & 7.09 & 7.80\tabularnewline
&  (0.01)  & (0.01) & (0.01) & (0.01)  &  (0.01)
  & (0.02)  & (0.02)  & (0.02)  & (0.02) & (0.02)\tabularnewline
group Lasso & 1.68  & 2.04  & 2.33 & 2.63 
  & 2.90  & 3.90  & 4.93  & 5.83  & 6.60
  & 7.36 \tabularnewline
 & (0.01)  & (0.01)  & (0.01) & (0.01) & (0.01)  & (0.02)  & (0.03)  & (0.02)  & (0.03)
  & (0.02)  \tabularnewline
IGA & 1.28 & 1.66  & 2.05 & 2.42  & 2.86
  & 2.23  & 2.99  & 3.82 & 4.60  & 5.51 \tabularnewline
 & (0.03)& (0.03)  & (0.03)  & (0.03)  & (0.03)
  & (0.02)  & (0.02)  & (0.03)  & (0.03)  & (0.05)  \tabularnewline
IGA-$\lambda$ & 1.16 & 1.50 & 1.79  & 2.12
  & 2.56 & 2.22  & 2.97  & 3.76  & 4.50
  & 5.25  \tabularnewline
 & (0.02)  & (0.03) & (0.03)  & (0.02)
  & (0.02)  & (0.01)  & (0.02)  & (0.02)  &  (0.03)
  & (0.03)  \tabularnewline
GIGA & 1.45 & 1.85 & 2.16  & 2.55
  & 2.87 & 2.28  & 3.32  & 4.35  & 5.42
  & 6.38  \tabularnewline
 & (0.03)  & (0.02) & (0.03)  & (0.02)
  & (0.02)  & (0.03)  & (0.05)  & (0.06)  &  (0.07)
  & (0.07)  \vspace{0.05in}\tabularnewline

\multicolumn{2}{l}{$\abs{\hat G\cap \bar G}$}   &  &  &  &  &  &  &  &  &  \tabularnewline
\cmidrule(lr){1-2}
FoBa & 2.25 & 2.74   & 2.57 & 2.38  & 2.68  & 4.16  & 4.69  & 4.60 &
                                                                     4.88  & 4.58  \tabularnewline
group Lasso & 4.00  & 5.55  & 7.07  & 8.44 & 9.44  & 4.97  & 6.83  &
  8.69 & 10.41  & 11.82 \tabularnewline
IGA & 3.78   & 4.90  & 5.46  & 5.29  & 4.47  & 5.00  & 6.96  & 8.68
  & 10.36  & 11.30 \tabularnewline
IGA-$\lambda$ & 4.35 & 5.82  & 7.03 & 8.07  & 7.81  & 5.00  & 6.99
  & 8.84  & 10.68  & 12.16  \tabularnewline
GIGA & 3.03 & 3.75  & 4.40 & 4.04  & 4.28  & 4.88  & 6.41
  & 7.45  & 7.89  & { 8.20}  \vspace{0.05in}\tabularnewline

 \multicolumn{2}{l}{$\abs{\hat G\backslash \bar G}$}   &  &  &  &  &  &  &  &  &  \tabularnewline
\cmidrule(lr){1-2}
FoBa & 0.50  & 0.52  & 0.49 &0.34  & 0.33 & 1.41 & 1.15  & 1.25  & 0.94  & 0.56 \tabularnewline
group Lasso & 7.52  & 9.88  & 13.07  & 14.25  & 14.67  & 18.68  &
                                                                    22.20
  & 23.94  & 25.37  & 23.59  \tabularnewline
IGA & 0.29  & 0.46  & 0.50  & 0.41  & 0.55  & 1.36  & 1.08 & 1.10  &
                                                                     1.21  & 1.60  \tabularnewline
IGA-$\lambda$ & 0.52 & 0.62  & 0.55  & 0.42  & 0.64  & 1.63  &1.41 &     1.40& 1.41  & 1.35  \tabularnewline
GIGA & 0.35 & 0.32  & 0.33  & 0.24  & 0.32  & 1.50  & 1.41 &     1.43 & 1.42  & 1.35  \tabularnewline
\bottomrule
\end{tabular}
}
\end{center}
\label{tab:sim2}
\end{table}
\begin{figure}[!ht]
\vspace{-.1in}
{
\centering
\includegraphics[scale=.65]{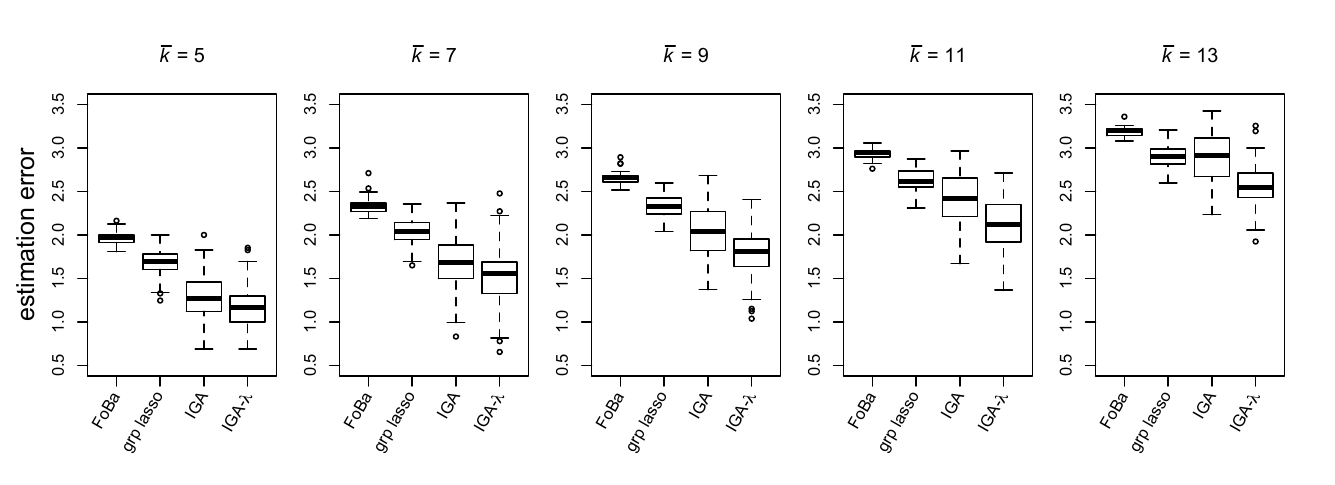}
\vspace{0in}
\includegraphics[scale=.65]{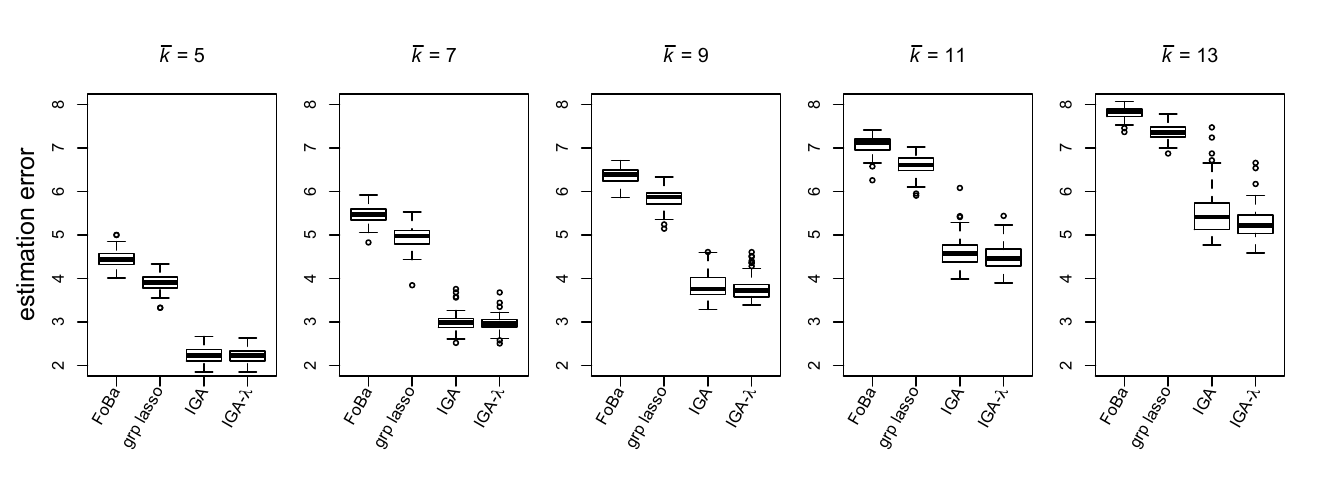}
\caption{Boxplots of estimation errors $\norm{\hat\w-
\w^*}$ for Case 2. Upper panel: $\beta=0.4$; Lower panel: $\beta=1$.} 
\label{fig:logistic}
}
\vspace{-.1in}
\end{figure}

We considered both sparse linear model (Case 1) and sparse logistic regression
(Case 2)
settings. In both cases, assume feature dimension  $p=1000$, which consists of $m=200$
non-overlapping groups with $q=5$ elements in each group. Suppose each
feature vector  $\x \sim N(\mathbf
0_p,\Sigma)$, where elements of $\Sigma$ have the exponential decay
structure $(\Sigma)_{ij}=\rho^{\abs{i-j}}$ with $\rho=0.5$ ($1\leq
i,j\leq p$). Given a true
group sparsity level $\bar k$, we define the set of relevant groups
$\bar G$ to be $\bar G=\{1,3,\cdots, 2\bar k-1\}$.  Then assume linear model
\begin{equation*}
\text{Case 1: }\quad  y=\sum_{g\in \bar G} \w_g^T \x_g+\varepsilon,
\end{equation*}
where each element in coefficient  $\w_g$ 
follows uniform distribution $U(-\beta,\beta)$, and random error  $\varepsilon$
follows $N(0,2)$.  For logistic regression, assume the link
function
\begin{equation*}
\text{Case 2: }\quad  \log\frac{\mu}{1+\mu}=\sum_{g\in \bar G} \w_g^T \x_g,
\end{equation*}
where $\mu=\P(y=1\given\x)=1-\P(y=-1\given\x)$, and the
coefficient  $\w_g$ is generated the same way as Case 1. We
considered different group sparsity levels  $\bar k = 5, 7, 9, 11, 13$
and signal strengths  $\beta=0.4, 1.0$ for both cases.
With sample size $n=300$, we repeated the experiment 100 times and
summarized the averaged results of Case 1 and Case 2 in
Table~\ref{tab:sim1} and Table~\ref{tab:sim2},
respectively (numbers in parenthesis are standard errors). In addition, we created side-by-side
boxplots of the estimation errors in Figure~\ref{fig:least_square} for Case 1
and Figure~\ref{fig:logistic} for Case 2. 

The results of Case 1 in Table~\ref{tab:sim1} and
Figure~\ref{fig:least_square} showed that IGA and IGA-$\lambda$
 performs very competitively compared to FoBa and the group Lasso in
coefficient estimation, which is not surprising given that
FoBa does not take advantage of the benefits of group sparsity, and
the group Lasso has more estimation bias for relatively large
coefficients. With help from the expert priority list,
IGA-$\lambda$ performed the best in this example. In group selection
performance, the group
Lasso showed the tendency to have larger number of incorrectly
selected groups
$\abs{\hat G\backslash \bar G}$ than that of IGA.  

The results of Case 2 in Table~\ref{tab:sim2} and
Figure~\ref{fig:logistic} also showed that IGA and IGA-$\lambda$  give largely satisfactory performance compared to
FoBa and the group Lasso. Interestingly,
by comparing the two different signal strength choices (upper panel
vs. lower panel in Figure~\ref{fig:logistic}), we  observed that
the relative difference between IGA (or IGA-$\lambda$) and the group
Lasso in 
estimation error  appears to widen as we increased the signal strength  $\beta$ from 0.4 to
1. This
observation matched our expectation that 
IGA can become more favorable to the group Lasso when there are more
feature groups with relatively strong signals (that is,  $\norm{\w_g^*}>\Omega{\left(\sqrt{
  {q+\log (m)\over n}}\right)}$).   In
addition, when $\beta=0.4$, besides improving the coefficient estimation, IGA-$\lambda$ selected more relevant groups than IGA in
$\abs{\hat G\cap\bar G}$ at a relatively
small expense of $\abs{\hat G\backslash \bar G}$. Like in Case 1, the group Lasso
selected more irrelevant groups than IGA and IGA-$\lambda$. The
widened estimation error difference between IGA (or IGA-$\lambda$) and
the group Lasso with increased signal strength were similarly observed in Case 1
under a sample size of  $n=200$. Its detailed numerical results are summarized in Table~\ref{tab:sim1_200}.

\begin{table}[!ht]
\caption{Averaged simulation results for Case 1 based on 100 runs ($n=200$).}
\begin{center}
\scalebox{0.9}{
\begin{tabular}{lcccccccccc}
\toprule 
& \multicolumn{5}{c}{$\beta=0.4$} & \multicolumn{5}{c}{$\beta=1$} \tabularnewline
\cmidrule(lr){2-6} \cmidrule(lr){7-11}
\multicolumn{1}{c}{$\bar k$} & 5 & 7 & 9 & 11 & 13 & 5 & 7 & 9 & 11 & 13 \tabularnewline
\midrule
 \multicolumn{2}{l}{$\norm{\hat\w-\w^*}$}   &  &  &  &  &  &  &  &  &  \tabularnewline
\cmidrule(lr){1-2}
FoBa & 2.06  & 2.50 & 2.80 & 3.13  & 3.41
  & 4.47  & 5.62  & 6.62  & 7.53 & 8.36\tabularnewline
&  (0.01)  & 
 (0.01) & (0.01) & (0.01)  &  (0.02)
  & (0.03)  & (0.03)  & (0.03)  & (0.02) & (0.03)\tabularnewline
group Lasso & 1.63  & 1.87  & 2.17 & 2.42 
  & 2.58  & 2.26  & 2.65  & 3.20  & 3.83
  & 4.30 \tabularnewline
 & (0.02)  & (0.02)  & (0.02) & (0.02)
  & (0.02)  & (0.04)  & (0.04)  & (0.04)  & (0.05)
  & (0.06)  \tabularnewline
IGA & 1.40 & 1.65  & 2.06 & 2.53  & 2.88
  & 1.46  & 1.65  & 1.81 & 1.95  & 2.19 \tabularnewline
 & (0.03)& (0.03)  & (0.03)  & (0.04)  & (0.04)
  & (0.02)  & (0.02)  & (0.02)  & (0.02)  & (0.03)  \tabularnewline
IGA-$\lambda$ & 1.29 & 1.51 & 1.71  & 2.20
  & 2.57 & 1.37  & 1.52  & 1.71 & 1.92  & 2.14  \tabularnewline
 & (0.03)  & (0.03) & (0.02)  & (0.04)
  & (0.03)  & (0.02)  & (0.02)  & (0.02)  &  (0.02)
  & (0.03)  \tabularnewline
GIGA & 1.52 & 1.81 & 2.22  & 2.55
  & 2.93 & 1.29  & 1.46  & 1.75  & 1.98
  & 2.76  \tabularnewline
 & (0.03)  & (0.03) & (0.03)  & (0.03)
  & (0.03)  & (0.02)  & (0.03)  & (0.05)  &  (0.05)
  & { (}0.10)  \vspace{0.05in}\tabularnewline

\multicolumn{2}{l}{$\abs{\hat G\cap \bar G}$}   &  &  &  &  &  &  &  &  &  \tabularnewline
\cmidrule(lr){1-2}
FoBa & 2.17 & 2.99   & 2.90 & 3.20  & 3.39  & 4.58  & 6.01  & 6.36  &
                                                                      6.80  & 6.99  \tabularnewline
group Lasso & 3.62  & 5.60  & 6.83  & 8.37 & 10.25  & 5.00  & 6.98  &
  9.00 & 10.93  & 12.89 \tabularnewline
IGA & 3.75   & 5.58  & 6.15  & 6.00  & 6.76  & 5.00  & 7.00  & 9.00
  & 11.00  & 12.99 \tabularnewline
IGA-$\lambda$ & 4.34 & 6.30  & 7.67 & 8.62  & 9.33  & 5.00  & 7.00
  &9.00  & 11.00  & 13.00  \tabularnewline
GIGA & 3.12 & 4.21  & 4.85 & 5.30  & 5.91  & 4.99  & 6.98
  &8.90  & 10.84  & { 1}2.19 \vspace{0.05in}\tabularnewline

 \multicolumn{2}{l}{$\abs{\hat G\backslash \bar G}$}   &  &  &  &  &  &  &  &  &  \tabularnewline
\cmidrule(lr){1-2}
FoBa & 0.61  & 0.96  & 0.63 & 0.81  & 1.93 & 1.40 & 2.28  & 1.86  & 1.73  & 1.60 \tabularnewline
group Lasso & 4.79  & 7.39  & 9.28  & 12.01  & 21.24  & 14.46  &
                                                                   20.75
  & 26.59  & 30.63  & 34.25  \tabularnewline
IGA & 0.44  & 0.52  & 0.52  & 0.60  & 0.83  & 1.84  & 1.87 & 1.75  &
                                                                     1.57  & 1.48  \tabularnewline
IGA-$\lambda$ & 0.71 & 0.77  & 0.80  & 0.85  & 0.82  & 1.93  & 1.95  &
                                                                       1.91
  & 1.81  & 1.63  \tabularnewline
GIGA & 0.74 & 0.57  & 0.80  & 0.61  & 0.83  & 1.80  & 2.00  &
                                                                    1.85
  & 1.51  & 1.06  \tabularnewline
\bottomrule
\end{tabular}
}
\end{center}
\label{tab:sim1_200}
\end{table}

\begin{table}[!ht]
\caption{Time (in seconds) to fit a simulated Case 1 data set.}
\begin{center}
\scalebox{0.9}{
\begin{tabular}{lcccc}
\toprule
  & $n=2000$ & $n=5000$ & $n=10000$ & $n=20000$\\
\midrule
group Lasso & 12.3 & 25.3 & 49.9 &  98.0\\
IGA & 6.1 & 18.5 & 40.9 &  91.4 \\
IGA-$\lambda$ & 17.6 & 62.2 & 142.2 &  321.3 \\
GIGA & 0.8 & 2.1 & 4.0 &  9.1 \\
\bottomrule
\end{tabular}
}
\end{center}
\label{table:timing}
\end{table}

For scalability, we also enlarged the sample size to $n=2000, 5000, 10000,
20000$ for
Case 1 while holding everything else constant with  $p=1000$ and  $\bar k=5$. We listed the times of one-run experiment in
Table~\ref{table:timing}, where each fold of CV was computed in a
{\it parallel} fashion (Intel Xeon W 3.0GHz CPU). The computational
times for IGA were comparable to those of the group
Lasso when both were implemented in MATLAB on the same machine (we
omit FoBa due to the known grouping structure). As
expected, the computational burden was higher for  IGA-$\lambda$ than
IGA because the CV stage of
IGA-$\lambda$ involves five candidate values of
$\lambda$ (as opposed to $\lambda=1$ in IGA) and
one more group selection path with full data. 

In addition, as described in Section~\ref{subsec:giga}, we proposed the gradient-based
IGA algorithm (or GIGA for brevity) to facilitate the computational efficiency for the
forward steps. To evaluate the GIGA algorithm, we repeated the
simulation experiment before for Case 1 and Case 2. The estimation
errors summarized in Tables~\ref{tab:sim1}--\ref{tab:sim1_200} showed
that GIGA still performed competitively compared to the benchmark
methods like FoBa and the group Lasso. More importantly, since GIGA
avoids repeatedly performing optimization of the criterion functions
in the forward step and only requires computation of the gradients
instead, the averaged times given in Table~\ref{tab:giga_speedup}
indeed showed that GIGA significantly reduced the computation
time compared to IGA. We also applied GIGA in the
scalability study and observed from Table~\ref{table:timing} that times of GIGA were only
about 1/10 of IGA. The numerical experience above confirms that the proposed
GIGA algorithm can be a promising variant of the plain vanilla IGA
when computational efficiency is a practical concern.

\begin{table}[!ht]
\caption{Averaged time (in seconds) of GIGA vs. IGA to fit a simulated
  data set with $n=300$.}
\begin{center}
\scalebox{0.9}{
\begin{tabular}{clcccccccccc}
\toprule 
& & \multicolumn{5}{c}{$\beta=0.4$} & \multicolumn{5}{c}{$\beta=1$} \tabularnewline
\cmidrule(lr){3-7} \cmidrule(lr){8-12}
Case & \multicolumn{1}{c}{$\bar k$} & 5 & 7 & 9 & 11 & 13 & 5 & 7 & 9 & 11 & 13 \tabularnewline
\midrule
 1 & IGA & 3.5   & 3.4  & 5.5  & 3.7  & 2.5  & 2.3  & 2.2  & 2.4
  & 2.2  & 1.8 \tabularnewline
 & GIGA & 0.4 & 0.3  & 0.3 & 0.3  & 0.3  & 0.3  & 0.2
  & 0.2  & 0.2  & { 0.2}  \vspace{0.05in}\tabularnewline

2 & IGA & 34.4  & 32.8  & 33.2  & 33.0  & 33.4  & 26.9  & 26.8 & 25.4  &
                                                                     25.3  & 26.7  \tabularnewline
 & GIGA & 1.9 & 1.8  & 1.8  & 1.8  & 1.8  & 3.5  & 3.3 &     3.4 & 3.3  & 3.3  \tabularnewline
\bottomrule
\end{tabular}
}
\end{center}
\label{tab:giga_speedup}
\end{table}

\section{Sensor Selection for Human Activity Recognition}\label{sec:app}
One important industrial application of group sparsity learning 
is in sensor selection problems. We are particularly interested in
the application for recognizing human  activity at home,
and the aim is to reduce the number of deployed sensors without 
significant accuracy reduction in activity recognition. 
In the real experiment, we deployed 40 sensors with 14 considered
activity categories. We used pyroelectric infrared sensors, which returned binary signals in reaction to human motion.
Figure~\ref{fig:sensor_location} shows our experimental room layout and
sensor locations \citep{liu2013forward}. The number stands for sensor ID and the circle approximately represents 
the area covered by the sensor.
As can be seen, 40 sensors have been deployed, which returned 40-dimensional 
binary time series data. There were 14 pre-determined human activity categories. The numbers of training samples
and testing samples were roughly the same with the approximate sizes
of 270K each. The labels of testing data
were blind to the data analysts, and we were only allowed to
submit the prediction results to the internal server owned by NEC Corporation to query the
prediction accuracy, without direct access to testing samples. Detailed
information on activity categories and sample size is summarized in
Table~\ref{tab:activities}.

\begin{figure}[htp!]
\centering
\includegraphics[width=0.5\linewidth]{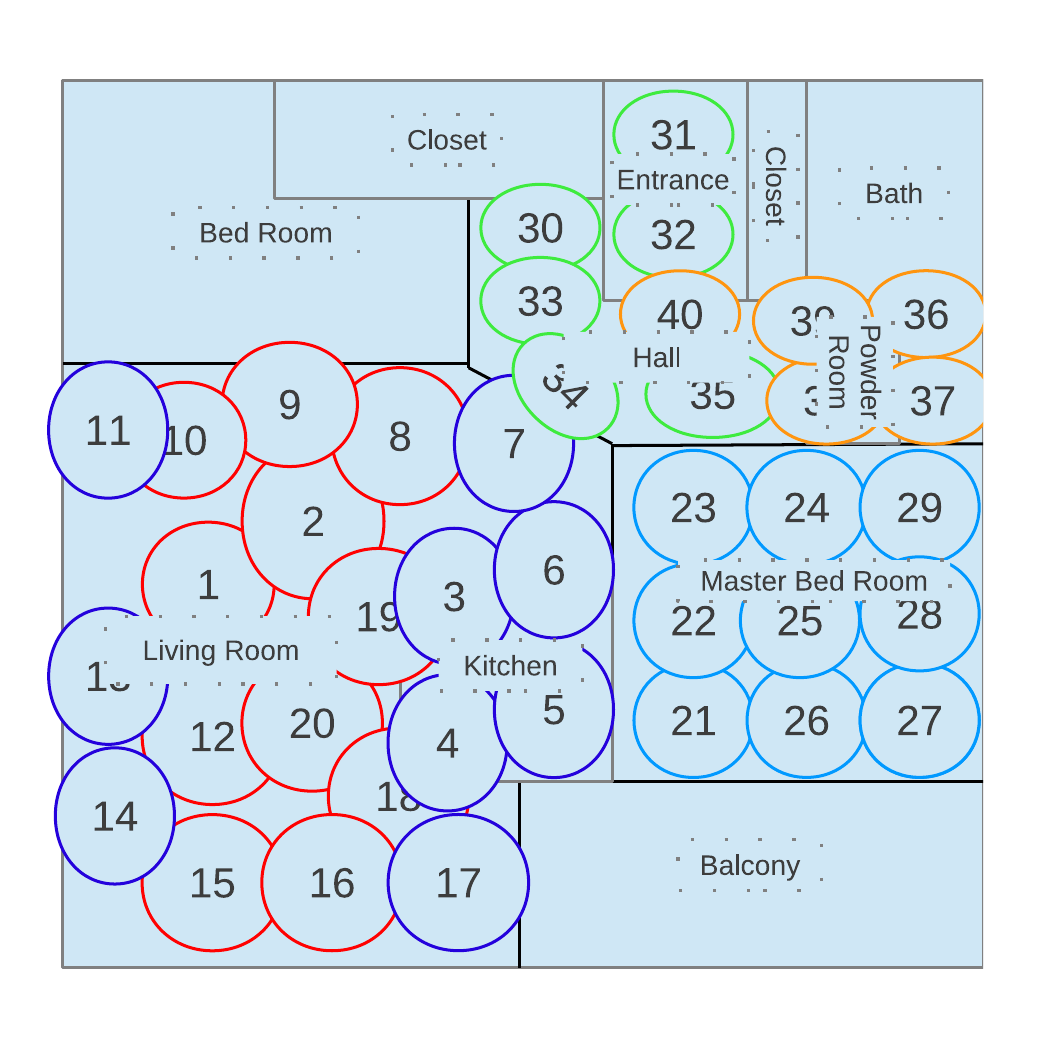}
\caption{Room layout and sensor locations.}
\label{fig:sensor_location}
\end{figure}

\begin{table}[htp!]
\caption{Activities in the sensor data set.}
\label{tab:activities}
\begin{center}
\scalebox{0.9}{
\begin{tabular}{c c c}
\toprule
 ID& Activity & train / test samples\\
\midrule
1& Sleeping & 81K / 87K \\
2& Out of Home & 66K / 42K \\
3& Using Computer& 64K 46K \\
4& Relaxing & 25K / 65K \\
5& Eating & 6.4K / 6.0K \\
6& Cooking & 5.2K / 4.6K \\
7& Showering (Bathing) & 3.9K / 45.0K \\
8& No Event & 3.4K / 3.5K \\
9& Using Toilet & 2.5K / 2.6K \\
10& Hygiene (brushing teeth, etc)& 1.6K / 1.6K \\
11& Dishwashing & 1.5K / 1.8K \\
12& Beverage Preparation & 1.4K / 1.4K \\
13& Bath Cleaning / Preparation & 0.5K / 0.3K \\
14& Others & 6.5K / 2.1K \\
\midrule
Total & - & 270K /270K \\
\bottomrule
\end{tabular}
}
\end{center}
\end{table}

Two types of features were created: activity-signal features~($40\times 14 = 560$) and activity-activity features ($14\times 14 = 196$). Given
the aim of sensor number reduction, enforcement of sparseness to
individual features can be rather inefficient. Accordingly, we created
40 groups on activity-signal features and each group contained features
related to one sensor. Different from the standard independent settings
of \eqref{eq:groupl0}, we considered here a more general form of the criterion
function $Q(\w)$ by using the negative log-likelihood of a linear-chain conditional random
 fields model (CRFs, \citealp{lafferty2001conditional}).

Specifically, given sample size $N$, let
$\y=(y_1,\cdots, y_N)^T$ be the sequence of action labels with
$y_t\in\{1,\cdots,14\}$, and let  $Z=(\z_1,\cdots,\z_N)^T$ be the
sequence of sensor binary signals with
$\z_t=(z_{t,1},\cdots,z_{t,40})^T\in\{0,1\}^{40}$. Then define
the activity-signal features $f_{i,j}(y_t,\z_t) = I(y_t=j,
z_{t,i}=1)$ and the associated sensor coefficients  $v_{i,j}$ for
$1\leq i \leq 40$, $1\leq j\leq 14$ and  $1\leq t\leq N$;
define the activity-activity features $g_{i,j}(y_t,y_{t+1})=I(y_t=i,
y_{t+1}=j)$ and the associated activity transition coefficients $u_{i,j}$ for $1\leq
i,j\leq 14$ and $1\leq t\leq N$. For notation
convenience, set $y_0=0$. The coefficient vector
$\mathbf v_i=(v_{i,1},\cdots, v_{i,14})^T$ then corresponds to the group of sensor
$i$ $(1\leq i\leq 40)$. With the targeted goal of sensor selection, we
can impose group sparsity on the coefficient
$\mathbf v=(\mathbf v_1^T,\cdots,\mathbf v_{40}^T)^T$ since it has a clear grouping structure
by construction with 40 groups, while the transition coefficient $\mathbf
u=(u_{1,1},u_{1,2},\cdots, u_{14,13}, u_{14,14})$ is assumed to be
dense. Accordingly, the complete coefficient vector is  $\w=(\mathbf v^T,\u^T)^T$,
and the group selection should focus on the components in $\mathbf v$
while allowing elements in $\u$ to be nonezero. Then motivated by
Hidden Markov models, the linear-chain CRF objective considers the conditional probability 
\begin{align}
p_{\w}(\y \given Z) &= {\frac{1}{q_\w(Z)}} \exp\Bigl\{\sum_{t=1}^{N}
  \sum_{i=1}^{40} \sum_{j=1}^{14} v_{i,j} f_{i,j}(y_t, \z_t)
  +\sum_{t=1}^{N} \sum_{i=1}^{14}\sum_{j=1}^{14} u_{i,j} g_{i,j}(y_{t-1},
  y_{t}) \Bigr\}\notag\\
& =: \frac{1}{q_\w(Z)} \exp\Bigl\{\sum_{t=1}^N \bigl( \mathbf
  v^T\mathbf f (y_t,\z_t) + \u^T\mathbf g(y_{t-1}, y_t) \bigr) \Bigr\}\notag\\
& =: \frac{1}{q_\w(Z)}\tilde p_{\w}(\y \given Z), \label{eq:condprob}
\end{align}
where  $q_\w(Z)=\sum_{\tilde\y}\tilde p_{\w}(\tilde\y \given Z)$ is the
normalization factor, and  $\mathbf f$ and $\mathbf g$ are vector-valued functions
consisting of $(f_{1,1},f_{1,2},\cdots, f_{40,13}, f_{40,14})^T$ and
$(g_{1,1},g_{1,2},\cdots, g_{14,13}, g_{14,14})^T$, respectively. The resulting negative log-likelihood
criterion function is  $Q(\w) = -\log p_{\w}(\y\given
Z)$, which can be shown to be both smooth and convex. Given parameter
$\w$, the criterion function's gradient and
the normalization factor (and thus $Q(\w)$ itself) can be computed
efficiently with a polynomial time of $N$ due to the chain-structured
graph and the associated recursive computing scheme
\citep[Chapter 4.1]{sutton2012introduction}; the
inference for testing data $\tilde Z$ can be similarly obtained by maximizing
$p_{\w}(\tilde\y | \tilde Z)$ with respective to  $\tilde\y$. In
addition, optimization of the criterion function with respect to  $\w$
can be achieved by the generic maximum likelihood estimation
\citep[Chapter 5.1]{sutton2012introduction}. With the techniques above, for group/sensor
selection purposes, the IGA algorithm can be naturally
applied to the criterion function from the negative
log-likelihood of \eqref{eq:condprob}. In the following experiment,
our IGA approach for sensor selection adopted the
gradient-based variant as discussed in Section~\ref{subsec:giga} and 
performed the forward-backward selection for groups in $\mathbf v$.

We also compared IGA with two other benchmark sensor selection methods: gradient
FoBa~(FoBa) and  Group L1 CRF (GroupLasso). For all three methods, we directly set the
numbers for selected sensors (each sensor corresponds to one group of
features), as suggested in \citet{zhang2011adaptive} and
\citet{liu2013forward}. In our specific problem, it is considered
informative and convenient to know the classification performance
given a specified number of sensors with the practical goal of reducing the number of required sensors (even at a small expense of accuracy). The overall classification error rates on testing samples with varying
number of sensors are summarized in
Figure~\ref{fig:result_activities1}(a). We discovered the following. 
\begin{itemize}
\item
When the number of sensors was relatively small (5-9), IGA
outperformed the other methods. With the sufficient
number of sensors, FoBa also performed competitively. We observed big
accuracy improvement for FoBa around 10-11 sensors. The lists of selected
sensors in Table~\ref{tab:selected_sensors}, in together with the corresponding
classification results in Figures~\ref{fig:result_activities1}(a),
seemed to suggest that   
the features related to Sensor 28 were important (for a justification,
see also discussion below on individual activities). The IGA
successfully chose this sensor in  early iterations while FoBa
used longer iterations.
\item
IGA required only 7-8 sensors to achieve nearly best performance
though FoBa required 11-12 sensors. Therefore, we may reduce 4-5 sensors by using IGA.
GroupLasso gradually reduced the classification error with increasing
number of sensors, but its classification performance was not ideal  in this study when compared to
the considered greedy methods.
\end{itemize}

\begin{figure}[ht!]
\centering
\subfigure[]{
\includegraphics[width=0.6\linewidth]{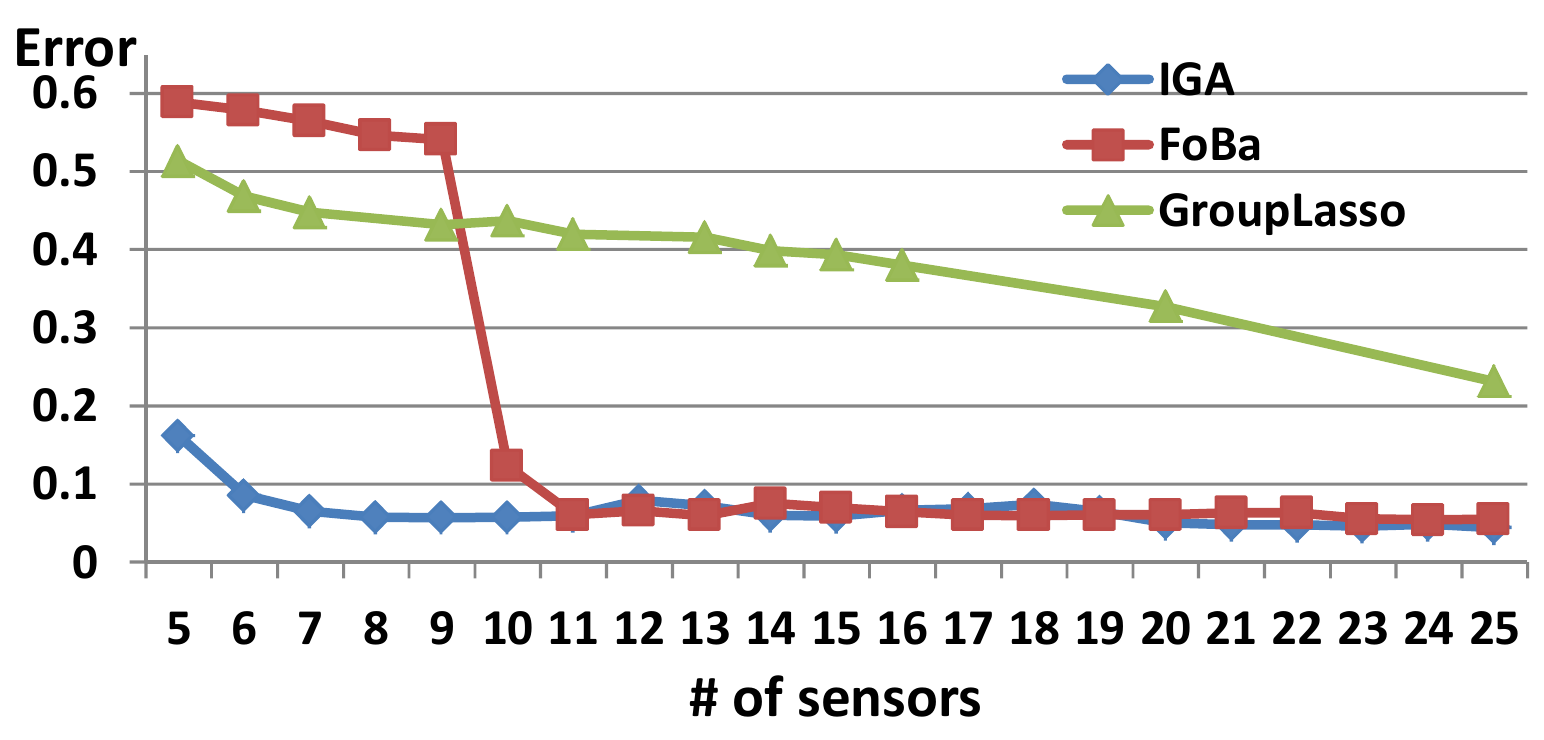}
}
\subfigure[]{
\includegraphics[width=0.45\linewidth]{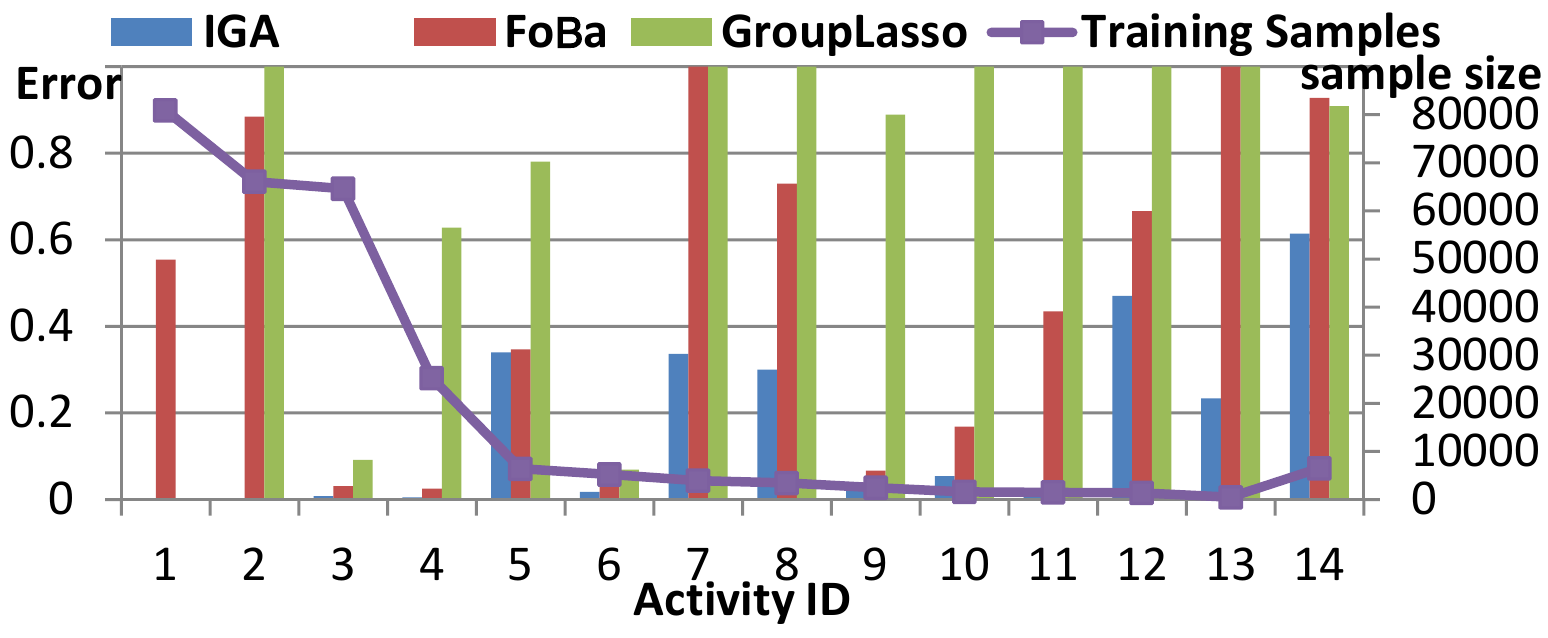}
}
\subfigure[]{
\includegraphics[width=0.45\linewidth]{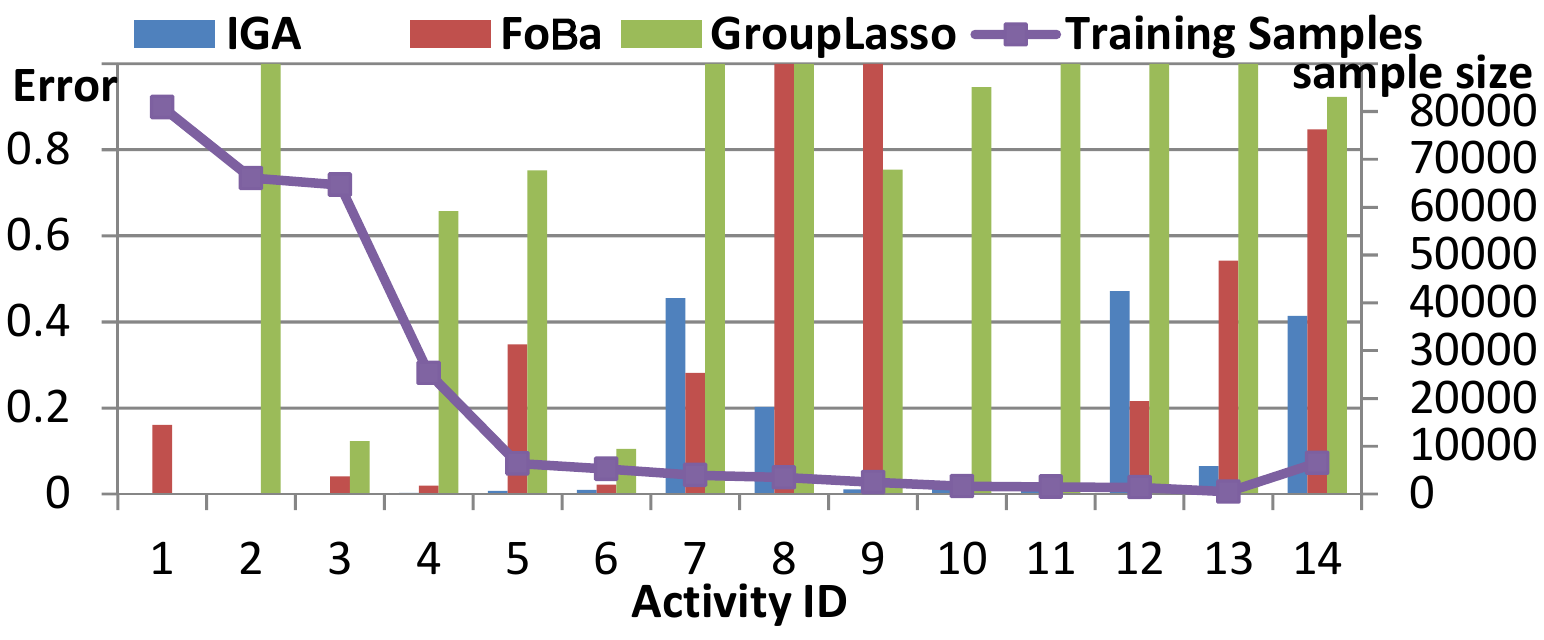}
}
\caption{Comparison among FoBa, GroupLasso, and IGA. (a) 
  Classification errors given the number of selected sensors for
  testing samples (horizontal axis represents number of selected
  sensors); (b) Activity-wise testing classification errors
  with 7 sensors (horizontal axis represents activity category ID; solid line is training sample size); (c) Activity-wise testing  classification errors with 10
  sensors (horizontal axis represents activity category ID; solid line is training sample size).} 
\label{fig:result_activities1}
\end{figure}

\begin{table}[htp!]
\caption{Selected sensors.}
\label{tab:selected_sensors}
\begin{center}
\begin{tabular}{c c c}
\toprule
Method & 7 sensors & 10 sensors \\
\midrule
IGA & $\{5,8,10,19,28,39,40\}$ & $\{7 \,\text{sensors}\} \cup \{1,31,35\}$ \\
FoBa &  $\{1,5,9,10,19,39,40\}$ & $\{7 \,\text{sensors}\} \cup \{8,28,35\}$   \\
GroupLasso & $\{1,5,9,10,19,39,40\}$ & $\{7 \,\text{sensors}\} \cup \{2,36,37\}$ \\
\bottomrule
\end{tabular}
\end{center}
\end{table}

The classification errors for individual activities with 7
sensors and 10 sensors are shown in Figure~\ref{fig:result_activities1}(b) and Figure~\ref{fig:result_activities1}(c), respectively.
We can see that for most of the activity categories, with either 7 or 10 sensors,
IGA performed generally better than the other two alternatives.
Interestingly, recognition errors of FoBa for the activities, 
{\it Sleeping}~(Activity ID=1) and {\it Out of Home}~(Activity ID=2), were 
quite poor with 7 sensors, but the errors were drastically reduced with 10 sensors. 
Since both the {\it Sleeping} and  {\it Out of Home}  activities have ``quiet'' movements and generate little sensor signals, 
they are expected to be difficult to distinguish from each other. Our
experiment results in Table~\ref{tab:selected_sensors} showed that FoBa added Sensor~28 at ``the number of sensors = 10'' and 
its recognition error dramatically improved.
This observation made practical sense as Sensor~28 was located near
the bed (shown in Figure~\ref{fig:sensor_location}) and therefore played a key role to distinguish these two 
activities. This reaffirms our discussion above that Sensor~28 could be an
important sensor in practice; with noisy signals from the ``quiet'' movements,  IGA 
discovered this sensor at earlier steps than FoBa. This experiment
showed the promising use of IGA in selecting a small number of sensors
while maintaining competitive activity recognition accuracy.

\section{Conclusion}

In this paper, we propose a new interactive greedy algorithm designed
to exploit known grouping structure for improved estimation and
feature group recovery performance. Motivated under a general setting
applicable to a broad range of applications, we provide the
theoretical and empirical investigations on the proposed algorithm. This study establishes explicit estimation error bounds
and group selection consistency results in high-dimensional linear
model and logistic regression, and supports that this new algorithm
can be a promising and flexible group sparsity learning tool in
practice. In the future, it is of interest to conduct further study
under the general setting and give explicit performance guarantee for
a broader class of GLM models beyond linear and logistic
regression. It is also promising to study the performance of our
approach under challenging settings including sequential decision making
(e.g., \citealp{qian2016kernel}) and non-smooth objective function
(e.g., \citealp{gu2018admm}) problems.


\section*{Supplementary Materials}

{
\begin{description}
\item[Supplement to ``An Interactive Greedy Approach to Group Sparsity
  in High Dimension''] \mbox{}\\
  We provide the proofs of Theorems~\ref{thm:main}
and \ref{thm:main_giga} in Supplement~\ref{subsec:proof1}. The proofs for the sparse
linear model and the sparse logistic regression are given in
Supplement~\ref{subsec:proof2} and Supplement~\ref{subsec:proof3},
respectively. The useful lemmas for intermediate steps of these proofs are relegated to Supplement~\ref{sec:suppthm}. (supplement.pdf)
\item[MATLAB package] The MATLAB package demonstrates the
  implementation and use of the IGA and GIGA algorithms for both sparse linear
  model and sparse logistic regression. It is available at \url{https://github.com/weiqian1/IGA}. 
\end{description}
}

 \section*{Acknowledgement}

{
 We sincerely thank the Editor, the Associate Editor and two anonymous
 reviewers for their valuable and insightful comments that helped to
 improve this manuscript significantly. 
 Ji Liu's research is partially supported by NSF CCF-1718513, IBM faculty award, and NEC fellowship.
}

\bibliographystyle{agsm}
{\footnotesize
\bibliography{fobacand}
}

\newpage
\renewcommand{\theequation}{A.\arabic{equation}}
\setcounter{equation}{0}  
  

\appendix
\begin{center}
{\Large Supplement to ``An Interactive Greedy Approach to Group Sparsity in High Dimension''}
\end{center}

In this Supplement, we provide the proofs of Theorems~\ref{thm:main}
and \ref{thm:main_giga} in Supplement~\ref{subsec:proof1}, and give the proofs for the sparse
linear model and the sparse logistic regressions in
Supplement~\ref{subsec:proof2} and Supplement~\ref{subsec:proof3},
respectively. We leave useful lemmas to Supplement~\ref{sec:suppthm}.


\section{Proofs of Theorem~\ref{thm:main} and Theorem~\ref{thm:main_giga}}
\label{subsec:proof1}

Theorem~\ref{thm:main} follows directly from statements of
Theorem~\ref{estigood} and Theorem~\ref{thm:main2} in the following.

\begin{theorem}\label{estigood}
Suppose Assumption~\ref{ass:s} holds and let $\delta$  satisfy
$\delta>\frac{4\varphi_{+}(1)}{\lambda\varphi_{-}^{2}(s)}\|\nabla Q(\bar{\w})\|_{G,\infty}^{2}$.
Suppose IGA stops at $k$, and the supporting group is $G^{(k)}$ with
supporting features as $F^{(k)}=F_{G^{(k)}}$.  Then the estimator has the following properties:
\begin{align}
\|\bar{\w}-\w^{(k)}\|  &\leqslant \frac{4\sqrt{ \varphi_{+}(1)\delta}}{\varphi_{-}(s)}\sqrt{|\bar{\Delta}|}
\\
Q(\w^{(k)})-Q(\bar{\w})&\leqslant \frac{2 \varphi_{+}(1)\delta}{\varphi_{-}(s)}|\bar{\Delta}| 
\\
|\bar{G} \setminus G^{(k)} | &\leqslant    2|\bar{\Delta}| 
\\
| G^{(k)} \setminus \bar{G} | &\leqslant    \frac{16{\varphi}^{2}_{+}(1)|\bar{\Delta}|}{\lambda {\varphi}^{2}_{-}(s)}
\end{align}
where $\bar{\Delta}=\{g\in \bar{G} :\| \bar{\w}_{g}\|<\gamma\}$ and 
$\gamma=\sqrt{\frac{16\varphi_{+}(1)\delta}{\varphi_{-}^{2}(s)}}$.
\end{theorem}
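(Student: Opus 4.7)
The plan is to follow a standard forward-backward greedy analysis in the spirit of Zhang (2011), adapted to the group setting and to the interactive discount factor $\lambda$. The proof proceeds in four stages and ultimately hinges on a careful accounting of how forward gains and backward losses interact.

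First, I would extract a gradient bound at termination. When IGA stops, no group $g\notin G^{(k)}$ can drive $Q$ down by $\delta$. Applying the restricted Lipschitz-smoothness bound $\varphi_+(1)$ to the single-group step $\w\mapsto\w^{(k)}+\mathbf{E}_g\boldsymbol\alpha$ and minimizing in $\boldsymbol\alpha$ gives
\begin{equation*}
Q(\w^{(k)})-\min_{\boldsymbol\alpha}Q(\w^{(k)}+\mathbf{E}_g\boldsymbol\alpha)\;\geq\;\frac{\|\nabla_g Q(\w^{(k)})\|^{2}}{2\varphi_+(1)}.
\end{equation*}
Combined with the termination condition, this yields $\|\nabla_g Q(\w^{(k)})\|\leq\sqrt{2\varphi_+(1)\delta}$ for every $g\notin G^{(k)}$; since $\w^{(k)}$ minimizes $Q$ over support contained in $F_{G^{(k)}}$, the group-gradient vanishes on $G^{(k)}$, so $\|\nabla Q(\w^{(k)})\|_{G,\infty}\leq\sqrt{2\varphi_+(1)\delta}$.

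Second, I would convert this gradient bound into the $\ell_2$-distance and objective bounds via restricted strong convexity on $\tilde G:=G^{(k)}\cup\bar G$. Once it is verified that $|\tilde G|\leq s$, the $\varphi_-(s)$ inequality applied at $(\w^{(k)},\bar\w)$ produces a quadratic in $\|\bar\w-\w^{(k)}\|$ whose linear term is bounded by $\|\nabla Q(\w^{(k)})\|_{G,\infty}\cdot\sum_{g\in\bar G\setminus G^{(k)}}\|\bar\w_g\|$. The inner product collapses to this restricted sum because $\nabla_{G^{(k)}}Q(\w^{(k)})=0$ and $\bar\w_g=0$ for $g\notin\bar G$. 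A Cauchy--Schwarz bound on the sum, together with the fact (established in the third stage) that every missed relevant group lies in $\bar\Delta$ and therefore satisfies $\|\bar\w_g\|<\gamma$, yields the advertised $\sqrt{|\bar\Delta|}$ factor in the distance bound and the $|\bar\Delta|$ factor in the objective bound.

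Third, I would bound the two group-selection errors. For the ``missed'' count $|\bar G\setminus G^{(k)}|$, the idea is that a relevant group with $\|\bar\w_g\|\geq\gamma$ would trigger a forward gain of at least $\delta$ (via the single-group restricted strong convexity together with the gradient bound of stage one), so the algorithm could not have stopped while any such group was absent; combined with the $\lambda$-discount rule on the candidate set $\mathcal{A}_\lambda$, this yields $|\bar G\setminus G^{(k)}|\leq 2|\bar\Delta|$. For the ``spurious'' count $|G^{(k)}\setminus\bar G|$, I would use a global telescoping identity: each forward step contributes at least $\delta$ to the total drop $Q(0)-Q(\w^{(k)})$, each backward step subtracts at most half the associated forward gain by the backward threshold $\delta^{(k)}/2$, and the net drop is upper-bounded by $Q(0)-Q(\bar\w)+(Q(\bar\w)-Q(\w^{(k)}))^{+}$, which is in turn controllable via restricted smoothness and the relation between $\delta$ and $\|\nabla Q(\bar\w)\|_{G,\infty}^{2}$. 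Dividing through by $\delta$ and tracking constants produces the claimed $16\varphi_+^{2}(1)|\bar\Delta|/(\lambda\varphi_-^{2}(s))$ bound.

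Finally, a self-consistency check closes the loop: the two bounds above, combined with $\bar k$ and Assumption~\ref{ass:s}, imply $|\tilde G|\leq s$, retroactively validating the use of $\varphi_-(s)$ in stage two. The main obstacle is precisely this interleaving: the distance bound depends on the group-selection bounds, while the forward-gain lower bound used to control missed groups relies on the gradient control and a weak version of the distance bound. Disentangling this circularity via an induction on iteration count, while keeping the constants sharp enough that $\lambda^{-1}$ and $(\varphi_+/\varphi_-)^{2}$ emerge cleanly, is the technically most delicate part of the proof.
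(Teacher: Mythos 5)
Your first stage is exactly right (it is the paper's Lemma~\ref{lemma2}: termination plus restricted smoothness gives $\|\nabla Q(\w^{(k)})\|_{G,\infty}\leq\sqrt{2\varphi_+(1)\delta}$), but two of your later steps do not go through. First, you claim that every missed relevant group lies in $\bar\Delta$, i.e.\ $\bar G\setminus G^{(k)}\subseteq\bar\Delta$, arguing that a missed group with $\|\bar\w_g\|\geq\gamma$ would force a forward gain of at least $\delta$. That per-group implication requires a lower bound on $\|\nabla_g Q(\w^{(k)})\|$ in terms of $\|\bar\w_g\|$ alone, which does not follow from $\varphi_-(s)$ with the constant-level condition number allowed by Assumption~\ref{ass:s}: the gradient of a strong missed group at $\w^{(k)}$ can be partially cancelled by correlated error on the other missed groups, so only an \emph{aggregate} statement is available. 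The paper's route is precisely that aggregate statement: strong convexity plus the stage-one gradient bound give $\|\bar\w_{\bar G\setminus G^{(k)}}\|^2\leq\frac{8\varphi_+(1)\delta}{\varphi_-^2(s)}|\bar G\setminus G^{(k)}|$, and a Markov/Chebyshev counting argument with $\gamma^2=\frac{16\varphi_+(1)\delta}{\varphi_-^2(s)}$ then yields only $|\bar G\setminus G^{(k)}|\leq 2|\bar\Delta|$ — the factor $2$, not set inclusion — and it is exactly this factor $2$ that produces the constants $4$ and $2$ in the first two bounds. Your stated chain (inclusion first, then the $\sqrt{|\bar\Delta|}$ distance bound) also inverts the correct, non-circular order: the distance bound in terms of $|\bar G\setminus G^{(k)}|$ comes first, the counting bound is derived from it, and then substituted back; no induction on iterations is needed.

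Second, your telescoping argument for $|G^{(k)}\setminus\bar G|$ bounds the total number of forward steps by roughly $(Q(\mathbf 0)-Q(\bar\w))/(\lambda\delta)$, a quantity that depends on the initialization and is not proportional to $|\bar\Delta|$; it cannot produce $\frac{16\varphi_+^2(1)|\bar\Delta|}{\lambda\varphi_-^2(s)}$. The missing idea is the backward-step certificate at termination (Lemma~\ref{backward}): every retained group satisfies $Q(\w^{(k)}-\mathbf{E}_g\w_g^{(k)})-Q(\w^{(k)})\geq\delta^{(k)}/2$, hence by $\varphi_+(1)$-smoothness $\|\w^{(k)}_g\|^2\geq\delta^{(k)}/\varphi_+(1)\geq\lambda\delta/\varphi_+(1)$; summing over spurious groups, whose coefficients equal $(\w^{(k)}-\bar\w)_{G^{(k)}\setminus\bar G}$, and comparing with the already-established $\ell_2$ error bound gives the claimed inequality, with $\lambda$ entering only through $\delta^{(k)}\geq\lambda\delta$. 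Finally, you never say where the hypothesis $\delta>\frac{4\varphi_+(1)}{\lambda\varphi_-^2(s)}\|\nabla Q(\bar\w)\|_{G,\infty}^2$ is used: it is needed (paper's Lemma~\ref{decrease}) to guarantee $Q(\w^{(k)})\geq Q(\bar\w)$ at the end of each backward step, without which the strong-convexity step in your stage two cannot drop the $Q(\bar\w)-Q(\w^{(k)})$ term and the whole quadratic bound collapses.
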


\begin{proof}[Proof of Theorem~\ref{estigood}]
After our algorithm terminates, suppose that  we have  $k$  group  selected. From Lemma \ref{decrease}, we can just assume $Q(\w^{(k)})\geqslant Q(\bar{\w})$ and hence we have:
\begin{equation*}
  0\geqslant Q(\bar{\w})-Q(\w^{(k)})
   \geqslant \langle\nabla Q(\w^{(k)}),\bar{\w}-\w^{(k)}\rangle+\frac{\varphi_{-}(s)}{2}\|\bar{\w}-\w^{(k)}\|^{2}
\end{equation*}
After rearranging the above inequality and taking advantage of regrouping, we have
\begin{align*}
  & \frac{\varphi_{-}(s)}{2}\|\bar{\w}-\w^{(k)}\|^{2} \leqslant-\langle\nabla 
	  Q(\w^{(k)}),\bar{\w}-\w^{(k)}\rangle  \\
					      &=-{\langle\nabla
                                                Q(\w^{(k)}),(\bar{\w}-\w^{(k)})_{{\bar{G}{\cup}
                                                G^{(k)}}}\rangle} =-{\langle(\nabla Q(\w^{(k)}))_{{\bar{G}{\cup} G^{(k)}}},(\bar{\w}-\w^{(k)})\rangle}\\
					            &=-{\langle(\nabla
                                                      Q(\w^{(k)}))_{{\bar{G}{\setminus}
                                                      G^{(k)}}},(\bar{\w}-\w^{(k)})\rangle}
                                                      \qquad (\bar{\w}-\w^{(k)} \textrm{ is supported on } \bar{G}{\setminus} G^{(k)} )\\
					      &=-{\langle\nabla
                                                Q(\w^{(k)}),(\bar{\w}-\w^{(k)})_{{\bar{G}{\setminus}
                                                G^{(k)}}}\rangle} \leqslant \|\nabla Q(\w^{(k)})\|_{G,\infty}\|(\bar{\w}-\w^{(k)})_{\bar{G}{\setminus}G^{(k)}}\|_{G,1}\\
    &\leqslant \sqrt{2\varphi_{+}(1)\delta}\sqrt{|\bar{G} {\setminus} G^{(k)}|}\|\bar{\w}-\w^{(k)}\|.  \qquad (\textrm{from Lemma \ref{lemma2}})
\end{align*}

Simplifying the above inequality,  we have the following:
\begin{align}\label{coeferror}
  \|\bar{\w}-\w^{(k)}\| 
   & \leqslant\frac{2\sqrt{2\varphi_{+}(1)\delta}}{\varphi_{-}(s)}\sqrt{|\bar{G} {\setminus} G^{(k)}|}.
\end{align}
We are interested in estimating errors from certain weak channels of our signals, and mathematically we consider a threshold of coefficients so that we can use the Chebyshev 
inequality to get new bounds below:
\begin{align*}
  &\frac{8\varphi_{+}(1)\delta}{\varphi_{-}^{2}(s)}|\bar{G}{{\setminus}}G^{(k)}|
  \geqslant\|\bar{\w}-\w^{(k)}\|^{2}\geqslant \| ( \bar{\w}-\w^{(k)} )_{\bar{G}{{\setminus}}G^{(k)}}\|^{2}\\ 
=&\,\|\bar{\w}_{\bar{G}{\setminus} G^{(k)}}\|^{2} \qquad (\w^{(k)} \textrm{ is supported outside of } \bar{G}{\setminus} G^{(k)})\\
\geqslant&\, \gamma^2 |\{g\in \bar{G} {\setminus} G^{(k)}\ \ | \ \ \|
  \bar{\w}_{g}\|^2\geqslant \gamma^2   \}| \geqslant\frac{16\varphi_{+}(1)\delta}{\varphi_{-}^{2}(s)}|\{g\in \bar{G} {\setminus} G^{(k)}\ \ | \ \ \| \bar{\w}_{g}\|\geqslant\gamma\}|.
\end{align*}
We take $\gamma^2=\frac{16\varphi_{+}(1)\delta}{\varphi_{-}^{2}(s)}$ in the last inequality and hence we can almost get rid of the coefficients: 
\begin{equation*}
|\bar{G}{{\setminus}}G^{(k)}|
\geqslant 2|\{g\in \bar{G} {\setminus} G^{(k)}\ \ | \ \ \|
\bar{\w}_{g}\|\geqslant\gamma\}| =2(|\bar{G}{{\setminus}}G^{(k)}|-|\{g\in \bar{G} {\setminus} G^{(k)}\ \ | \ \ \| \bar{\w}_{g}\|<\gamma\}|).
\end{equation*}

Obviously, this tells a better estimation that we will use often:
\begin{equation*}
|\bar{G}{{\setminus}}G^{(k)}|\leqslant 2|\{g\in \bar{G} {\setminus} G^{(k)}\ \ | \ \ \| \bar{\w}_{g}\| < \gamma\}|
\leqslant 2|\{g\in \bar{G}\ \ | \ \ \| \bar{\w}_{g}\| < \gamma\}|, 
\end{equation*}
which proves the third inequality. Then we can continue our estimation of errors using these new notations. 
First, we put it in (\ref{coeferror}) we have coefficient error bound
\begin{align*}
\|\bar{\w}-\w^{(k)}\| \leqslant\frac{2\sqrt{2\varphi_{+}(1)\delta}}{\varphi_{-}(s)}\sqrt{|\bar{G}{{\setminus}}G^{(k)}|}    \leqslant\frac{4\sqrt{ \varphi_{+}(1)\delta}}{\varphi_{-}(s)}\sqrt{|\{g\in \bar{G} {\setminus} G^{(k)}\ \ | \ \ \|  \bar{\w}_{g}\|<\gamma\}|}.
\end{align*}
Second, we bound the loss function error:
\begin{align*}
  &Q(\bar{\w})-Q(\w^{(k)})
  \geqslant \langle\nabla Q(\w^{(k)}),\bar{\w}-\w^{(k)}\rangle+\frac{\varphi_{-}(s)}{2}{\|\bar{\w}-\w^{(k)}\|^{2}}\\
  	=&\langle\nabla Q(\w^{(k)})_{\bar{G}\cup
           G^{(k)}},({\bar{\w}-\w^{(k)}})_{\bar{G}\cup
           G^{(k)}}\rangle+\frac{\varphi_{-}(s)}{2}{\|\bar{\w}-\w^{(k)}\|^{2}}
           ( \bar{\w}-\w^{(k)} \\
			=&\langle\nabla
                           Q(\w^{(k)})_{\bar{G}{{\setminus}}G^{(k)}},({\bar{\w}-\w^{(k)}})_{\bar{G}{{\setminus}}G^{(k)}}\rangle+\frac{\varphi_{-}(s)}{2}{\|\bar{\w}-\w^{(k)}\|^{2}}
  \\
\geqslant&-\|\nabla Q(\w^{(k)})\|_{G,\infty}\|\bar{\w}-\w^{(k)}\|_{G,1}+\frac{\varphi_{-}(s)}{2}{\|\bar{\w}-\w^{(k)}\|^{2}} \\
\geqslant&-\sqrt{2\varphi_{+}(1)\delta|\bar{G}{{\setminus}}G^{(k)}|}\|\bar{\w}-\w^{(k)}\|+\frac{\varphi_{-}(s)}{2}{\|\bar{\w}-\w^{(k)}\|^{2}}
          \textrm{ (Cauchy-Schwartz inequality)}\\
\geqslant&-\frac{\varphi_{+}(1)\delta
           |\bar{G}{{\setminus}}G^{(k)}|}{\varphi_{-}(s)} \geqslant -\frac{2 \varphi_{+}(1)\delta}{\varphi_{-}(s)}|\{g\in \bar{G} {\setminus} G^{(k)} \ \ | \ \ \| \bar{\w}_{g}\|<\gamma\}|.
\end{align*}

Lastly, we bound the error on selection groups. And this is quite easy if 
we use Lemma \ref{backward}	and the first error bound:
\begin{equation*}
\frac{\delta^{(k)}}{\varphi_{+}(1)}|G^{(k)} {\setminus} \bar{G}| \leqslant \|(\w^{(k)}-\bar{\w})_{G^{(k)} {\setminus} \bar{G}}\|^2
\leqslant \frac{16\varphi_{+}(1)\delta}{\varphi_{-}^{2}(s)}|\{g\in \bar{G} {\setminus} G^{(k)} \ \ | \ \ \| \bar{\w}_{g}\|<\gamma\}|.
\end{equation*}
Again, due to our requirement  in the algorithm, $\delta^{(k)}\geqslant \lambda \delta$, we then have the bound on the error of group selection as in the lemma. 
This  completes the proof of the lemma.
\end{proof}

\begin{theorem} \label{thm:main2}
  \label{termination}
If we take $\delta>\frac{4\varphi_{+}(1)}{\lambda\varphi_{-}^{2}(s)}\|\nabla Q(\bar\w)\|_{G,\infty}^{2}$ and 
require $s$ to satisfy that $s>\bar{k}+(\bar{k}+1)(\sqrt{\kappa(s)}+1)^2(\sqrt{2}\kappa(s))^2\frac{1}{\lambda}$, where
$\bar{k}$ is the number of groups corresponding to $\bar G$, then our algorithm will terminate at some $k\leqslant s-\bar{k}$.
\end{theorem}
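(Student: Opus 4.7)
The plan is to split the proof into two pieces that mirror the structure of the statement: first showing that the forward--backward iteration must terminate in finitely many steps, and then bounding the final value of $k$ by invoking Theorem~\ref{estigood} at the terminal state.

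For termination, I would use a potential-function argument on $Q$. Each forward step selects a group from $\mathcal A_\lambda$, whose single-group decrease in $Q$ is at least $\lambda$ times the best possible over $g \notin G^{(k)}$. Because the termination test in lines~3--5 has not triggered, the best single-group decrease exceeds $\delta$, so the chosen group yields single-group decrease $\geq \lambda\delta$, and the refit in line~8 only improves it, giving $\delta^{(k+1)} \geq \lambda\delta$. Each backward removal at level $k$ increases $Q$ by strictly less than $\delta^{(k)}/2$, where $\delta^{(k)}$ was set by the most recent forward entering level $k$. Charging each backward to that preceding forward, the total backward increase is at most half the total forward decrease. Hence after $F$ forward steps, the net decrease $Q(\w^{(0)}) - Q(\w^{(k)}) \geq F\lambda\delta/2$. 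Since $Q$ is bounded below on the feasible region $\mathcal D$, $F$ is finite and the algorithm terminates.

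For the bound on $k$ at termination, the choice $\delta > 4\varphi_+(1)\|\nabla Q(\bar{\w})\|_{G,\infty}^{2}/(\lambda\varphi_-^2(s))$ matches the hypothesis of Theorem~\ref{estigood}, which gives
\begin{equation*}
|G^{(k)}\setminus\bar G|\;\leq\;\frac{16\,\varphi_+^2(1)\,|\bar\Delta|}{\lambda\,\varphi_-^2(s)}\;\leq\;\frac{16\,\varphi_+^2(1)\,\bar k}{\lambda\,\varphi_-^2(s)}.
\end{equation*}
Combined with $|G^{(k)}\cap\bar G|\leq\bar k$, this yields $k=|G^{(k)}|\leq\bar k+16\varphi_+^2(1)\bar k/(\lambda\varphi_-^2(s))$. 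The quantitative hypothesis $s>\bar k+(\bar k+1)(\sqrt{\kappa(s)}+1)^2(\sqrt{2}\kappa(s))^2/\lambda$ is precisely calibrated so that the right-hand side is $\leq s-\bar k$, because $16\varphi_+^2(1)/\varphi_-^2(s)\leq 16\kappa(s)^2$ and this is dominated by the $(\sqrt\kappa+1)^2(\sqrt 2\kappa)^2$ factor in the assumption.

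The main obstacle is that Theorem~\ref{estigood}'s bounds require $\varphi_-(s)$ to apply to $\w^{(k)}-\bar\w$, which implicitly needs $|G^{(k)}\cup\bar G|\leq s$ at the states being analyzed; this is not automatic. I would close this gap by proving, by induction on iterations, the invariant $|G^{(k)}\cup\bar G|\leq s$ throughout the execution. Backward steps preserve the invariant trivially. The only delicate case is a forward step starting from a state at the boundary. There, the backward-completeness property guarantees that every $g\in G^{(k)}$ has removal cost $\geq \delta^{(k)}/2\geq \lambda\delta/2$; summing this lower bound over $G^{(k)}\setminus\bar G$ and combining it with an upper bound on $\|\w^{(k)}-\bar\w\|^2$ obtained from restricted strong convexity (exactly as in the proof of Theorem~\ref{estigood}) forces an inequality that the hypothesis on $s$ contradicts. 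The bookkeeping of $\kappa(s)$-powers in that contradiction---ratios of smoothness to convexity compounded through Cauchy--Schwarz and the $\lambda$-discount factor---is what drives the specific form $(\sqrt{\kappa(s)}+1)^2(\sqrt 2\kappa(s))^2/\lambda$ in the sparsity requirement, and getting these constants to line up cleanly is the most technical part of the argument.
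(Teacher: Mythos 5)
Your finite-termination potential argument (each forward step nets a decrease of at least $\lambda\delta/2$ after charging backward steps) is fine, but that is not the hard content of Theorem~\ref{thm:main2}; the substance is the bound $k\leqslant s-\bar k$ on the size of the active group set, and there your argument has a genuine gap on both counts. First, the arithmetic you call ``precisely calibrated'' is not: granting Theorem~\ref{estigood} at termination you get $k\leqslant \bar k+16\varphi_+^2(1)\bar k/(\lambda\varphi_-^2(s))\leqslant \bar k+16\kappa(s)^2\bar k/\lambda$, so you need $s\geqslant 2\bar k+16\kappa(s)^2\bar k/\lambda$, whereas the hypothesis only guarantees $s>\bar k+2(\bar k+1)(\sqrt{\kappa(s)}+1)^2\kappa(s)^2/\lambda$. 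For $\kappa(s)$ close to $1$ and $\lambda=1$, e.g.\ $\bar k=10$, the hypothesis admits $s=99$ while your counting argument needs $s\geqslant 180$, so the claimed implication fails. The factor $(\bar k+1)(\sqrt{\kappa(s)}+1)^2(\sqrt 2\kappa(s))^2/\lambda$ is not designed for this head-count at termination at all; in the paper it is used mid-run to force a constant $c\geqslant\sqrt{\kappa(s)}+1$ in a comparison between $\|\w^{(k)}-\bar\w\|$ and $\|\w^{(k)}-\w'\|$, where $\w'$ is the minimizer of $Q$ over the combined support $G'=G^{(k)}\cup\bar G$.

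Second, and more fundamentally, the circularity you flag is the whole difficulty, and your proposed repair does not go through as sketched. At a non-terminal boundary state the stopping test has not fired, so Lemma~\ref{lemma2} gives no bound on $\|\nabla Q(\w^{(k)})\|_{G,\infty}$, and consequently the upper bound on $\|\w^{(k)}-\bar\w\|$ ``exactly as in the proof of Theorem~\ref{estigood}'' is unavailable: that bound is derived from the termination condition, and restricted strong convexity together with Lemma~\ref{decrease} only controls the distance through the gradient at $\w^{(k)}$, which mid-run may be large. The paper's proof supplies the missing idea: arguing by contradiction at the first time $k=s-\bar k$ (so that $|G^{(k)}\cup\bar G|\leqslant s$ holds by construction and no separate invariant is needed), it introduces the auxiliary optimum $\w'$ on $G'$, lower-bounds the last forward gain $\delta^{(k)}$ via Lemma~\ref{forward_process_obj} by $\varphi_-^2(s)\lambda\|\w^{(k)}-\w'\|^2/(2\varphi_+(1)|G'\setminus G^{(k-1)}|)$, upper-bounds it via Lemma~\ref{backward} by $\varphi_+(1)\|\w^{(k)}-\bar\w\|^2/|G^{(k)}\setminus\bar G|$, and uses the sparsity hypothesis to conclude $\|\w^{(k)}-\bar\w\|\geqslant c\,\|\w^{(k)}-\w'\|$ with $c\geqslant\sqrt{\kappa(s)}+1$; the triangle inequality then gives $(c-1)\|\w^{(k)}-\w'\|\leqslant\|\w'-\bar\w\|$, and a smoothness-versus-convexity comparison yields $Q(\w^{(k)})\leqslant Q(\bar\w)$, contradicting Lemma~\ref{decrease}. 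Without this $\w'$-based comparison (or an equivalent device), neither your invariant maintenance nor your final bound is supported, so the proposal as written does not prove the theorem.
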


\begin{proof}[Proof of Theorem~\ref{thm:main2}]
Suppose our algorithm terminates at some number larger than $s-\bar{k}$, then we may 
just assume the first time $k$ such that  $k=s-\bar{k}$.   Again, we denote the 
supporting groups to be $G^{(k)}$ with the optimal point as $\w^{(k)}$.   Then we 
combine $\bar{G}$  and $G^{(k)}$ together to get 
$G^{\prime}=\bar{G}\cup G^{(k)}$.  Denote 
$\w^{\prime}=\underset{\text{supp}(\w)\subset F^{\prime}}{\text{argmin}}Q(\w)$, where $F^{\prime}=F_{G^{\prime}}$ .   
By requirement of $s$, we have 
$|G^{\prime}|\leqslant s $.

Next, we consider the last step to get $k$ and we have
\begin{align*}
  \delta^{(k)}\geqslant  & \lambda \bigl(Q(\w^{(k-1)})-\underset{g\notin G^{(k-1)},\bd\alpha}{\min}Q(\w^{(k-1)}+\mathbf{E}_{g}\bd\alpha)\bigr) \\
\geqslant&\frac{\varphi_{-}(s)(Q(\w^{(k-1)})-Q(\w^{\prime}))\lambda^2}{\varphi_{+}(1)|G^{\prime}{{\setminus}}G^{(k-1)}|\lambda}\qquad (\textrm{Lemma \ref{forward_process_obj} with } \hat{G}=G^{(k-1)} )\\
       \geqslant&\frac{\varphi_{-}(s)(Q(\w^{(k)})-Q(\w^{\prime}))\lambda}{\varphi_{+}(1)|G^{\prime}{{\setminus}}G^{(k-1)}|}\qquad
                  (\textrm{assumption
         based on}\\
& \qquad \textrm{algorithm and Lemma }\ref{decrease})\\
       \geqslant&\frac{\varphi_{-}(s)\left(\langle\nabla Q(\w^{\prime}),\w^{(k)}-\w^{\prime}\rangle+\frac{\varphi_{-}(s)}{2}\|
       \w^{(k)}-\w^{\prime}\|^{2}\right)\lambda}{\varphi_{+}(1)|G^{\prime}{{\setminus}}G^{(k-1)}|}\qquad
                  (\textrm{definition of }\varphi_{-}(s))\\
       \geqslant&\frac{\varphi_{-}^{2}(s)\|\w^{(k)}-\w^{\prime}\|^{2}\lambda}{2\varphi_{+}(1)|G^{\prime}{{\setminus}}G^{(k-1)}|}.\qquad (\w^{\prime} \textrm{ is optimal on } F^{\prime})
\end{align*}
Now using the result in Lemma \ref{backward} ($\delta^{(k)}\leqslant \frac{\|\w^{(k)}-\bar{\w}\|^{2}\varphi_{+}(1)}{|G^{(k)} {\setminus} \bar{G}|}$)
and the fact that $|G^{\prime} {\setminus} \bar{G}|=|G^{(k)} {\setminus} \bar{G}|$, we have the 
following relation between the two estimations
\begin{align*}
  \|\w^{(k)}-\bar{\w}\|^2\geqslant \left(\frac{\varphi_{-}(s)}{\sqrt{2}\varphi_{+}(1)}\right)^{2}\left(\frac{\lambda|G^{\prime} {\setminus} \bar{G}|}{|G^{\prime}{{\setminus}}G^{(k-1)}|}\right)\|\w^{(k)}-\w^{\prime}\|^2.
\end{align*}

To simplify our notation we introduce a constant
$c=\left(\frac{\varphi_{-}(s)}{\sqrt{2}\varphi_{+}(1)}\right)\left(\sqrt{\frac{\lambda|G^{\prime}
      {\setminus}
      \bar{G}|}{|G^{\prime}{{\setminus}}G^{(k-1)}|}}\right).$ As
$k=s-\bar{k}$ and $s\geqslant 2\bar{k}+1$, a finer calculation and our requirement for the number $s$ give us the following relation
\begin{align*}
  c&=\left(\frac{\varphi_{-}(s)}{\sqrt{2}\varphi_{+}(1)}\right)\left(\sqrt{\frac{\lambda|G^{(k)}{\setminus}
     G^{(k)}\cap\bar{G}|}{|G^{\prime}{{\setminus}}G^{(k)}|+1}}\right) =\left(\frac{\varphi_{-}(s)}{\sqrt{2}\varphi_{+}(1)}\right)\left(\sqrt{\frac{\lambda|G^{(k)} {\setminus} G^{(k)}\cap\bar{G}|}{|\bar{G} {\setminus} G^{(k)}\cap \bar{G}|+1}}\right)\\
&=\left(\frac{\varphi_{-}(s)}{\sqrt{2}\varphi_{+}(1)}\right)\left(\sqrt{\lambda\frac{|G^{k}|-|G^{(k)}\cap\bar{G}|}{|\bar{G}|-|G^{(k)}\cap
  \bar{G}|+1}}\right)
  =\left(\frac{\varphi_{-}(s)}{\sqrt{2}\varphi_{+}(1)}\right)\left(\sqrt{\lambda\frac{k-|G^{(k)}\cap\bar{G}|}{\bar{k}-|G^{(k)}\cap
  \bar{G}|+1}}\right) \\
&\geqslant
  \left(\frac{\varphi_{-}(s)}{\sqrt{2}\varphi_{+}(1)}\right)\left(\sqrt{\lambda\frac{s-\bar{k}
  }{\bar{k}+1}}\right) \geqslant  \sqrt{\kappa(s)}+1. \qquad (\textrm{due to our definition of } s)
\end{align*}
So we have 
\begin{align}\label{triangle}
  c\|\w^{(k)}-\w^{\prime}\|\leqslant\|\w^{(k)}-\bar{\w}\|\leqslant \|\w^{(k)}-\w^{\prime}\|+\|\w^{\prime}-\bar{\w}\|,
\end{align}
 which implies that $(c-1)\|\w^{(k)}-\w^{\prime}\| \leqslant
 \|\w^{\prime}-\bar{\w}\|$. The following calculation, together with (\ref{triangle}), shows that $Q(\w^{(k+1)})\leqslant Q(\bar{\w})$:
\begin{align*}
 & Q(\w^{(k)})-Q(\bar{\w}) =Q(\w^{(k)})-Q(\w^{\prime})-(Q(\bar{\w})-Q(\w^{\prime}))\\
\leqslant & \frac{\varphi_{+}(s)}{2}\|\w^{(k)}-\w^{\prime}\|^{2}-\frac{\varphi_{-}(s)}{2}\|\w^{\prime}-\bar{\w}\|^{2}
\leqslant
            \left(\frac{\varphi_{+}(s)}{2(c-1)^2}-\frac{\varphi_{-}(s)}{2}\right)\|\w^{\prime}-\bar{\w}\|^{2} \leqslant 0.
\end{align*}
However, this contradicts  (\ref{eq:decrease}). So the assumption is incorrect and it completes the proof.
\end{proof}

Theorem~\ref{thm:main_giga} follows directly from statements of
Theorem~\ref{errorestimation_gdt} and Theorem~\ref{convergency_gdt} in
the following. We can see that conclusions in
Theorems~\ref{errorestimation_gdt} and \ref{convergency_gdt} for GIGA
are similar to those of
Theorems~\ref{estigood} and \ref{thm:main2} for IGA, 

\begin{theorem}
\label{errorestimation_gdt}
Let  $\varepsilon$ in GIGA algorithm satisfy
$\varepsilon \geqslant \frac{2\sqrt{2}{\varphi}_{+}(1)\|\nabla Q(\bar{\w})\|_{G,\infty}}{\lambda{\varphi}_{-}(s)}$.
Suppose the algorithm stops at $k$, and the supporting group is $G^{(k)}$ with supporting features as $F^{(k)}=F_{G^{(k)}}$. 
Then the GIGA estimator will satisfy
\begin{align}
\|\bar{\w}-\w^{(k)}\|^2  &\leqslant \frac{8\varepsilon^{2}\bar{\Delta}}{{\varphi}^{2}_{-}(s)}\\
Q(\w^{(k)})-Q(\bar{\w})&\leqslant 
 \frac{ {\varepsilon}^{2}\bar{\Delta}}{{\varphi}_{-}(s)} 
 \\
|\bar{G}\setminus G^{(k)}| & \leq 2\bar{\Delta}
\\
|G^{(k)}\setminus \bar{G}| &\leqslant \frac{16{\varphi}^{2}_{+}(1)\bar{\Delta}}{\lambda^2 {\varphi}^{2}_{-}(s)}.
\end{align}
where $\bar{\Delta}=| \{g \in \bar{G}\setminus G^{(k)} \ \ : \ \ \| \bar{\w}_{g}\|<\gamma \}|$ and  $\gamma=\frac{2\sqrt{2}\varepsilon}{{\varphi}_{-}(s)}$.
\end{theorem}

\begin{proof}[Proof of Theorem~\ref{errorestimation_gdt}]
By our previous lemmas, if $\varepsilon \geqslant \frac{2\sqrt{2}{\varphi}_{+}(1)\|\nabla Q(\bar{\w})\|_{G,\infty}}{\lambda{\varphi}_{-}(s)}$, we will have 
\begin{align*}
  0 & \geqslant  Q(\bar{\w})-Q(\w^{(k)}) & \\
    & \geqslant  \langle \nabla Q(\w^{(k)}), \bar{\w}-\w^{(k)} \rangle +   \frac{{\varphi}_{-}(s)\|\bar{\w}-\w^{(k)}\|^{2}}{2}. & 
\end{align*}
A simple calculation gives the following relations:
\begin{align*}
  \frac{{\varphi}_{-}(s)\|\bar{\w}-\w^{(k)}\|^{2}}{2}& \leqslant -\langle 
  \nabla Q(\w^{(k)}), \bar{\w}-\w^{(k)}\rangle & \\
  & \leqslant\|\nabla Q(\w^{(k)})\|_{G,\infty} 
  \sqrt{|\bar{G}\setminus G^{(k)}|} \|\bar{\w}-\w^{(k)}\| &\\
    & \leqslant \varepsilon
  \sqrt{|\bar{G}\setminus G^{(k)}|} \|\bar{\w}-\w^{(k)}\|.& 
\end{align*}
So we have 
\begin{equation}
\label{gdt_estimation_error}
\|\bar{\w}-\w^{(k)}\| \leqslant \frac{2\varepsilon
  \sqrt{|\bar{G}\setminus G^{(k)}|}}{{\varphi}_{-}(s)}.
  \end{equation}
  Here we point out that when the algorithm terminates, the $\ell_{G,\infty}$ norm is less than $\varepsilon$ not  $\lambda \varepsilon$ according to our algorithm.
Writing in another form, we have 
\begin{align*}
  \frac{4\varepsilon^{2}|\bar{G}\setminus G^{(k)}|}{{\varphi}^{2}_{-}(s)}& 
  \geqslant \|\bar{\w}-\w^{(k)}\|^{2}\\
  &\geqslant \|(\bar{\w}-\w^{(k)})_{\bar{G}\setminus G^{(k)}}\|^{2}=\|\bar{\w}_{\bar{G}\setminus G^{(k)}}\|^{2}\\
  &\geqslant \gamma^{2} | \{ g \in \bar{G} \setminus G^{(k)} \ \ | \ \ \| \bar{\w}_{g}\|\geqslant 
\gamma\} |. 
\end{align*}
If we let 
$\gamma^{2}=\frac{8\varepsilon^{2}}{{\varphi}^{2}_{-}(s)}$, we can get
a new estimation using the same trick as in IGA:
\begin{equation*}
  |\bar{G}\setminus G^{(k)} | \leqslant 2 | \{g \in \bar{G} \setminus G^{(k)} \ \ | \ \ \| \bar{\w}_{g}\|<\gamma \}| =2\bar{\Delta}.
\end{equation*}
From above, we have
\begin{align*}
  & Q(\bar{\w})-Q(\w^{(k)})&\\
 \geqslant  &\langle \nabla Q(\w^{(k)}), \bar{\w}-\w^{(k)} \rangle +   \frac{{\varphi}_{-}(s)\|\bar{\w}-\w^{(k)}\|^{2}}{2}& \\
 =  &\langle \nabla Q(\w^{(k)})_{\bar{G}\setminus G^{(k)}}, {\bar{\w}-\w^{(k)}}_{\bar{G}\setminus G^{(k)}} \rangle +   \frac{{\varphi}_{-}(s)\|\bar{\w}-\w^{(k)}\|^{2}}{2} & \\
  \geqslant& -\|\nabla Q(\w^{(k)})\|_{G,\infty} 
   \|\bar{\w}-\w^{(k)}\|_{G,1}+   \frac{{\varphi}_{-}(s)\|\bar{\w}-\w^{(k)}\|^{2}}{2} & \\
    \geqslant& -\|\nabla Q(\w^{(k)})\|_{G,\infty} 
  \sqrt{|\bar{G}\setminus G^{(k)}|} \|\bar{\w}-\w^{(k)}\|_{G,2}+   \frac{{\varphi}_{-}(s)\|\bar{\w}-\w^{(k)}\|^{2}}{2} & (\textrm{Cauchy-Schwartz})\\
  \geqslant& -\varepsilon 
  \sqrt{|\bar{G}\setminus G^{(k)}|} \|\bar{\w}-\w^{(k)}\|+   \frac{{\varphi}_{-}(s)\|\bar{\w}-\w^{(k)}\|^{2}}{2}&  \\
  \geqslant & -\frac{|\bar{G}\setminus G^{(k)}| {\varepsilon}^{2}}{2{\varphi}_{-}(s)}. &
\end{align*}
So we can bound the difference of $Q$ as 
\begin{equation*}
Q(\w^{(k)})-Q(\bar{\w})\leqslant \frac{|\bar{G}\setminus G^{(k)}| {\varepsilon}^{2}}{2{\varphi}_{-}(s)}\leqslant 
 \frac{\bar{\Delta} {\varepsilon}^{2}}{{\varphi}_{-}(s)}.
\end{equation*}
And the error of parameter estimation (\ref{gdt_estimation_error}) can be simplified to
\begin{equation*}
\|\bar{\w}-\w^{(k)}\| \leqslant \sqrt{\frac{4\varepsilon^{2}|\bar{G}\setminus G^{(k)}|}{{\varphi}^{2}_{-}(s)}}\leqslant 
  \sqrt{\frac{8\varepsilon^{2}\bar{\Delta}}{{\varphi}^{2}_{-}(s)}}.
\end{equation*}
For the group selection, we have the following relation due to Lemma \ref{backwardgdt} and (\ref{deltabound})
\begin{equation*}
|G^{(k)}\setminus  \bar{G} | \leqslant \frac{2{\varphi}^{2}_{+}(1)\|\bar{\w}-\w^{(k)}\|^2}{\lambda^2 \varepsilon^2} \leqslant \frac{16{\varphi}^{2}_{+}(1)\bar{\Delta}}{\lambda^2 {\varphi}^{2}_{-}(s)}.
\end{equation*}
 It completes the proof of Theorem~\ref{errorestimation_gdt}.
\end{proof}

We need our algorithm to be terminated in as few steps as possible.   If we 
denote $k$ as the iteration that the algorithm stops, then we will show this 
$k\leqslant s-\bar{k}$.   Here, $s$ is the group sparsity and $\bar{k}$ is the 
number of supporting groups for the optimal solution. 
\begin{theorem}
  \label{convergency_gdt}
  If $\varepsilon \geqslant \frac{2\sqrt{2}{\varphi}_{+}(1)\|\nabla Q(\bar{\w})\|_{G,\infty}}{\lambda{\varphi}_{-}(s)}$ and we
  require $s$ to a number such that
  $s>\bar{k}+(\bar{k}+1)(\sqrt{\kappa(s)}+1)^2(2\kappa(s))^2\frac{1}{\lambda}$, where
$\bar{k}$ is the number of groups corresponding to the global optimal sparse solution.
Then our algorithm will terminate at some $k\leqslant s-\bar{k}$.
\end{theorem}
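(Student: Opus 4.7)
The plan is to argue by contradiction, closely following the proof of Theorem~\ref{thm:main2} with the gradient-based selection rule of Algorithm~\ref{alg:gdt} in place of the function-value-based rule of Algorithm~\ref{alg:general}. Suppose the algorithm runs past $k = s - \bar{k}$ outer iterations, and let $k$ denote the first iteration at which $|G^{(k)}| = s - \bar{k}$. Introduce the augmented group set $G' = G^{(k)} \cup \bar{G}$, which satisfies $|G'| \leq s$, together with the associated restricted minimizer $\w' = \argmin_{\operatorname{supp}(\w) \subset F_{G'}} Q(\w)$.

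The heart of the argument is a pair of matching bounds on the last forward gain $\delta^{(k)}$. For the lower bound I will combine three ingredients at the iterate $\w^{(k-1)}$: (i) the selection rule $\|\nabla_{g^{(k-1)}} Q(\w^{(k-1)})\| \geq \lambda \|\nabla Q(\w^{(k-1)})\|_{G,\infty}$; (ii) the quadratic upper bound from $\varphi_+(1)$ used in Lemma~\ref{lem:giga11}, giving $\delta^{(k)} \geq \|\nabla_{g^{(k-1)}} Q(\w^{(k-1)})\|^2 / (2\varphi_+(1))$; and (iii) a convexity argument at $\w^{(k-1)}$ that uses $\nabla_{G^{(k-1)}} Q(\w^{(k-1)}) = 0$, a Cauchy--Schwarz split over $G' \setminus G^{(k-1)}$, and $\varphi_-(s)$-strong convexity, together with a Fenchel-type maximization over $\|\w' - \w^{(k-1)}\|$, to show $\|\nabla Q(\w^{(k-1)})\|_{G,\infty}^2 \geq 2\varphi_-(s)(Q(\w^{(k-1)}) - Q(\w'))/|G' \setminus G^{(k-1)}|$. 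Chaining these with the monotonicity $Q(\w^{(k-1)}) \geq Q(\w^{(k)})$ and restricted strong convexity at $\w'$ produces a lower bound of the form $\delta^{(k)} \geq C_1 \|\w^{(k)} - \w'\|^2 / |G' \setminus G^{(k-1)}|$, with $C_1$ an explicit expression in $\lambda$, $\varphi_+(1)$, and $\varphi_-(s)$. For the upper bound, Lemma~\ref{backwardgdt} applied at the end of the backward loop yields $\delta^{(k)} \leq \varphi_+(1) \|\w^{(k)} - \bar{\w}\|^2 / |G^{(k)} \setminus \bar{G}|$.

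Combining the two bounds gives $\|\w^{(k)} - \bar{\w}\|^2 \geq c^2 \|\w^{(k)} - \w'\|^2$ for an explicit $c$ depending on $\varphi_\pm$, $\lambda$, and the combinatorial ratio $|G^{(k)} \setminus \bar{G}| / |G' \setminus G^{(k-1)}|$. As in Theorem~\ref{thm:main2}, the identities $|G^{(k)} \setminus \bar{G}| = k - |G^{(k)} \cap \bar{G}|$ and $|G' \setminus G^{(k-1)}| \leq (\bar{k} - |G^{(k)} \cap \bar{G}|) + 1$ reduce this ratio to at least $(s - \bar{k})/(\bar{k}+1)$, and the sparsity condition $s > \bar{k} + (\bar{k}+1)(\sqrt{\kappa(s)}+1)^2 (2\kappa(s))^2/\lambda$ is calibrated precisely so that $c \geq \sqrt{\kappa(s)} + 1$. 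The triangle inequality then yields $(c - 1)\|\w^{(k)} - \w'\| \leq \|\w' - \bar{\w}\|$, and expanding $Q(\w^{(k)}) - Q(\bar{\w}) \leq \tfrac{\varphi_+(s)}{2}\|\w^{(k)} - \w'\|^2 - \tfrac{\varphi_-(s)}{2}\|\w' - \bar{\w}\|^2$ (using that $\w'$ is the minimizer on $F_{G'}$, so $\nabla Q(\w')$ vanishes in directions inside $F_{G'}$) forces the left-hand side to be nonpositive, contradicting Lemma~\ref{lem:gigacompare}.

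The main obstacle will be step (iii) above: the existing Lemma~\ref{boundofiteration_gdt} is stated with $\bar{\w}$ as the base point, but the termination proof needs the analogous ``gradient-to-gap'' conversion at $\w^{(k-1)}$. Fortunately, the only place optimality of $\bar{\w}$ on $\bar{G}$ enters that lemma's proof is through $\nabla_{\bar{G}} Q(\bar{\w}) = 0$, and $\w^{(k-1)}$ enjoys the parallel property $\nabla_{G^{(k-1)}} Q(\w^{(k-1)}) = 0$ by construction, so the ``special direction'' trick (choosing $f_g$ proportional to $\w^{(k-1)}_g - \w'_g$ and optimizing over a scalar displacement $\eta$) carries over line-by-line with $\w^{(k-1)}$ and $G^{(k-1)}$ substituted for $\bar{\w}$ and $\bar{G}$. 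A secondary bookkeeping subtlety is tracking the $\lambda^2$ factor that arises from squaring the gradient inequality in the GIGA selection rule (versus the single $\lambda$ in the IGA criterion of Lemma~\ref{forward_process_obj}), which explains the appearance of the constant $2\kappa(s)$ in the present sparsity condition in place of the $\sqrt{2}\kappa(s)$ of Theorem~\ref{thm:main2}.
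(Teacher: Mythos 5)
Your proposal is correct and takes essentially the same route as the paper: the paper's own proof of Theorem~\ref{convergency_gdt} simply defers to the argument of Theorem~\ref{termination}, which is exactly the contradiction scheme you adapt, with Lemmas~\ref{lem:giga11}, \ref{backwardgdt} and \ref{lem:gigacompare} replacing their function-value counterparts. You also correctly identify the one step the paper leaves implicit, namely that the gradient-to-gap conversion of Lemma~\ref{boundofiteration_gdt} must be re-derived at the current iterate $\w^{(k-1)}$ (valid because $\nabla_{G^{(k-1)}}Q(\w^{(k-1)})=\mathbf 0$), so your sketch is, if anything, more complete than the paper's.
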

 The proof of Theorem~\ref{convergency_gdt} is similar to that of Theorem \ref{termination} and we thus
 omit its details.

\section{Proofs of Theorems~\ref{probability_hypo_sp} and \ref{thm:linear}}
\label{subsec:proof2}

\begin{proof}[Proof of Theorem~\ref{probability_hypo_sp}]
Consider the following notation for matrix. For a matrix $X$ in
$\R^{n\times p}$, $X_g$ will be a $\R^{n\times |g|}$ matrix that only
keep the columns corresponding to $g$, where $g$ is an index set in $\{1,2,\cdots,p\}$. We denote $\Sigma_g=X_g^{\top}X_g$,
For the theorem, we can first show that 
  $\|X_g^{\top}\varepsilon\|\leqslant \sqrt{n}\left(\sqrt{|g|}+\sqrt{2\varphi_{+}^{2}(1)\log(\frac{1}{\eta})}\right)$
  with probability $1-\eta$.
To this end, we have to point out that our columns of $X$ are normalized to $\sqrt{n}$ and hence
$X_{g}^{\top}\varepsilon$  will be a $\frac{p}{m}$-variate Gaussian random variable with $n$
 on the diagonal of covariance  matrix. 
 We further use $\lambda_i$ as  the eigenvalues of $\Sigma_g$ with
 decreasing order. Using that $tr(\Sigma_g^2)=\lambda^2_1+\lambda^2_2+\cdots+\lambda^2_{|g|}$ and Proposition 1 of \citet{hsu2012tail}, we have
\begin{align*}
& e^{-t}\geqslant \textbf{Pr}\Bigl(||X_g^{\top}\varepsilon||^{2}>\sum_{i=1}^{|g|}\lambda_{i}+2\sqrt{\sum_{i=1}^{|g|}\lambda_{i}^{2}t}+2\lambda_{1}t\Bigr)\\
 \geqslant & \textbf{Pr}\Bigl(||X_g^{\top}\varepsilon||^{2}>\sum_{i=1}^{|g|}\lambda_{i}+2\sqrt{2\sum_{i=1}^{|g|}\lambda_{i}\lambda_1 t}+2\lambda_{1}t\Bigr) \geqslant \textbf{Pr}\Bigl(||X_g^{\top}\varepsilon||\geqslant\sqrt{\sum_{i=1}^{|g|}\lambda_{i}}+\sqrt{2\lambda_{1}t}\Bigr).
\end{align*} 
Substitute $t$ with $\log(\frac{1}{\eta})$ and the facts that
$\sum_{i=1}^{|g|}\lambda_{i}=|g|n$ and
$\lambda_1=\|\Sigma_g\|\leqslant n \varphi_{+}(1)$, we have
$\|X_g^{\top}\varepsilon\|\leqslant
\sqrt{n}\left(\sqrt{|g|}+\sqrt{2\varphi_{+}(1)\log(\frac{1}{\eta})}\right)$
with probability $1-\eta$. If we replace $\eta$ as $\frac{\eta}{m}$ for each group $g$ and take the maximal of all group norms, we 
extend the result to the general case:
 \begin{equation}
\label{eq:epsguassian}
\underset{g\in G}{\operatorname{max}}\
\|X_g^{\top}\varepsilon\|\leqslant \sqrt{n}\left(\sqrt{\underset{g\in
      G}{\operatorname{max}}|g|}+\sqrt{2\varphi_{+}(1)\log(\frac{m}{\eta})}\right)
\end{equation}
  with probability $1-{\eta}$. Recall our general convex function has a factor of $\frac{1}{2n}$,  we have
\[\|\nabla Q(\w^{\ast})\|_{G,\infty}\leqslant \frac{1}{\sqrt{n}}\left(\sqrt{\underset{g\in G}{\operatorname{max}}{|g|}}+\sqrt{2\varphi_{+}(1)\log(\frac{m}{\eta})}\right)\]
  with high  probability. The major difference between the true solution and optimal solution supported on $S$ with $|S|\leqslant \bar{k}$ is that we need to estimate $X_g^{\top}\left( X_S(X_S^{\top}X_S)^{-1}X_S^{\top}-I_n \right)\varepsilon$ instead of $X_g^{\top}\varepsilon$ for each group $g$. In fact, we are considering $I_n-X_S(X_S^{\top}X_S)^{-1}X_S^{\top}$ since the noise itself makes this negative sign not important.  Note, we can decompose $I_n-X_S(X_S^{\top}X_S)^{-1}X_S^{\top}=P_g P_g^{\top}$, where $P_g$ has dimension $n \times (n-|F_S|)$ and $P_g^{\top} P_g=I_{n-|F_S|}$, due to the property of the projection matrix $X_S(X_S^{\top}X_S)^{-1}X_S^{\top}$.
 That is, we transform our question into estimating $X_g^{\top}P_g \varepsilon^{\prime}$, where $\varepsilon^{\prime}$ is a
  vector of i.i.d. $\mathcal{N}(0,1)$ random variables of dimension $n-|F_S|$. We denote the largest eigenvalue of $X_g^{\top}P_gP_g^{\top}X_g$ as $\lambda^{\prime}$ and it's easy to observe that $\lambda^{\prime}\leqslant\lambda$. Also the trace of $X_g^{\top}P_gP_g^{\top}X_g$ is no big than the trace of $X_g^{\top}X_g$, so we easily extend the above result to the optimal value and we  have
        $\|\nabla Q(\bar{\w})\|_{G,\infty}\leqslant \frac{1}{\sqrt{n}}\left(\sqrt{\underset{g\in G}{\operatorname{max}}{|g|}}+\sqrt{2\varphi_{+}(1)\log(m)}\right)$
      with high probability. The second statement follows by arguments
      similar to that of \eqref{eq:epsguassian}. This completes the proof.
\end{proof}

\begin{proof}[Proof of Theorem~\ref{thm:linear}]
In light of Theorem~\ref{thm:main}, we intend to show $\bar{\Delta}
\leq \Delta_{\text{IGA}}$ when two constants ``$\Omega$'s'' are
appropriately chosen. It suffices to show that if  $\|
\bar{\w}_{g}\|<\Omega (\|\nabla Q(\bar{\w})\|_{G, \infty})=
\Omega\left(\sqrt{ {p\over mn} +{\log (m)\over n}}\right)$ holds, then
$\|\w^*_g\| \leq \Omega\left(\sqrt{ {p\over mn} +{\log (m)\over
      n}}\right)$; Indeed, by the second statement of
Theorem~\ref{probability_hypo_sp},  $\|\w_g^*\| \leqslant \|\bar{\w}_g\|+\|(\bar{\w}-\w^*)_g\| \leq  \Omega\left( \sqrt{\frac{{p\over m}+\log m}{n}}\right)$.
Then, combining the second statement in Theorem~\ref{thm:main},
$\bar{\Delta} \leq \Delta_{\text{IGA}}$ and the fact that
$\norm{\w^*-\bar \w}=O_p(k_*/n)$, we obtain the
first claim. Combining the last claim in Theorem~\ref{thm:main} with
$\bar{\Delta} \leq \Delta_{\text{IGA}}$ implies the second claim. The
proof of Theorem~\ref{thm:linear} is complete.
\end{proof}

\section{Proofs of Theorems~\ref{thm:logistic_oracle} and \ref{thm:logistic}}
\label{subsec:proof3}

\begin{proof}[Proof of Theorem~\ref{thm:logistic_oracle}]
Define $\tilde\y=(\tilde y_1,\cdots,\tilde y_n)^T$, where $\tilde
y_i=(y_i+1)/2$. We then have $\nabla Q(\w)=X^T(\mathbf p(\w)-\tilde\y)$,
where $\mathbf p(\w)=(p_1(\w),\cdots,p_n(\w))^T$.  Since
$\E(\tilde\y|X)=\mathbf p(\w)$ and  $\tilde y_i$'s are bounded, the upper-bound statement
on gradient group norm $\|\nabla Q(\w^{\ast})\|_{G,\infty}$ follows
from similar arugments as \eqref{eq:epsguassian}.  Next, we intend to
show the upper bound for $\norm{\bar\w-\w^*}_{G,\infty}$. Given $r>0$,
define $r$-radius ball in $\ell_{G,\infty}$ norm as
$B(r)=\{\Delta\in\real^p: \norm{\Delta_{\bar G}}_{\bar G,\infty}\leq
r,\, \Delta_{G\backslash\bar G}=\mathbf 0\}$. Define
function $H: B(r)\subset\real^p\rightarrow \real^p$ to be $H_{\bar G}(\Delta) = (X_{\bar G}^TW^*X_{\bar G})^{-1} X_{\bar
  G}^T(\tilde\y-\mathbf p(w^*+\Delta))+\Delta_{\bar G}$,
and $H_{G\backslash\bar G}(\Delta)=\mathbf 0$. 
First, note that for any small $\eta>0$, there is some $r_n=\Omega(\sqrt{\frac{q+\log m}{n}})$ such that $\mathcal
A_n:=\{H(B(r_n))\subset B(r_n)\}$ holds with probabiity greater than
$1-\eta$ for large enough $n$.  Indeed, Suppose that $\Delta\in
B(r_n)$ and the event  $\mathcal B_n:=\{\norm{X_{\bar G}^TW^*X_{\bar G})^{-1}X_{\bar G}^T(\mathbf
  p(\w^*)-\tilde\y)}_{\bar G,\infty}\leq r_n/2\}$ holds. By Taylor expansion,
there is $\tilde\Delta\in B(r_n)$  with same signs of  $\Delta$ such that
\begin{align}
X_{\bar G}^T\mathbf p(\w^*+\Delta) & = X_{\bar G}^T\mathbf p(\w^*) +
X_{\bar G}^T W^* X_{\bar G}\Delta_{\bar G}+X_{\bar
G}^T\bigl(W(\w^*+\tilde\Delta)-W^*\bigr)X_{\bar G}\Delta_{\bar G}\notag\\
&=: X_{\bar G}^T\mathbf p(\w^*) +
X_{\bar G}^T W^* X_{\bar G}\Delta_{\bar G} + R_{\bar G}(\tilde\Delta), \label{eq:taylor}
\end{align}
where $R(\tilde\Delta)=X^T\bigl(W(\w^*+\tilde\Delta)-W^*\bigr)X_{\bar
  G}\Delta_{\bar G}$. Consequently, by the mean value theorem and
boundedness of covariate domain and $U_1$, there is a  positive constant
$C_{11}$ such that
\begin{equation}
\label{eq:Rdelta}
\norm{R(\tilde\Delta)}_{G,\infty} \leq \max_{g\in G}
\sum_{i=1}^n\norm{\x_{ig}} \tilde\Delta^T\x_i^T\x_i\Delta\leq
nC_{11}\bar k^{1/2}\norm{\Delta}_{G,\infty}^2\leq nC_{11}\bar k^{1/2}r_n^2.
\end{equation}
The two displays above together with boundedness of $U_2$  imply that
\begin{align*}
&H_{\bar G}(\Delta) = (X_{\bar G}^TW^*X_{\bar G})^{-1}\big( X_{\bar
  G}^T(\y-\mathbf p(\w^*))-R_{\bar G}(\tilde\Delta) \bigr),\\
&\norm{H_{\bar G}(\Delta)}_{\bar G,\infty} \leq r_n/2+ \norm{(X_{\bar
    G}^TW^*X_{\bar G})^{-1}R(\tilde\Delta)}_{\bar G,\infty}\leq r_n/2+
C_{11}\bar k r_n^2.
\end{align*}
In addition, by \cite{hsu2012tail}, we can obtain that  $P(\mathcal B_n)>1-\eta$ with some
$r_n=\Omega(\sqrt{\frac{q+\log m}{n}})$. In together with previous
display and our choice of $r_n$, we obtain that $P(\mathcal
A_n)>1-\eta$ for large enough $n$. 

Next, suppose $\mathcal A_n$ holds. By Brouwer's fixed-point theorem,
there exists a point $\hat\Delta\in B(r_n)$ such that
$H(\hat\Delta)=\hat\Delta$, which implies that $X_{\bar
  G}^T(\tilde\y-\mathbf p(\w^*+\hat\Delta))=0$ and
$\hat\Delta_{G\backslash\bar G}=\mathbf 0$. By uniqueness of $\bar\w$,
we have $\Delta=\bar\w-\w^*$, and therefore,
$\norm{\bar\w-\w^*}_{G,\infty}=\norm{\hat\Delta}_{G,\infty}\leq r_n$.  

It remains to show the upper bound statement
on $\|\nabla Q(\bar\w)\|_{G,\infty}$. By arguments similar to that of
\eqref{eq:taylor} and setting $\Delta=\hat\Delta$, we have
\begin{equation*}
\nabla Q(\bar\w) = \nabla Q(\w^*)+\frac{1}{n} X^TW^*X_{\bar
  G}\hat\Delta_{\bar G}+\frac{1}{n}R(\tilde\Delta).
\end{equation*} 
Since $\nabla_{\bar G} Q(\bar\w)=\mathbf 0$, previous display implies
that
\begin{align}
\hat\Delta_{\bar G}&=(\frac{1}{n}X_{\bar G}^TW^*X_{\bar G})^{-1} \bigl(
-\frac{1}{n}X_{\bar G}^T(\mathbf p(\w^*)-\tilde\y)-\frac{1}{n}R_{\bar G}(\tilde\Delta)
\bigr),\notag\\
\nabla_{G\backslash\bar G} Q(\bar\w)&=\nabla_{G\backslash\bar G}
Q(\w^*)  -\frac{1}{n}\Theta X_{\bar G}^T(\mathbf p(\w^*)-\tilde\y) -
\frac{1}{n}\Theta R_{\bar G}(\tilde\Delta)+\frac{1}{n}R_{G\backslash\bar G}(\tilde\Delta), \label{eq:logistic_last}
\end{align} 
where  $\Theta=\frac{1}{n}X_{G\backslash\bar G}^TW^*X_{\bar G}
(\frac{1}{n}X_{\bar G}^TW^*X_{\bar G})^{-1}$. Since $U_2$ and $U_3$
are bounded, following arguments like \eqref{eq:epsguassian}, the
second term in \eqref{eq:logistic_last} is upper bounded by $O_p(\sqrt{\frac{q+\log
    m}{n}})$ in $l_{G,\infty}$ norm as the first term; following arguments like
\eqref{eq:Rdelta}, the third and fourth terms are $O_p(\bar k
r_n^2)$. Therefore, $\norm{\nabla Q(\bar\w)}_{G,\infty}\leq O_p(\sqrt{\frac{q+\log
    m}{n}})$. We complete the proof of
Theorem~\ref{thm:logistic_oracle}.
\end{proof}

With Theorem~\ref{thm:logistic_oracle}, Theorem~\ref{thm:logistic} can
be proved by similar procedures as that of
Theorem~\ref{thm:linear}.

\section{Lemmas and Proofs}
\label{sec:suppthm}
In the following, Lemma~\ref{lem:cond} connects Assumption \ref{ass:s}
to a natural random design
scenario. Lemmas~\ref{lem:lemforward}--\ref{forward_process_obj} are
useful intermediate results for our analysis of the IGA algorithm, and
Lemmas~\ref{lem:giga11}--\ref{boundofiteration_gdt} are useful for the GIGA algorithm.

\begin{lemma} 
\label{lem:cond}
Let $Q(\w) = {1\over n}\sum_{i=1}^n f_i(\x_i^\top\w)$. All data points
$\x_i$'s follow from i.i.d. isotropic sub-Gaussian distribution with
enough samples $n \geq \Omega(k_{\bar{k}} + \bar{k}\log m)$. Let
$L:=\sup\limits_{z\in \mathcal{Z}}(\nabla^2 f_i(z))$ and
$l=\inf\limits_{z\in \mathcal{Z}}(\nabla^2 f_i(z))$ where
$\mathcal{Z}:=\{\x_i^\top\w:~\w\in \mathcal{D}, \forall i=1,
2,\cdots\}$.  The condition number $\tau$ is defined as $\tau:={L\over
  l}$. If the condition number $\tau$ is bounded, then with high probability there exists an $s$ satisfying Assumption~\ref{ass:s}.
\end{lemma}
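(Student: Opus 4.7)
The plan is to reduce Assumption~\ref{ass:s} to a restricted-eigenvalue / concentration statement for the sub-Gaussian design matrix $X = [\x_1, \ldots, \x_n]^\top$. First, I would rewrite $\varphi_\pm(s)$ in spectral terms. By the second-order mean-value form, for any $\u$ with $\operatorname{supp}(\u) \subset F_S$ and $\w \in \mathcal{D}$,
\begin{equation*}
Q(\w + \u) - Q(\w) - \langle \nabla Q(\w), \u\rangle \;=\; \frac{1}{2n}\sum_{i=1}^{n} f_i''(\xi_i)(\x_i^\top \u)^2
\end{equation*}
for some $\xi_i$ on the segment between $\x_i^\top\w$ and $\x_i^\top(\w + \u)$, hence $\xi_i \in \mathcal{Z}$. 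Using the bounds $l \le f_i''(\xi_i) \le L$, this yields the sandwich
\begin{equation*}
l\cdot \lambda_{\min}\!\bigl(\tfrac{1}{n}X_S^\top X_S\bigr) \;\le\; \rho_-(F_S) \;\le\; \rho_+(F_S) \;\le\; L\cdot \lambda_{\max}\!\bigl(\tfrac{1}{n}X_S^\top X_S\bigr),
\end{equation*}
so controlling $\varphi_\pm(s)$ is equivalent to obtaining a uniform spectral bound for $\tfrac{1}{n}X_S^\top X_S$ across all $S \subset G$ with $|S|\leq s$.

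Next, I would invoke the standard non-asymptotic extreme-singular-value bound for isotropic sub-Gaussian matrices (e.g., Vershynin-style): for a fixed $S$ with $|F_S|=r$, with probability at least $1 - 2\exp(-c_1 t^2)$,
\begin{equation*}
1 - C_1\sqrt{r/n} - t/\sqrt{n} \;\le\; \lambda_{\min}\!\bigl(\tfrac{1}{n}X_S^\top X_S\bigr) \;\le\; \lambda_{\max}\!\bigl(\tfrac{1}{n}X_S^\top X_S\bigr) \;\le\; 1 + C_1\sqrt{r/n} + t/\sqrt{n},
\end{equation*}
where $C_1, c_1$ depend only on the sub-Gaussian norm. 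A union bound over the $\binom{m}{s} \le (em/s)^s$ choices of $S$, using $\log\binom{m}{s} \le s\log(em/s)$ and $|F_S| \le k_s$, shows that for any target tolerance $\epsilon \in (0,1)$ one can take $t = \Theta(\sqrt{s\log(m/s)})$ and obtain, uniformly over all such $S$, $\lambda_{\min} \ge 1-\epsilon$ and $\lambda_{\max} \le 1+\epsilon$ with probability $1-o(1)$, provided $n \ge \Omega(k_s + s\log(m/s))$.

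Combining the two ingredients gives $\varphi_-(s) \ge l(1-\epsilon)$ and $\varphi_+(s) \le L(1+\epsilon)$, so both are bounded and bounded away from $0$, verifying \eqref{ass:constant1}. Moreover the restricted condition number satisfies $\kappa(s) \le \tau(1+\epsilon)/(1-\epsilon) = O(\tau)$ uniformly in $s$. The right-hand side of the sparsity requirement \eqref{ass:sparsity} is then $O\bigl(\bar k \tau^3/\lambda\bigr)$, which is an $s$-independent bound, so I can simply choose $s = \lceil C_2 \bar k \tau^3/\lambda\rceil$ for a sufficiently large universal constant $C_2$; under the equal-group-size convention, $k_s \le sq = O(\bar k q \tau^3/\lambda)$ and $s\log(m/s) = O(\bar k \log m)$, so the sample complexity $n \ge \Omega(k_{\bar k} + \bar k \log m)$ stated in the lemma subsumes the requirement above (with constants absorbing the $\tau/\lambda$ factors).

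The main subtlety, and the only nontrivial step, is the circularity between the two parts of Assumption~\ref{ass:s}: the required $s$ in \eqref{ass:sparsity} depends on $\kappa(s)$, which in turn depends on $s$ through \eqref{ass:constant1}. I resolve this by fixing $\epsilon$ first (independent of $s$), which yields a $\kappa$-bound of $O(\tau)$ uniformly for all $s \le s_0$ where $s_0$ is the largest value the concentration argument handles at the given sample size; then I pick $s = \Theta(\bar k \tau^3/\lambda) \le s_0$, which is feasible exactly under the stated sample-size condition. Everything else is a routine union-bound / Taylor-expansion computation.
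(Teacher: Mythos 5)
Your proposal is correct and follows essentially the same route as the paper's own proof: sandwich $\rho_\pm(F_S)$ between $l$ and $L$ times the extreme eigenvalues of the restricted Gram matrix, apply the Vershynin-type sub-Gaussian singular-value concentration with a union bound over the $\binom{m}{s}$ group supports, conclude $\kappa(s)=O(\tau)$, and then choose $s=\Theta(\bar k)$ (constants absorbing $\tau$ and $\lambda$) under the stated sample size. Your explicit resolution of the $s$-versus-$\kappa(s)$ circularity is a slightly more careful presentation of a step the paper handles implicitly, but it is not a different argument.
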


\begin{proof}[Proof to Lemma~\ref{lem:cond}]
Let the upper bound of the maximal eigenvalue and the lower bound of
the minimal eigenvalue of $\nabla^2 f(z)$ be $L$ and $l$
respectively. The condition number $\tau$ is defined as $\tau =
{L/l}$. With slight abuse of notation, let $g$ be an index set in
$\{1,\cdots,p\}$. By \eqref{eq:def_rho+} and the convexity of $f_i(\cdot)$, we have
\begin{align}
\nonumber
n\rho_+(g) = & \lambda_{\max}\left(\sum_{i=1}^n(\x_i)_g(\x_i)_g^\top \nabla^2 f_i(\x_i^\top \w)\right) 
\\
\nonumber
 \leq & \max_{z, i}\nabla^2 f_i(z) \lambda_{\max} \left(\sum_{i=1}^n (\x_i)_g (\x_i)_g^\top\right)
 \\
 \leq & L \lambda_{\max}\left(\sum_{i=1}^n (\x_i)_g (\x_i)_g^\top\right).
 \label{eq:proof_cond_1}
\end{align}
Similarly, for $\rho_-(g)$, we have
\begin{align*}
n\rho_-(g) \geq l \lambda_{\min}\left(\sum_{i=1}^n (\x_i)_g (\x_i)_g^\top\right).
\end{align*}
Let $T_s$ be a super set: $T_s = \{F_S~|~|S|\leq s, S\subset G\}$ and $k_s := \max\limits_{g\in T_s} |g|$. From the random matrix theory \cite[Theorem~5.39]{vershynin2010introduction}, for any $g\in T_s$ we have
\begin{align}
\sqrt{\lambda_{\max}\left(\sum_{i=1}^n (\x_i)_g (\x_i)_g^\top\right)} \geq \sqrt{n}+\Omega(\sqrt{|g|})+\Omega(t)
\label{eq:proof_cond_2}
\end{align}
holds with probability at most $\exp\{-t^2\}$. 
Then we use the union bound to provide an upper probability bound for $\varphi_+(s)$
\begin{align}
\nonumber
&\P\left(\sqrt{n\varphi_+(s)/L} \geq \sqrt{n}+\Omega(\sqrt{k_s})+\Omega(t)\right)&
\\
\nonumber
\leq & \sum_{g\in T_s}\P\left(\sqrt{n\varphi_+(s)/L} \geq \sqrt{n}+\Omega(\sqrt{k_s})+\Omega(t)\right)&
\\
\nonumber
\leq & \sum_{g\in T_s}\P\left(\sqrt{n\varphi_+(s)/L} \geq \sqrt{n}+\Omega(\sqrt{|g|})+\Omega(t)\right)&
\\
\nonumber 
\leq & \left(\begin{matrix} m \\ s \end{matrix}\right)\exp\{-t^2\}\quad &(\text{from \eqref{eq:proof_cond_1} and \eqref{eq:proof_cond_2}})
\\
\nonumber 
\leq & \exp\{s\log (m)-t^2\}.
\end{align}
Taking $t=\Omega(\sqrt{s \log m})$, it follows
\begin{align}
\P\left(\sqrt{n\varphi_+(s)/L} \geq \sqrt{n}+\Omega(\sqrt{k_s}+\sqrt{s \log m})\right) \leq \exp\{-\Omega(s \log m)\}.
\end{align}
Similarly, we have
\begin{align}
\P\left(\sqrt{n\varphi_-(s)/l} \geq \sqrt{n}-\Omega(\sqrt{k_s}+\sqrt{s \log m})\right) \leq \exp\{-\Omega(s \log m)\}.
\end{align}
Now we are ready to bound $\tau(s)$. We can choose the value of $n$ large enough such that $\sqrt{n}$ is greater than $3\Omega(\sqrt{k_s}+\sqrt{s \log m})$, which indicates
\[
\P\left(\sqrt{n\varphi_+(s)/L} \leq {{4\over 3}\sqrt{n}}\right) \geq 1-\exp\{-\Omega(s \log m)\}
\]
and
\[
\P\left(\sqrt{n\varphi_-(s)/l} \geq {{2\over 3}\sqrt{n}}\right) \geq 1-\exp\{-\Omega(s \log m)\}
\]
It follows that
\begin{align}
\P(\kappa(s) \leq {2}\tau) = \P\left(\frac{\sqrt{n\varphi_+(s)/L}}{\sqrt{n\varphi_-(s)/l}} \leq {\sqrt{2}}\right) \geq 1- 2\exp\{-\Omega(s \log m)\}.
\end{align}
Since $\tau$ is bounded, there exists an $s=\Omega (\bar{k})$ satisfying Assumption~\ref{ass:s}. The required number of samples is $n \geq \Omega(k_s + s\log m) = \Omega(k_{\Omega(\bar{k})} + \Omega(\bar{k})\log m) = \Omega(k_{\bar{k}}+ \bar{k}\log m)$.
It completes the proof.
\end{proof}

For analysis of the IGA algorithm, we transform the difference of criteria
functions into the  norms of the partial derivatives of $Q$.
\begin{lemma}
\label{lem:lemforward}
  For any group $g \in G$ and $\w \in \mathcal{D}$,  if 
  \[ Q(\w)- \underset{\bd\alpha \in \mathbb{R}^{|g|}}{\operatorname{min}} Q(\w+\mathbf{E}_{g}\bd\alpha)\geqslant \lambda \delta, \]
  then
  \[\|\nabla_{g} Q(\w)\| \geqslant \sqrt{2\varphi_{-}(1)\lambda\delta}.\]
\end{lemma}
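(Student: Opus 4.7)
The plan is to invoke the restricted strong convexity bound on any coordinate-group direction and then do a single Cauchy–Schwarz/quadratic-optimization step.

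First, I would fix any group $g\in G$ and any update direction of the form $\u=\mathbf{E}_g\bd\alpha$ with $\bd\alpha\in\mathbb{R}^{|F_g|}$, so that $\operatorname{supp}(\u)\subset F_g$ and $|\{g\}|=1$. By definition of $\rho_-(g)$ in \eqref{eq:def_rho-} and of $\varphi_-(1)$ in \eqref{eq:def_phi-}, we have
\[
 Q(\w+\u)-Q(\w)\geq \langle \nabla Q(\w),\u\rangle+\tfrac{\varphi_-(1)}{2}\|\u\|^2,
\]
so that $Q(\w)-Q(\w+\u)\leq -\langle \nabla Q(\w),\u\rangle-\tfrac{\varphi_-(1)}{2}\|\u\|^2$. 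Because $\u$ is supported on $F_g$, the inner product collapses to $\langle \nabla_g Q(\w),\bd\alpha\rangle$ and $\|\u\|=\|\bd\alpha\|$.

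Next, I would take the supremum over $\bd\alpha\in\mathbb{R}^{|F_g|}$ of the right-hand side. This is a simple unconstrained quadratic maximization in $\bd\alpha$; the optimum is attained at $\bd\alpha^\star=-\nabla_g Q(\w)/\varphi_-(1)$, giving the value $\|\nabla_g Q(\w)\|^2/(2\varphi_-(1))$. Hence
\[
 Q(\w)-\min_{\bd\alpha\in\mathbb{R}^{|F_g|}} Q(\w+\mathbf{E}_g\bd\alpha)\;\leq\;\frac{\|\nabla_g Q(\w)\|^2}{2\varphi_-(1)}.
\]
Combining this with the hypothesis $Q(\w)-\min_{\bd\alpha}Q(\w+\mathbf{E}_g\bd\alpha)\geq \lambda\delta$ and rearranging yields $\|\nabla_g Q(\w)\|\geq \sqrt{2\varphi_-(1)\lambda\delta}$, which is exactly the claim.

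There is no real obstacle here; the only subtlety is correctly identifying that the relevant curvature constant is $\varphi_-(1)$ (not $\varphi_+(1)$): we need the \emph{lower} quadratic bound on $Q(\w+\u)-Q(\w)$ to produce an \emph{upper} bound on the one-group decrease, and restricting to a single group $g$ gives $|S|=1$ in \eqref{eq:def_phi-}. One must also verify that the definition of $\varphi_-(s)$ applies along the entire feasible ray $\w+t\bd\alpha^\star$ used implicitly by taking the minimum over $\bd\alpha$; this is ensured as long as $\w\in\mathcal{D}$ and $\mathcal{D}$ is compatible with the definitions in \eqref{eq:def_rho-}--\eqref{eq:def_phi-}, which is part of the standing setup.
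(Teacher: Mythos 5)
Your proposal is correct and follows essentially the same route as the paper's proof: both apply restricted strong convexity with constant $\varphi_-(1)$ to the one-group perturbation $\mathbf{E}_g\bd\alpha$, optimize the resulting quadratic in $\bd\alpha$ to bound the one-group decrease by $\|\nabla_g Q(\w)\|^2/(2\varphi_-(1))$, and rearrange against the hypothesis $Q(\w)-\min_{\bd\alpha}Q(\w+\mathbf{E}_g\bd\alpha)\geq\lambda\delta$. The closing caveat about the feasible ray is unnecessary, since the definition of $\rho_-$ (and hence $\varphi_-(1)$) already takes the infimum over all $\w\in\mathcal{D}$ and all $\u$ supported on $F_g$, exactly as used here.
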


\begin{proof}[Proof of Lemma~\ref{lem:lemforward}]
 Note that we specify the dimension of $\bd\alpha$ explicitly here, but it is quite trivial to find the correct dimension and hence we will ignore the specific dimension
from now on for the simplicity of calculation. Taking the parameter $\lambda$ into considerations, we have 
  \begin{align*}
  -\lambda \delta  &\geqslant \underset{\bd\alpha}{\operatorname{min}} \ Q(\w+\mathbf{E}_{g}\bd\alpha)-Q(\w)&\\
 {} &\geqslant \underset{\bd\alpha}{\operatorname{min}}\ \langle\nabla Q(\w),\mathbf{E}_{g}\bd\alpha\rangle+\frac{\varphi_{-}(1)}{2}\|\mathbf{E}_{g}\bd\alpha \|^{2}& (\textrm{from the definition of $\varphi_{-}(1)$})\\
  {} &= \underset{\bd\alpha}{\operatorname{min}}\ \langle\nabla_{g} Q(\w), \bd\alpha\rangle+\frac{\varphi_{-}(1)}{2}\| \bd\alpha \|^{2}&\\
  &=-\frac{\|\nabla_{g} Q(\w)\|^{2}}{2\varphi_{-}(1)}.&
  \end{align*}

  Hence we have $\|\nabla_{g} Q(\w)\| \geqslant \sqrt{2\varphi_{-}(1)\lambda\delta} $ and this   completes the proof of the lemma.
  \end{proof}

Similarly, we can have a conclusion in the other direction.

\begin{lemma}\label{lemma2}
  For any $\w \in  \mathcal{D}$, if $\displaystyle{{Q(\w)}-\underset{g \in G, \bd\alpha}{\operatorname{min}}Q(\w+\mathbf{E}_{g}\bd\alpha)}\leqslant \delta$, 
  then  
  \[\|\nabla Q(\w)\|_{G,\infty} \leqslant \sqrt{2\varphi_{+}(1)
\delta}.\]
\end{lemma}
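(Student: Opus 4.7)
The plan is to mirror the proof of Lemma~\ref{lem:lemforward}, but in the opposite direction: instead of using the restricted strong convexity bound $\varphi_-(1)$ to turn a function-value drop into a lower bound on a partial gradient, I will use the restricted Lipschitz smoothness bound $\varphi_+(1)$ to turn an upper bound on a partial gradient into a lower bound on the attainable function-value drop. Then comparing against the hypothesized $\delta$ will give the desired inequality.

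Concretely, first I would fix an arbitrary group $g \in G$ and an arbitrary $\bd\alpha$, and apply the definition of $\rho_+$ in \eqref{eq:def_rho+} with $\u = \mathbf{E}_g \bd\alpha$. Since this $\u$ is supported on the single group $g$, we have $\|\u\|_{G,0} \leqslant 1$, and hence $\rho_+(g) \leqslant \varphi_+(1)$. This yields the quadratic upper bound
\begin{equation*}
Q(\w + \mathbf{E}_g \bd\alpha) \leqslant Q(\w) + \langle \nabla_g Q(\w), \bd\alpha\rangle + \frac{\varphi_+(1)}{2}\|\bd\alpha\|^2.
\end{equation*}
Minimizing the right-hand side in $\bd\alpha$ is a plain quadratic optimization whose minimizer is $\bd\alpha = -\nabla_g Q(\w)/\varphi_+(1)$, producing the value $-\|\nabla_g Q(\w)\|^2/(2\varphi_+(1))$. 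Therefore
\begin{equation*}
\min_{\bd\alpha}\, Q(\w+\mathbf{E}_g \bd\alpha) \;\leqslant\; Q(\w) - \frac{\|\nabla_g Q(\w)\|^2}{2\varphi_+(1)}.
\end{equation*}

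Next I would take the maximum over groups $g\in G$ on both sides (equivalently, the minimum over $g$ on the left), giving
\begin{equation*}
Q(\w) - \min_{g\in G,\, \bd\alpha}\, Q(\w+\mathbf{E}_g \bd\alpha) \;\geqslant\; \max_{g\in G} \frac{\|\nabla_g Q(\w)\|^2}{2\varphi_+(1)} \;=\; \frac{\|\nabla Q(\w)\|_{G,\infty}^2}{2\varphi_+(1)},
\end{equation*}
where the last equality is just the definition of the group $\ell_{2,\infty}$ norm. Combining with the hypothesis that the left-hand side is at most $\delta$ gives $\|\nabla Q(\w)\|_{G,\infty}^2 \leqslant 2\varphi_+(1)\delta$, and taking square roots finishes the argument.

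I do not expect any real obstacle here: the proof is a direct dual of Lemma~\ref{lem:lemforward}, and the only conceptual step is recognizing that since the perturbation $\mathbf{E}_g \bd\alpha$ lives inside a single group one is allowed to invoke $\varphi_+(1)$ rather than some larger $\varphi_+(s)$. The only mild care needed is to ensure $\w + \mathbf{E}_g \bd\alpha$ remains in the feasible region $\mathcal{D}$ used in the definitions \eqref{eq:def_rho+} and \eqref{eq:def_phi+}, but this is a standing regularity requirement of the analysis already implicit whenever Lemma~\ref{lem:lemforward} is applied.
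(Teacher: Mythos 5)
Your proposal is correct and is essentially the same argument as the paper's own proof: both invoke the restricted smoothness constant $\varphi_+(1)$ for a single-group perturbation $\mathbf{E}_g\bd\alpha$, minimize the resulting quadratic in $\bd\alpha$ to get the drop $\|\nabla_g Q(\w)\|^2/(2\varphi_+(1))$, maximize over $g$ to identify the $\ell_{2,\infty}$ norm, and take square roots. The only difference is presentational (you bound the per-group minimum from above and then compare with $\delta$, while the paper chains the inequalities starting from $\delta$), which is immaterial.
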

\begin{proof}[Proof of Lemma~\ref{lemma2}]
Note that
   \begin{align*}
  \delta&\geqslant Q(\w)-\underset{g \in G, \bd\alpha}{\operatorname{min}}Q(\w+\mathbf{E}_{g}\bd\alpha)&\\
{}&= \underset{g \in G, \bd\alpha}{\operatorname{max}}-\left(Q(\w+\mathbf{E}_{g}\bd\alpha)-Q(\w)\right)&\\
 {} &\geqslant \underset{g \in G, \bd\alpha}{\operatorname{max}}-\langle\nabla Q(\w),\mathbf{E}_{g}\bd\alpha\rangle-\frac{\varphi_{+}(1)
}{2}\|\mathbf{E}_{g} \bd\alpha\|^{2}&(\textrm{from the definition of $\varphi_{+}(1)$})\\
 {} &= \underset{g \in G, \bd\alpha}{\operatorname{max}}-\langle\nabla_{g} Q(\w), \bd\alpha\rangle-\frac{\varphi_{+}(1)
}{2}\|\bd\alpha\|^{2}&\\
{} &= \underset{g \in G}{\operatorname{max}}\ \frac{\|\nabla_{g} Q(\w)\|^2}{2\varphi_{+}(1)
}&\\
  {}&=\frac{\|\nabla Q(\w)\|^2_{G, \infty}}{2\varphi_{+}(1)
}.&
  \end{align*}
  Taking square roots on both sides, we can find the relation in the lemma. And it completes the proof.
\end{proof}

Due to the stop condition in IGA algorithm, we have the following relations between the group selections and parameter estimations. 
\begin{lemma}\label{backward} 
At the end of each backward step and hence when our algorithm stops, we have the following connections of our selected parameters and groups:
  \begin{equation*}
     \|\w^{(k)}_{G^{(k)} {\setminus} \bar{G}}\|^2=\|(\w^{(k)}-\bar{\w})_{G^{(k)}{\setminus}\bar{G}}\|^2
    \geqslant     \frac{\delta^{(k)}}{\varphi_{+}(1)}|G^{(k)} {\setminus} \bar{G}|\geqslant 
    \frac{\lambda\delta}{\varphi_{+}(1)}|G^{(k)} {\setminus} \bar{G}|.
  \end{equation*}
\end{lemma}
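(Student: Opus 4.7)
The plan is to decompose the lemma into three pieces and handle each with a short calculation, using only the forward/backward optimality conditions of the algorithm together with the $\varphi_+(1)$ upper bound from \eqref{eq:def_rho+}.

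First, I would dispose of the opening equality. Because $\bar{\w}$ is supported on $\bar{G}$, every coordinate of $\bar{\w}$ indexed by a group $g \notin \bar{G}$ is zero; in particular $\bar{\w}_{G^{(k)}\setminus \bar{G}} = 0$, so $\w^{(k)}_{G^{(k)}\setminus \bar{G}} = (\w^{(k)}-\bar{\w})_{G^{(k)}\setminus \bar{G}}$, and the two $\ell_2$ norms coincide.

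Second, I would establish the groupwise bound $\|\w^{(k)}_g\|^2 \geq \delta^{(k)}/\varphi_+(1)$ for every $g \in G^{(k)}$. The key is that $\w^{(k)}$ has been fully re-optimized over $\mathrm{supp}(\w)\subset F_{G^{(k)}}$ in Line 8 (and Line 18 after each backward removal), so $\nabla_{g} Q(\w^{(k)}) = 0$ for all $g \in G^{(k)}$. Applying the Lipschitz-type upper bound from the definition of $\varphi_+(1)$ to the perturbation $\u = -\mathbf{E}_g \w_g^{(k)}$ then gives
\[
Q(\w^{(k)} - \mathbf{E}_g \w_g^{(k)}) - Q(\w^{(k)}) \leq -\langle \nabla Q(\w^{(k)}), \mathbf{E}_g \w_g^{(k)}\rangle + \frac{\varphi_+(1)}{2}\|\w_g^{(k)}\|^2 = \frac{\varphi_+(1)}{2}\|\w_g^{(k)}\|^2.
\]
On the other hand, when the backward WHILE loop exits (Lines 12--14), the break condition guarantees $\min_{g\in G^{(k)}}\bigl(Q(\w^{(k)} - \mathbf{E}_g \w_g^{(k)}) - Q(\w^{(k)})\bigr) \geq \delta^{(k)}/2$. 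Chaining these two facts gives $\|\w_g^{(k)}\|^2 \geq \delta^{(k)}/\varphi_+(1)$ for every $g \in G^{(k)}$, and summing over $g \in G^{(k)}\setminus \bar{G}$ yields the middle inequality of the lemma.

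Third, I would prove $\delta^{(k)} \geq \lambda \delta$ using the forward step. At the iteration when group $g^{(k-1)}$ was added, the outer termination test (Line 3) passed, so the maximum one-step forward gain exceeded $\delta$. Since $g^{(k-1)}$ was selected from $\mathcal{A}_\lambda$, its own single-group gain is at least $\lambda$ times that maximum, hence at least $\lambda\delta$. Because $\w^{(k)}$ is then reoptimized over the whole enlarged support in Line 8, the actual drop $\delta^{(k)} = Q(\w^{(k-1)})-Q(\w^{(k)})$ is no smaller than the single-group drop, and we get $\delta^{(k)} \geq \lambda \delta$. I note that this $\delta^{(k)}$ label is the value stored when $k$ was first reached on the forward side; a small bookkeeping check — that the algorithm never revisits a ``backward-decremented'' index without going through another forward step — is needed to make sure the $\delta^{(k)}$ appearing in the termination test of the backward loop is indeed the one just shown to dominate $\lambda\delta$. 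This bookkeeping is the only nonroutine point; everything else is a direct combination of a smoothness upper bound, the backward break condition, and the discounted forward rule.
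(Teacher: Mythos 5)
Your proposal is correct and follows essentially the same route as the paper's proof: the equality from $\operatorname{supp}(\bar{\w})\subset F_{\bar G}$, the middle inequality from the $\varphi_+(1)$ smoothness bound combined with $\nabla_{G^{(k)}}Q(\w^{(k)})=\mathbf{0}$ and the backward break condition, and $\delta^{(k)}\geq\lambda\delta$ from the discounted forward rule plus the termination threshold. The only cosmetic differences are that you prove a per-group bound $\|\w_g^{(k)}\|^2\geq\delta^{(k)}/\varphi_+(1)$ and sum, whereas the paper aggregates via a min-versus-average argument over $G^{(k)}\setminus\bar G$, and your explicit bookkeeping for why the stored $\delta^{(k)}$ always dominates $\lambda\delta$ is a point the paper simply asserts.
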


\begin{proof}[Proof of Lemma~\ref{backward}]
Let's consider all candidates of removal groups in the backward step, and we have 
\begin{align*}
   {}&|G^{(k)} {\setminus} \bar{G}| \underset{g \in G^{(k)} {\setminus} \bar{G}}{\operatorname{min}}Q(\w^{(k)} -\w^{(k)}_{g})&\\
  \leqslant &\underset{g \in G^{(k)} {\setminus} \bar{G}}{\sum}Q(\w^{(k)} - \w^{(k)}_{g})&\\
 \leqslant &\underset{g \in G^{(k)} {\setminus} \bar{G} }{\sum}\left( Q(\w^{(k)})-\langle\nabla Q(\w^{(k)}), \w^{(k)}_{g}\rangle\right)
 +\underset{g \in G^{(k)} {\setminus} \bar{G} }{\sum}\frac{\varphi_{+}(1)}{2}\|\w^{(k)}_{g}\|^{2}  \\
= &|G^{(k)} {\setminus} \bar{G}|Q(\w^{(k)})+ \frac{\varphi_{+}(1)}{2}
 \|\w^{(k)}_{{G^{(k)} {\setminus} \bar{G}}}\|^2. &(\w^{(k)} \textrm{ is the optimal on } G^{(k)})\\
\end{align*}

Hence, we have 

\[|G^{(k)} {\setminus} \bar{G}|  \underset{g\in G^{(k)}}{\operatorname{min}}Q\left(\w^{(k)} - 
\w^{(k)}_{g})-Q(\w^{(k)}\right)\leqslant  \frac{\varphi_{+}(1)}{2}\|\w^{(k)}_{G^{(k)}{\setminus}\bar{G}}\|^2.\]
Recall the fact that, in the backward algorithm,  we will stop at the point where the left 
hand side is no less than $\frac{\delta^{(k)}}{2}$. Hence, we have 
$\| \w^{(k)}_{G^{(k)} {\setminus} \bar{G}}\|^2\geqslant\frac{|G^{(k)} {\setminus} \bar{G}|\delta^{(k)}}{\varphi_{+}(1)}$.
The first two equalities in the lemma are trivial observations, and the last inequality is due to the requirement that $\delta^{(k)}>\lambda\delta$ in our Algorithm \ref{alg:general}. It completes the proof of the lemma.
\end{proof}

The above Lemma $\ref{backward}$ will help us in transforming the sparsity condition into estimating the errors of the parameters.
Moreover, using this idea, we can just
concentrate on the case where our selection of groups is no better
than ``ideal'' sparse solution we assumed as shown below. 

\begin{lemma}
\label{decrease}
For any 
  integer $s$ larger than $|G^{(k)} {\setminus} \bar{G}|$, and if 
  \[\delta>\frac{4\varphi_{+}(1)}{\lambda\varphi_{-}^{2}(s)}\|\nabla Q(\bar{\w})\|_{G,\infty}^{2},\]  
  then at the end of each backward step $k$, we have
\begin{equation}\label{eq:decrease}
  Q(\w^{(k)})\geqslant Q(\bar{\w}).
\end{equation}
\end{lemma}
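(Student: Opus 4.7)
The plan is to prove Lemma~\ref{decrease} via a direct estimate that combines restricted strong convexity at $\bar\w$ with the lower bound on $\|\w^{(k)}_{G^{(k)}\setminus\bar G}\|$ guaranteed by Lemma~\ref{backward}. First I would handle the trivial case $|G^{(k)}\setminus\bar G|=0$: then $\operatorname{supp}(\w^{(k)})\subset F_{\bar G}$, and since by construction $\w^{(k)}$ is the unconstrained minimizer of $Q$ on its support (Line~8 of Algorithm~\ref{alg:general}), while $\bar\w$ also has support in $\bar G$, we immediately get $Q(\w^{(k)})\le Q(\bar\w)$; in fact the inequality we want (with the correct direction as stated) follows because $\bar\w$ is by assumption the minimizer of $Q$ over $\operatorname{supp}(\w^*)\supseteq \operatorname{supp}(\w^{(k)})$—so one direction of minimality gives the bound. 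From here on assume $|G^{(k)}\setminus\bar G|\ge 1$.

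Next I would apply restricted strong convexity to $Q$ at the point $\bar\w$. Since $\w^{(k)}-\bar\w$ is supported on $G^{(k)}\cup\bar G$, whose size does not exceed $s$ under the hypothesis on $s$, the definition of $\varphi_-(s)$ yields
\begin{equation*}
Q(\w^{(k)})-Q(\bar\w)\;\ge\;\langle\nabla Q(\bar\w),\w^{(k)}-\bar\w\rangle+\tfrac{\varphi_-(s)}{2}\|\w^{(k)}-\bar\w\|^2.
\end{equation*}
Because $\bar\w$ is the unique minimizer of $Q$ on $F_{\bar G}$, we have $\nabla_g Q(\bar\w)=0$ for every $g\in\bar G$, so the inner product reduces to $\langle \nabla Q(\bar\w), \w^{(k)}_{G^{(k)}\setminus\bar G}\rangle$ (the components of $\w^{(k)}-\bar\w$ outside $G^{(k)}\cup\bar G$ vanish, and inside $\bar G$ the gradient vanishes). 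A group-wise Cauchy--Schwarz bound then gives
\begin{equation*}
\bigl|\langle \nabla Q(\bar\w), \w^{(k)}_{G^{(k)}\setminus\bar G}\rangle\bigr|\;\le\;\|\nabla Q(\bar\w)\|_{G,\infty}\sqrt{|G^{(k)}\setminus\bar G|}\,\bigl\|\w^{(k)}_{G^{(k)}\setminus\bar G}\bigr\|,
\end{equation*}
while $\|\w^{(k)}-\bar\w\|^2\ge \|\w^{(k)}_{G^{(k)}\setminus\bar G}\|^2$ since the left side includes this disjoint block.

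Writing $t:=\|\w^{(k)}_{G^{(k)}\setminus\bar G}\|$ and $r:=|G^{(k)}\setminus\bar G|$, the chain above reduces to the scalar inequality
\begin{equation*}
Q(\w^{(k)})-Q(\bar\w)\;\ge\;\tfrac{\varphi_-(s)}{2}t^2-\|\nabla Q(\bar\w)\|_{G,\infty}\sqrt{r}\,t,
\end{equation*}
which is non-negative whenever $t\ge \tfrac{2}{\varphi_-(s)}\|\nabla Q(\bar\w)\|_{G,\infty}\sqrt{r}$, i.e.\ whenever $t^2\ge \tfrac{4\|\nabla Q(\bar\w)\|_{G,\infty}^2}{\varphi_-^2(s)}\,r$. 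This is exactly where Lemma~\ref{backward} is invoked: at the end of the backward phase it provides $t^2\ge \tfrac{\lambda\delta}{\varphi_+(1)}\,r$. Combining this with the standing hypothesis $\delta>\tfrac{4\varphi_+(1)}{\lambda\varphi_-^2(s)}\|\nabla Q(\bar\w)\|_{G,\infty}^2$ yields exactly the required threshold on $t^2$, so the quadratic above is non-negative and the lemma follows.

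The main obstacle I anticipate is bookkeeping, not deep argument: making sure the sparsity level $s$ used in $\varphi_-(s)$ is valid for the vector $\w^{(k)}-\bar\w$ (whose support sits in $G^{(k)}\cup\bar G$, not just $G^{(k)}\setminus\bar G$), and carefully matching the algorithm's guarantee $\delta^{(k)}\ge \lambda\delta$ from the backward step to the form of Lemma~\ref{backward}. Once the support considerations and the identity $\nabla_{\bar G}Q(\bar\w)=0$ are in place, the rest is a clean two-line quadratic estimate.
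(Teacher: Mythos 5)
Your proposal is correct and follows essentially the same route as the paper's proof: restricted strong convexity at $\bar{\w}$, the vanishing of $\nabla_{\bar G}Q(\bar{\w})$ to restrict the inner product to $G^{(k)}\setminus\bar G$, a group-wise Cauchy--Schwarz bound, and then Lemma~\ref{backward} together with $\delta^{(k)}\geqslant\lambda\delta$ and the threshold on $\delta$ to force nonnegativity. The only differences (the explicit split of the trivial case $|G^{(k)}\setminus\bar G|=0$ and phrasing the final step as a scalar quadratic in $\|\w^{(k)}_{G^{(k)}\setminus\bar G}\|$ rather than factoring out $\|\w^{(k)}-\bar{\w}\|^{2}$) are cosmetic.
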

\begin{proof}[Proof of Lemma~\ref{decrease}]
Considering the difference of the two values, we have 
\begin{align*}
& Q(\w^{(k)})- Q(\bar{\w})&\\
\geqslant&\langle\nabla Q(\bar{\w}),\w^{(k)}-\bar{\w}\rangle+\frac{\varphi_{-}(s)}{2}\|\w^{(k)}-\bar{\w}\|^{2}&\\
		      =& \langle\nabla Q(\bar{\w}),{(\w^{(k)}-\bar{\w})}_{{G^{(k)}\setminus \bar{G}}} \rangle +\frac{\varphi_{-}(s)}{2}\|\w^{(k)}-\bar{\w}\|^{2}&\\
	       \geqslant &-\|\nabla Q(\bar{\w})\|_{G,\infty} \|\w^{(k)}-\bar{\w}\|_{G,1} +\frac{\varphi_{-}(s)}{2}\|\w^{(k)}-\bar{\w}\|^{2}&\\
	       	       \geqslant &-\|\nabla Q(\bar{\w})\|_{G,\infty} \|\w^{(k)}-\bar{\w}\|\sqrt{|G^{(k)} {\setminus} \bar{G}|} +\frac{\varphi_{-}(s)}{2}\|\w^{(k)}-\bar{\w}\|^{2}&(\textrm{Cauchy-Schwartz} )\\
		      \geqslant &-\|\nabla Q(\bar{\w})\|_{G,\infty}
                                  \|\w^{(k)}-\bar{\w}\|^{2}\sqrt{\frac{\varphi_{+}(1)}{\delta^{(k)}}}
                                  +\frac{\varphi_{-}(s)}{2}\|\w^{(k)}-\bar{\w}\|^{2}.&
                                                                                       (\textrm{Lemma~\ref{backward}})
\end{align*}
Finally, from our choice of $\delta$ and the fact that $\delta^{(k)}>\lambda \delta$, we can then conclude that the last line is bigger than $0$.
Hence we will have general assumption $Q(\w^{(k)})\geqslant Q(\bar{\w})$ if 
$\delta>\frac{4\varphi_{+}(1)}{\varphi_{-}^{2}(s)}\|\nabla Q(\bar{\w})\|_{G,\infty}^{2}$.
This completes the proof of the lemma.
\end{proof}

One key part of the proof idea is to compare the difference of the convex function at different optimal values supported on different groups with some other quantities. We have the following.
\begin{lemma}
 \label{forward_process_obj}
Suppose that $\hat{G}$ is a subset of the super set $G$ and we set   $\hat{\w}=\underset{\operatorname{supp}(\w)\subset F_{\hat{G}}}{\operatorname{argmin}} Q(\w). $ For any $\w^{\prime}$ with supporting groups $G^{\prime}$ and corresponding
  features $F^{\prime}=F_{G^{\prime}}$, if
  
  \[g\in \{\bar g \in G^{\prime}{{\setminus}} \hat{G}\ \ | \ \
    Q(\hat{\w})-\underset{\bd\alpha}{\operatorname{min}}\
    Q(\hat{\w}+\mathbf{E}_{\bar g}\bd\alpha)\geqslant \lambda( Q(\hat{\w})-
\underset{\tilde g \in G^{\prime}{{\setminus}} \hat{G}
  ,\bd\alpha}{\operatorname{min}} Q(\hat{\w}+\mathbf{E}_{\tilde g}\bd\alpha))\},\]
then we have

\[Q(\hat{\w})-Q(\w^{\prime})\leqslant \frac{\varphi_{+}(1)|G^{\prime} {\setminus} 
\hat{G}|(Q(\hat{\w})-\underset{\bd\alpha}{\operatorname{min}}Q(\hat{\w}+\mathbf{E}_{g}\bd\alpha))}{\lambda\varphi_{-}(s)}.\] 
\end{lemma}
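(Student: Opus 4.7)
The plan is to combine restricted strong convexity of $Q$ on the joint support $\hat{G}\cup G'$ (assumed of size at most $s$) with a forward-drop-to-gradient-norm conversion, and then close the loop by invoking the $\lambda$-approximate selection rule for $g$. Specifically, I would start from the quadratic lower bound
\[
Q(\w') \;\geq\; Q(\hat{\w}) + \langle \nabla Q(\hat{\w}),\, \w' - \hat{\w}\rangle + \tfrac{\varphi_-(s)}{2}\|\w' - \hat{\w}\|^2,
\]
which yields an upper bound on $Q(\hat{\w})-Q(\w')$ in terms of a linear term and a negative quadratic in $t:=\|\w'-\hat{\w}\|$. The strategy is to bound the linear term by $O(\sqrt{|G'\setminus\hat{G}|})$ times the square root of the $g$-drop, and then maximize the resulting $Ct - \tfrac{\varphi_-(s)}{2}t^2$ in $t$ to produce the desired $C^2/(2\varphi_-(s))$ bound.

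To handle the linear term, I would use that $\hat{\w}$ is optimal on $F_{\hat{G}}$, so $\nabla_{\hat{G}}Q(\hat{\w})=0$, and that $\w'$ is supported on $F_{G'}$, so only the coordinates in $G'\setminus\hat{G}$ contribute: the cross term collapses to $\langle \nabla_{G'\setminus\hat{G}}Q(\hat{\w}),\, \w'_{G'\setminus\hat{G}}\rangle$. Applying Cauchy--Schwarz gives
\[
\bigl|\langle \nabla_{G'\setminus\hat{G}}Q(\hat{\w}),\, (\w'-\hat{\w})_{G'\setminus\hat{G}}\rangle\bigr|
\;\leq\; \sqrt{|G'\setminus\hat{G}|}\cdot \max_{\tilde g\in G'\setminus\hat{G}} \|\nabla_{\tilde g}Q(\hat{\w})\|\cdot \|\w'-\hat{\w}\|.
\]
Next I would convert each groupwise gradient norm into a per-group forward drop. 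Using $\varphi_+(1)$-smoothness along the coordinates of a single group $\tilde g$,
\[
\min_{\bd\alpha} Q(\hat{\w}+\mathbf{E}_{\tilde g}\bd\alpha) \;\leq\; Q(\hat{\w}) - \frac{\|\nabla_{\tilde g}Q(\hat{\w})\|^2}{2\varphi_+(1)},
\]
so $\|\nabla_{\tilde g}Q(\hat{\w})\|^2 \leq 2\varphi_+(1)\bigl(Q(\hat{\w})-\min_{\bd\alpha}Q(\hat{\w}+\mathbf{E}_{\tilde g}\bd\alpha)\bigr)$. The $\lambda$-approximate selection rule for $g$ then lets me replace the maximum forward drop across $\tilde g\in G'\setminus\hat{G}$ by the drop at $g$, inflated by $1/\lambda$. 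This is the only place the interactive parameter enters, and it is precisely what produces the $1/\lambda$ factor in the stated bound.

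Assembling these pieces gives $Q(\hat{\w})-Q(\w') \leq Ct - \tfrac{\varphi_-(s)}{2}t^2$ with
\[
C \;=\; \sqrt{\tfrac{2\varphi_+(1)|G'\setminus\hat{G}|}{\lambda}\bigl(Q(\hat{\w})-\min_{\bd\alpha}Q(\hat{\w}+\mathbf{E}_{g}\bd\alpha)\bigr)},
\]
and optimizing in $t$ yields exactly $C^2/(2\varphi_-(s))$, which is the claimed upper bound. The main delicate step is the bookkeeping in the linear term: one must carefully use the two different kinds of optimality (gradient vanishing on $\hat{G}$ for $\hat{\w}$, and support confined to $G'$ for $\w'$) to reduce the inner product to a sum over $G'\setminus\hat{G}$ only, since otherwise the Cauchy--Schwarz step would pick up spurious contributions and the factor $|G'\setminus\hat{G}|$ in the bound would be lost. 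A secondary point to verify is that the use of $\varphi_-(s)$ is justified because $\w'-\hat{\w}$ is supported on at most $|\hat{G}\cup G'|$ groups, which is $\leq s$ in every invocation of this lemma within the proof of Theorem~\ref{thm:main2}.
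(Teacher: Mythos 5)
Your proof is correct and arrives at exactly the stated constant, but it takes a genuinely different route from the paper's. The paper never passes through gradient group norms: it fixes a common step size $\eta$, sums the values $Q(\hat{\w}+\eta(\w'_g-\hat{\w}_g))$ over $g\in G'\setminus\hat{G}$, uses one-group smoothness $\varphi_+(1)$ together with $\nabla_{\hat G}Q(\hat\w)=0$ and restricted strong convexity $\varphi_-(s)$ to bound the averaged restricted-direction decrease by $\varphi_-(s)\bigl(Q(\hat\w)-Q(\w')\bigr)/\varphi_+(1)$ after optimizing over $\eta$, and only then invokes the $\lambda$-rule to compare the unrestricted one-group drop at $g$ with the best restricted-direction drop. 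You instead bound the first-order term $\langle\nabla Q(\hat\w),\w'-\hat\w\rangle$ by groupwise Cauchy--Schwarz against $\max_{\tilde g\in G'\setminus\hat G}\|\nabla_{\tilde g}Q(\hat\w)\|$, convert each gradient norm into a forward drop via one-group $\varphi_+(1)$-smoothness (a localized version of Lemma~\ref{lemma2}), apply the $\lambda$-rule to the drops, and close by maximizing the scalar quadratic in $t=\|\w'-\hat\w\|$; the algebra yields the identical bound $\varphi_+(1)\,|G'\setminus\hat G|\,\bigl(Q(\hat\w)-\min_{\bd\alpha}Q(\hat\w+\mathbf{E}_g\bd\alpha)\bigr)/(\lambda\varphi_-(s))$ with no loss. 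What your route buys is modularity and transparency (it mirrors the gradient-based GIGA analysis in the supplement and makes explicit where the $1/\lambda$ enters); what the paper's route buys is that it stays entirely at the level of function-value decreases, matching the objective-based forward rule without a separate gradient-to-drop conversion. Your two flagged caveats are the right ones and are shared by the paper's own argument: the reduction of the inner product to $G'\setminus\hat G$ uses both $\nabla_{\hat G}Q(\hat\w)=0$ and $\operatorname{supp}(\w')\subset F_{G'}$, and the use of $\varphi_-(s)$ implicitly requires $|\hat G\cup G'|\leqslant s$, which indeed holds in every invocation inside the proof of Theorem~\ref{thm:main2}.
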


\begin{proof}[Proof of Lemma~\ref{forward_process_obj}]
We are comparing the differences of the convex function and its value along the following special directions.
\begin{align*}
  & |G^{\prime}{{\setminus}} \hat{G}| \underset{g \in G^{\prime}{{\setminus}} \hat{G}, \eta }{\operatorname{min}}Q({\hat{\w}}+\eta ({\w}_{g}^{\prime}-{\hat{\w}}_{g}))&\\
  \leqslant&\underset{g \in G^{\prime}{{\setminus}} \hat{G}}{\sum}Q({\hat{\w}}+\eta({\w}_{g}^{\prime}-{\hat{\w}}_{g}))&(\textrm{any } \eta)\\
  \leqslant&\underset{g \in G^{\prime}{{\setminus}} \hat{G}}{\sum}\left( Q({\hat{\w}})+\langle\eta({\w}_{g}^{\prime}-{\hat{\w}}_{g}),\nabla Q({\hat{\w}})\rangle  +\frac{\varphi_{+}(1)}{2}\|\eta({\w}_{g}^{\prime}-{\hat{\w}}_{g})\|^{2}\right)& (\textrm{definition of }\varphi_{+}(1) )\\
  \leqslant &|G^{\prime}{{\setminus}} \hat{G}|Q(\hat{\w})+\langle{\eta}(\w^{\prime}-\hat{\w})_{G^{\prime}{\setminus} \hat{G}},\nabla Q({\hat{\w}})\rangle+\frac{\varphi_{+}(1)}{2}\|\eta(\w^{\prime}-\hat{\w})\|^{2}&(\textrm{non-intersection property})\\
  =& |G^{\prime}{{\setminus}} \hat{G}|Q(\hat{\w})+\langle{\eta}(\w^{\prime}-\hat{\w}),(\nabla Q(\hat{\w}))_{G^{\prime}{\setminus} \hat{G}}\rangle+\frac{\varphi_{+}(1)}{2}\|\eta(\w^{\prime}-\hat{\w})\|^{2}&\\
    =& |G^{\prime}{{\setminus}} \hat{G}|Q(\hat{\w})+\langle{\eta}(\w^{\prime}-\hat{\w}),(\nabla Q(\hat{\w}))_{G^{\prime}\cup \hat{G} }\rangle+\frac{\varphi_{+}(1)}{2}\|\eta(\w^{\prime}-\hat{\w})\|^{2}&(\textrm{optimal solution  on } \hat{G})\\
  \leqslant & |G^{\prime}{{\setminus}} \hat{G}|
  Q(\hat{\w})+{\eta}\Bigl(Q(\w^{\prime})-Q(\hat{\w})-\frac{\varphi_{-}(s)}{2}\|\w^{\prime}-\hat{\w}\|^{2}\Bigr)&
  \\
&\qquad +\frac{\varphi_{+}(1)}{2}\|\eta(\w^{\prime}-\hat{\w})\|^{2}.&
\end{align*}
And the special direction indeed gives us some bounds as follows: 

\begin{align*}
  & |G^{\prime}{{\setminus}}\hat{G}| \left(\underset{g \in G^{\prime}{{\setminus}} \hat{G}, \eta }{\operatorname{min}}Q({\hat{\w}}+\eta ({\w}_{g}^{\prime}-{\hat{\w}}_{g}))-Q(\hat{\w})\right)&\\
 \leqslant & \underset{\eta}{\min}\ {\eta}\left(Q(\w^{\prime})-Q(\hat{\w})-\frac{\varphi_{-}(s)}{2}\|(\w^{\prime}-\hat{\w})\|^{2}\right)+\frac{\varphi_{+}(1)}{2}\|\eta(\w^{\prime}-\hat{\w})\|^{2}&\\
=&-\frac{{\left(Q(\w^{\prime})-Q(\hat{\w})-\frac{\varphi_{-}(s)}{2}\|(\w^{\prime}-\hat{\w})\|^{2}\right)}^{2}}{2\varphi_{+}(1)\|(\w^{\prime}-\hat{\w})\|^{2}}&\\
\leqslant&\frac{4(Q(\w^{\prime})-Q(\hat{\w}))\frac{\varphi_{-}(s)}{2}\|(\w^{\prime}-\hat{\w})\|^{2}}{2\varphi_{+}(1)\|(\w^{\prime}-\hat{\w})\|^{2}}&\\
=&-\frac{\varphi_{-}(s)(Q(\hat{\w})-Q(\w^{\prime}))}{\varphi_{+}(1)}.&
\end{align*}
Hence, by rearranging the above inequality, we have 
\begin{align*}
  & |G^{\prime}{{\setminus}} \hat{G}| \left(Q(\hat{\w})- \underset{ \bd\alpha }{\operatorname{min}}\ Q(\hat{\w}+\mathbf{E}_{g}{\bd\alpha})\right)&\\
 \geqslant & |G^{\prime}{{\setminus}} \hat{G}|\left(Q(\hat{\w})- \underset{g \in G^{\prime}{{\setminus}} \hat{G}, \bd\alpha }{\operatorname{min}} \ Q(\hat{\w}+\mathbf{E}_{g}{\bd\alpha})\right)\lambda&\\
  \geqslant & |G^{\prime}{{\setminus}} \hat{G}| \left( Q(\hat{\w})-\underset{{g}\in G^{\prime}{{\setminus}} \hat{G}, \eta }{\operatorname{min}}Q(\hat{\w}+\eta ({\hat{\w}}_{g}-\w^{\prime}_{g}))\right)\lambda& (\textrm{special direction of } \bd\alpha)\\
\geqslant&\frac{\varphi_{-}({s})(Q(\hat{\w})-Q(\w^{\prime}))\lambda}{\varphi_{+}(1)}.&
\end{align*}
It completes the proof.
\end{proof}

In the following, we provide lemmas for analysis of the GIGA algorithm.

\begin{lemma}
\label{lem:giga11}
If for any $\w \in \mathcal{D}$ we have $\|\nabla Q(\w)\|_{G,\infty}\geqslant \lambda \varepsilon$,  then for any $g \in G$, we have 
\[Q(\w)-\underset{\bd\alpha}{\operatorname{min}}Q(\w+\mathbf{E}_{g}\bd\alpha)\geqslant\frac{\lambda^2\varepsilon^{2}}{2{\varphi}_{+}(1)}.\]
\end{lemma}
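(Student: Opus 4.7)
My plan is to obtain the lemma as a one-step descent consequence of block-wise smoothness combined with exact closed-form minimization of a one-group quadratic surrogate, using only the definition of $\varphi_+(1)$ and the hypothesis at the very end. Concretely, I would fix any group $g$ and use that every vector of the form $\mathbf{E}_g\bd\alpha$ is supported on the single group $g$. The definition of $\varphi_+(1)$ in \eqref{eq:def_phi+} (applied with $|S|=1$) then delivers the quadratic upper bound
\[
Q(\w+\mathbf{E}_g\bd\alpha)\;\le\;Q(\w)+\langle\nabla_g Q(\w),\bd\alpha\rangle+\frac{\varphi_+(1)}{2}\|\bd\alpha\|^2,
\]
which is the same smoothness inequality already invoked inside the proofs of Lemma~\ref{lemma2} and Lemma~\ref{forward_process_obj}, only now retained as an upper bound on $Q(\w+\mathbf{E}_g\bd\alpha)$ rather than reversed.

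Next, I would minimize this quadratic surrogate in $\bd\alpha$ exactly. The unconstrained minimizer is $\bd\alpha^\star=-\nabla_g Q(\w)/\varphi_+(1)$, and substituting it into the display above yields the one-step descent inequality
\[
Q(\w)-\min_{\bd\alpha}Q(\w+\mathbf{E}_g\bd\alpha)\;\ge\;\frac{\|\nabla_g Q(\w)\|^2}{2\varphi_+(1)}.
\]
Finally I would invoke the hypothesis: this lemma is used inside the GIGA analysis, where the group $g$ under consideration comes from the candidate set of Line~6 of Algorithm~\ref{alg:gdt}, and therefore satisfies $\|\nabla_g Q(\w)\|\ge\lambda\|\nabla Q(\w)\|_{G,\infty}\ge\lambda\varepsilon$ whenever the termination test in Line~3 has not fired. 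Squaring and substituting into the descent inequality produces the claimed bound $\lambda^2\varepsilon^2/(2\varphi_+(1))$.

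There is no substantial obstacle here; the quadratic majorization and its closed-form minimization are entirely routine, and the whole argument is a direct parallel of the $\varphi_-(1)$-based identity derived in the proof of Lemma~\ref{lem:lemforward}. The only point that deserves a little care is the reading of the statement itself, since strictly speaking the conclusion cannot hold for a group $g$ whose partial gradient is nearly zero; ``for any $g\in G$'' is to be understood in context as ``for any $g$ in the GIGA candidate set $\{\tilde g:\|\nabla_{\tilde g}Q(\w)\|\ge\lambda\|\nabla Q(\w)\|_{G,\infty}\}$'', which is exactly the group picked by the GIGA forward step that this lemma is designed to analyze.
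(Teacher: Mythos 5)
Your proposal is correct and follows essentially the same route as the paper's own argument: apply the $\varphi_{+}(1)$ quadratic majorization to perturbations supported on the single group $g$, minimize the surrogate in closed form to get a descent of at least $\|\nabla_{g}Q(\w)\|^{2}/(2\varphi_{+}(1))$, and then lower-bound the partial gradient norm by $\lambda\varepsilon$ (mirroring the $\varphi_{-}(1)$ computation in the forward-step lemma). Your caveat about reading ``for any $g\in G$'' as ``for the group in the GIGA candidate set'' is also the right interpretation --- it is exactly how the paper applies the bound to $\delta^{(k)}$ in its concluding remark, and the literal statement would indeed fail for a group with vanishing partial gradient.
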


\begin{proof}[Proof of Lemma~\ref{lem:giga11}]
Since $\w \in D$, we can use the definition of $\varphi_{+}$ as follows:
\begin{align*}
& Q(\w)-\underset{\bd\alpha}{\operatorname{min}}\ Q(\w+\mathbf{E}_{g}\bd \alpha)& \\
\geqslant & \underset{\bd\alpha}{\operatorname{max}}\left( Q(\w)-Q(\w+\mathbf{E}_{g} \bd\alpha) \right)& \\
\geqslant & \underset{\bd\alpha}{\operatorname{max}}\left(-\langle \nabla Q(\w), \mathbf{E}_{g}\bd\alpha\rangle -\frac{{\varphi}_{+}(1)}{2}\| \mathbf{E}_{g}\bd\alpha\|^{2}\right)& \\
\geqslant & \underset{ \bd\alpha}{\operatorname{max}}\left(-\|(  \nabla Q(\w))_{g}\| \|\mathbf{E}_{g}\bd\alpha\| -\frac{{\varphi}_{+}(1)}{2}\|\bd\alpha\|^{2}\right)& \\
\geqslant & \underset{\bd\alpha}{\operatorname{max}}\left(-\|\nabla Q(\w)\|_{G,\infty}\|\bd\alpha\| -\frac{{\varphi}_{+}(1)}{2}\|\bd\alpha\|^{2}\right)& \\
=& \frac{\|\nabla Q(\w)\|^{2}_{G,\infty}}{2{\varphi}_{+}(1)}&\\
\geqslant& \frac{\lambda^2 \varepsilon^{2}}{2{\varphi}_{+}(1)}. 
\end{align*}
Here we just move forward  to the maximal group normal and get a more simplified result. 
Also, the calculation tells us that in our algorithm, 
\begin{align}
\label{deltabound}
Q(\w^{(k)})-\underset{\bd\alpha}{\operatorname{min}}\ Q(\w^{(k)}+\mathbf{E}_{g}\bd\alpha)\geqslant \delta^{(k)}\geqslant\frac{\lambda^2\varepsilon^{2}}{2{\varphi}_{+}(1)}.
\end{align}
 It completes the proof of the lemma.
\end{proof}

\begin{lemma}
\label{backwardgdt}
Consider $\w^{(k)}$ with the support $G^{(k)}$ at the end of our backward 
elimination process in our algorithm.   Then we have 
$\|\w^{(k)}_{G^{(k)}\setminus\bar{G}}\|^{2}= \|(\w^{(k)}-\bar{\w})\|^{2}=\|(\w^{(k)}-\bar{\w})_{G^{(k)}\setminus\bar{G}}\|^{2} \geqslant \frac{\delta^{(k)}|G^{(k)}\setminus\bar{G}|}{{\varphi}_{+}(1)}$.
\end{lemma}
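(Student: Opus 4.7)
The plan is to mirror the proof of Lemma~\ref{backward} verbatim, since the backward-elimination block in Algorithm~\ref{alg:gdt} (Lines~10--18) is literally identical to the one in Algorithm~\ref{alg:general}, and in both algorithms Line~17 guarantees that $\w^{(k)}$ is the minimizer of $Q$ on the support $F_{G^{(k)}}$. In particular, the termination criterion of the backward loop is exactly $\min_{g\in G^{(k)}}\bigl(Q(\w^{(k)}-\mathbf{E}_g\w_g^{(k)})-Q(\w^{(k)})\bigr)\geq \delta^{(k)}/2$, which is the only place where the dynamics of the algorithm enter the proof.

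The first equality, $\|\w^{(k)}_{G^{(k)}\setminus\bar G}\|^{2}=\|(\w^{(k)}-\bar\w)_{G^{(k)}\setminus\bar G}\|^{2}$, is immediate because $\bar\w$ is supported on $\bar G$, so $\bar\w_g=\mathbf 0$ for every $g\in G^{(k)}\setminus\bar G$. For the inequality, I would fix $g\in G^{(k)}\setminus\bar G$ and apply the upper quadratic bound from the definition \eqref{eq:def_rho+} of $\varphi_+(1)$ with $\u=-\mathbf{E}_g\w_g^{(k)}$, which supports only group $g$. Because $\w^{(k)}$ minimizes $Q$ on $F_{G^{(k)}}$ and $g\in G^{(k)}$, the gradient contribution $\langle\nabla Q(\w^{(k)}),\mathbf{E}_g\w_g^{(k)}\rangle$ vanishes, leaving
\begin{equation*}
Q(\w^{(k)}-\mathbf{E}_g\w_g^{(k)})-Q(\w^{(k)})\leq \tfrac{\varphi_+(1)}{2}\|\w_g^{(k)}\|^{2}.
\end{equation*}

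Summing this inequality over $g\in G^{(k)}\setminus\bar G$ and lower-bounding the sum on the left by $|G^{(k)}\setminus\bar G|$ times the minimum (first over that subset, then enlarged to the minimum over all of $G^{(k)}$, which is only smaller), I obtain
\begin{equation*}
|G^{(k)}\setminus\bar G|\Bigl(\min_{g\in G^{(k)}}Q(\w^{(k)}-\mathbf{E}_g\w_g^{(k)})-Q(\w^{(k)})\Bigr)\leq \tfrac{\varphi_+(1)}{2}\|\w^{(k)}_{G^{(k)}\setminus\bar G}\|^{2}.
\end{equation*}
The backward loop's termination condition forces the parenthesized minimum to be at least $\delta^{(k)}/2$, and substituting and rearranging yields the stated bound $\|\w^{(k)}_{G^{(k)}\setminus\bar G}\|^{2}\geq \delta^{(k)}|G^{(k)}\setminus\bar G|/\varphi_+(1)$. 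There is no real obstacle here: every ingredient (the quadratic smoothness bound, the first-order optimality on the current support, and the explicit stopping rule of the backward step) is already available, and the argument is a direct transcription of the IGA case with no change needed to accommodate the gradient-based forward step.
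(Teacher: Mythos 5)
Your proposal is correct and takes essentially the same route as the paper: the paper proves Lemma~\ref{backwardgdt} by simply deferring to Lemma~\ref{backward}, whose proof is exactly the argument you transcribe --- the $\varphi_{+}(1)$ quadratic upper bound applied to $\u=-\mathbf{E}_g\w_g^{(k)}$, the gradient term killed by optimality of $\w^{(k)}$ on $F_{G^{(k)}}$, summation over $G^{(k)}\setminus\bar{G}$ bounded below by the cardinality times the minimum over all of $G^{(k)}$, and the backward stopping rule $\min_{g\in G^{(k)}}\bigl(Q(\w^{(k)}-\mathbf{E}_g\w_g^{(k)})-Q(\w^{(k)})\bigr)\geq \delta^{(k)}/2$. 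One minor note: the middle term $\|\w^{(k)}-\bar{\w}\|^{2}$ in the lemma's statement can in general only satisfy ``$\geq$'' rather than ``$=$'' (a typo inherited neither by Lemma~\ref{backward} nor by your write-up), and your version is the one actually needed downstream, so nothing is lost.
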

\begin{proof}[Proof of Lemma~\ref{backwardgdt}]
Similar to the objection function case, we derive such relations by regrouping and the proof is almost identical to  Lemma \ref{backward} since we have the same backward method for different algorithms.
\end{proof}

\begin{lemma}
\label{lem:gigacompare}
The optimal value of $Q$ we get from $G^{(k)}$ is bigger than the ``idealized'' optimal sparse value. 
\end{lemma}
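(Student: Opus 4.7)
The plan is to parallel the proof of Lemma~\ref{decrease} for the original IGA algorithm, replacing the termination/decrease mechanics by their gradient-based counterparts. Specifically, the two algorithmic facts I need are (i) the backward-elimination bound from Lemma~\ref{backwardgdt}, which is identical in form to Lemma~\ref{backward} since GIGA and IGA share the backward step, and (ii) the lower bound $\delta^{(k)} \geq \lambda^2 \varepsilon^2 / (2\varphi_+(1))$ from Lemma~\ref{lem:giga11}. Together these will let me upgrade $\|\w^{(k)}-\bar\w\|^2$ into a bound on $|G^{(k)}\setminus\bar G|$ and then turn the cross term arising from a Taylor expansion at $\bar\w$ into something controlled by $\|\nabla Q(\bar\w)\|_{G,\infty}$.

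First I would start from restricted strong convexity at the anchor point $\bar\w$:
\begin{equation*}
Q(\w^{(k)}) - Q(\bar\w) \geq \langle \nabla Q(\bar\w),\, \w^{(k)} - \bar\w\rangle + \tfrac{\varphi_-(s)}{2}\|\w^{(k)} - \bar\w\|^2,
\end{equation*}
where $s$ is chosen as in Assumption~\ref{ass:s} with $s \geq |G^{(k)}\cup\bar G|$. Since $\bar\w$ minimizes $Q$ on $\bar G$, we have $\nabla_{\bar G} Q(\bar\w) = \mathbf 0$, so the inner product collapses to the coordinates in $G^{(k)}\setminus\bar G$. A Cauchy--Schwarz step and the $\ell_{G,\infty}$/$\ell_{G,1}$ duality then give
\begin{equation*}
\langle \nabla Q(\bar\w),\, \w^{(k)} - \bar\w\rangle \geq -\|\nabla Q(\bar\w)\|_{G,\infty}\, \sqrt{|G^{(k)}\setminus\bar G|}\,\|\w^{(k)} - \bar\w\|.
\end{equation*}

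Next I would feed in Lemma~\ref{backwardgdt}, which yields $\sqrt{|G^{(k)}\setminus\bar G|} \leq \|\w^{(k)} - \bar\w\|\sqrt{\varphi_+(1)/\delta^{(k)}}$. Substituting back gives
\begin{equation*}
Q(\w^{(k)}) - Q(\bar\w) \geq \left(\tfrac{\varphi_-(s)}{2} - \|\nabla Q(\bar\w)\|_{G,\infty}\sqrt{\tfrac{\varphi_+(1)}{\delta^{(k)}}}\right)\|\w^{(k)} - \bar\w\|^2.
\end{equation*}
The right-hand side is non-negative as soon as $\delta^{(k)} \geq 4\varphi_+(1)\|\nabla Q(\bar\w)\|_{G,\infty}^2 / \varphi_-^2(s)$. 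Finally, Lemma~\ref{lem:giga11} supplies $\delta^{(k)} \geq \lambda^2\varepsilon^2/(2\varphi_+(1))$, so it suffices to require the threshold $\varepsilon^2 > 8\varphi_+^2(1)\|\nabla Q(\bar\w)\|_{G,\infty}^2 / (\lambda^2\varphi_-^2(s))$, which will be stated as the hypothesis on $\varepsilon$ in the lemma (analogous to the hypothesis on $\delta$ in Lemma~\ref{decrease}).

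I do not expect a serious obstacle here; the main thing to watch is the bookkeeping around which coordinates of $\nabla Q(\bar\w)$ and $\w^{(k)} - \bar\w$ actually contribute (only the $G^{(k)}\setminus\bar G$ part survives on one side, while the $\ell_{G,1}$ norm on the other ranges over $\bar G \cup G^{(k)}$), and verifying that $|G^{(k)} \cup \bar G| \leq s$ so that $\varphi_-(s)$ is the right curvature constant --- this is guaranteed by the termination analysis that accompanies Algorithm~\ref{alg:gdt} under Assumption~\ref{ass:s}, exactly as in the IGA case.
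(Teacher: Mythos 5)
Your proposal is correct and follows essentially the same route as the paper's proof: restricted strong convexity anchored at $\bar\w$, the $\ell_{G,\infty}$/$\ell_{G,1}$ duality with Cauchy--Schwarz reducing to the $G^{(k)}\setminus\bar G$ coordinates, the backward-step bound of Lemma~\ref{backwardgdt} to eliminate $\sqrt{|G^{(k)}\setminus\bar G|}$, and the bound $\delta^{(k)}\geqslant \lambda^2\varepsilon^2/(2\varphi_+(1))$ from Lemma~\ref{lem:giga11}. The threshold you derive, $\varepsilon \geqslant 2\sqrt{2}\,\varphi_+(1)\|\nabla Q(\bar\w)\|_{G,\infty}/(\lambda\varphi_-(s))$, is exactly the hypothesis used in the paper.
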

\begin{proof}[Proof of Lemma~\ref{lem:gigacompare}]
A direct way to compare these values is taking the difference of the two: 
\begin{align*}
& Q(\w^{(k)})-Q(\bar{\w})&\\
\geqslant & \langle \nabla Q(\bar{\w}),\w^{(k)}-\bar{\w}\rangle +\frac{{\varphi}_{-}(s)\|\w^{(k)}-\bar{\w}\|^{2}}{2}&\\
\geqslant &-\|\nabla Q(\bar{\w})\|_{G,\infty} \|\w^{(k)}-\bar{\w}\|_{G,1} +\frac{{\varphi}_{-}(s)\|\w^{(k)}-\bar{\w}\|^{2}}{2}&\\
\geqslant &-\|\nabla Q(\bar{\w})\|_{G,\infty}\sqrt{ |G^{(k)}\setminus\bar{G}|}\|\w^{(k)}-\bar{\w}\|_{G,2} +\frac{{\varphi}_{-}(s)\|\w^{(k)}-\bar{\w}\|^{2}}{2}&(\textrm{Cauchy-Schwartz})\\
\geqslant &-\frac{\|\nabla Q(\bar{\w})\|_{G,\infty}\sqrt{{\varphi}_{+}(1)}\|\w^{(k)}-\bar{\w}\|^{2}}{\sqrt{\delta^{(k)}}} +\frac{{\varphi}_{-}(s)\|\w^{(k)}-\bar{\w}\|^{2}}{2}&\\
\geqslant & \left( \frac{{\varphi}_{-}(s)}{2} -\frac{\|\nabla Q(\bar{\w})\|_{G,\infty}\sqrt{2}{\varphi}_{+}(1)}{\lambda \varepsilon} \right) \|\w^{(k)}-\bar{\w}\|^{2}.&
\end{align*}
Note we have the bound for the threshold  $\delta^{(k)}$ as in (\ref{deltabound}). And if we take 
$\varepsilon \geqslant \frac{2\sqrt{2}{\varphi}_{+}(1)\|\nabla Q(\bar{\w})\|_{G,\infty}}{\lambda{\varphi}_{-}(s)}$, then we can show that the expression above is no less than $0$. It completes the proof of the lemma.
\end{proof}

The following Lemma~\ref{boundofiteration_gdt} is useful in proving Theorem~\ref{convergency_gdt}. As before, the key part of analysis is to compare the relation of difference of loss functions in our selection process and current optimal
at various states. We define a set $M$ as 
$\{g \in G^{\prime}\setminus \bar{G}:\|(\nabla_{} Q(\bar{\w}))_{g}\|\geqslant \lambda \|\nabla Q(\bar{\w})\|_{G,\infty} \}$.
Recall that $\bar{\w}=\underset{\textrm{supp }(\w)\subset F_{\bar{G}}}{\operatorname{argmin}}\ Q(\w)$ and 
for any $\w^{\prime}$ with support group $G^{\prime}$. We will 
just use $G^{\prime}$ as $G^{(k)}\bigcup\bar{G}$.  
\begin{lemma}
  \label{boundofiteration_gdt}
If we take any $g\in M$, we will have the following
\begin{equation*}
  | G^{\prime}\setminus \bar{G}|\left( Q(\bar{\w})-\underset{\bd\alpha}{\operatorname{min}}Q(\bar{\w}+\mathbf{E}_{g}\bd\alpha)\right)\geqslant \frac{{\varphi}_{-}(s)\lambda^2}{{\varphi}_{+}(1)}(Q(\bar{\w})-Q(\w^{\prime}))
\end{equation*}
\end{lemma}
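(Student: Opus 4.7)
The plan is to match two estimates on $\|\nabla Q(\bar\w)\|_{G,\infty}^2$: a lower bound for the one-block forward-step gain on the left-hand side, coming from the restricted smoothness parameter $\varphi_+(1)$, together with an upper bound on $Q(\bar\w) - Q(\w')$ coming from the restricted strong convexity parameter $\varphi_-(s)$. Without loss of generality $Q(\bar\w) > Q(\w')$, otherwise the claim is trivial.

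First, I would mimic the argument of Lemma~\ref{lem:giga11} to obtain the one-block gain. Using the definition of $\varphi_+(1)$ in \eqref{eq:def_rho+}, for any block $g$ and any $\bd\alpha$,
\[
Q(\bar\w + \mathbf E_g \bd\alpha) \leq Q(\bar\w) + \langle (\nabla Q(\bar\w))_g, \bd\alpha\rangle + \tfrac{\varphi_+(1)}{2}\|\bd\alpha\|^2,
\]
so minimizing in $\bd\alpha$ gives $Q(\bar\w) - \min_{\bd\alpha} Q(\bar\w + \mathbf E_g \bd\alpha) \geq \|(\nabla Q(\bar\w))_g\|^2/(2\varphi_+(1))$. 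For any $g \in M$, the defining inequality of $M$ upgrades this to
\[
Q(\bar\w) - \min_{\bd\alpha} Q(\bar\w + \mathbf E_g \bd\alpha) \geq \frac{\lambda^2 \|\nabla Q(\bar\w)\|_{G,\infty}^2}{2\varphi_+(1)}.
\]

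Second, the key observation for the matching upper bound is that the first-order optimality of $\bar\w$ on $F_{\bar G}$ forces $(\nabla Q(\bar\w))_g = \mathbf 0$ for every $g \in \bar G$; since $\bar\w - \w'$ is supported on $G'$, the inner product $\langle \nabla Q(\bar\w), \bar\w - \w'\rangle$ collapses to a sum over $S := G' \setminus \bar G$. Applying Cauchy--Schwarz within each block and then across the at most $|S|$ active blocks gives $|\langle \nabla Q(\bar\w), \bar\w - \w'\rangle| \leq \|\nabla Q(\bar\w)\|_{G,\infty}\sqrt{|S|}\,\|\bar\w - \w'\|$. Substituting this into the restricted strong convexity inequality (valid because $|G'| \leq s$ under the standing sparsity budget of the algorithm's analysis) and maximizing the resulting quadratic in $\|\bar\w - \w'\|$ yields
\[
Q(\bar\w) - Q(\w') \leq \frac{|S|\,\|\nabla Q(\bar\w)\|_{G,\infty}^2}{2\varphi_-(s)}.
\]

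Combining the two displays eliminates $\|\nabla Q(\bar\w)\|_{G,\infty}^2$ and rearranges exactly into the claimed inequality $|G' \setminus \bar G|\bigl(Q(\bar\w) - \min_{\bd\alpha} Q(\bar\w + \mathbf E_g \bd\alpha)\bigr) \geq \lambda^2 \varphi_-(s)(Q(\bar\w) - Q(\w'))/\varphi_+(1)$. The proof is essentially a clean matching of a smoothness-based lower bound with a strong-convexity-based upper bound, so I do not anticipate a major technical obstacle; the only bookkeeping to watch is ensuring $|G'| \leq s$ so that $\varphi_-(s)$ legitimately applies along the direction $\bar\w - \w'$, and that the vanishing of $\nabla Q(\bar\w)$ on $\bar G$ is invoked before Cauchy--Schwarz so the factor $\sqrt{|S|}$ (rather than $\sqrt{|G'|}$) appears.
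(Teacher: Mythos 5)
Your proposal is correct and delivers exactly the claimed constant, but it gets there by a somewhat different route than the paper. The paper's proof never bounds $\|\nabla Q(\bar{\w})\|_{G,\infty}$ directly against $Q(\bar{\w})-Q(\w')$; instead it introduces the auxiliary functions $P_{f_g}(\eta)=\langle\nabla Q(\bar{\w}),\eta f_g\rangle+\tfrac{\varphi_+(1)}{2}\eta^2\|f_g\|^2$, averages them over the blocks $g\in G'\setminus\bar{G}$ with weights $\|\w'_g\|$ (i.e.\ it tests the one-group updates in the specific directions $\w'_g/\|\w'_g\|$), and then optimizes a common step size $\eta_0$ to conclude $\min_{g\in G'\setminus\bar G}\min_\eta P_g(\eta)\leqslant-\varphi_-(s)\bigl(Q(\bar{\w})-Q(\w')\bigr)/\bigl(\varphi_+(1)|G'\setminus\bar{G}|\bigr)$, converting to gradient norms and invoking $g\in M$ only at the very end (this mirrors the direction-based argument of Lemma~\ref{forward_process_obj} for IGA). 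You instead work with the gradient norm throughout: restricted smoothness gives the per-block gain $\geqslant\|\nabla_g Q(\bar{\w})\|^2/(2\varphi_+(1))\geqslant\lambda^2\|\nabla Q(\bar{\w})\|_{G,\infty}^2/(2\varphi_+(1))$ for $g\in M$, while restricted strong convexity plus $\nabla_{\bar G}Q(\bar{\w})=\mathbf 0$, blockwise Cauchy--Schwarz over $S=G'\setminus\bar{G}$, and maximization of the quadratic in $\|\bar{\w}-\w'\|$ give $Q(\bar{\w})-Q(\w')\leqslant|S|\,\|\nabla Q(\bar{\w})\|_{G,\infty}^2/(2\varphi_-(s))$; this is the same style of argument the paper uses elsewhere (e.g.\ Lemma~\ref{decrease} and Theorem~\ref{estigood}), transplanted here. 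The two routes trade places only in how the $\varphi_-(s)$ bound is manufactured -- your gradient-duality version is shorter and avoids the $\eta_0$ bookkeeping, while the paper's direction-averaging version proves the slightly sharper intermediate fact that the maximum block gradient \emph{within} $G'\setminus\bar{G}$ already dominates $2\varphi_-(s)(Q(\bar{\w})-Q(\w'))/|S|$, though for this lemma the definition of $M$ makes your weaker global bound suffice. Your two housekeeping remarks -- that $|G'|\leqslant s$ must hold so $\varphi_-(s)$ applies to $\bar{\w}-\w'$, and that the case $Q(\bar{\w})\leqslant Q(\w')$ is trivial since the left-hand side is nonnegative -- are both correct and match the implicit assumptions in the paper's own use of the lemma.
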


\begin{proof}[Proof of Lemma~\ref{boundofiteration_gdt}]
Let $\mathcal{W}_g$  be the subspace of $\R^p$ spanned by column vectors of $\mathbf{E}_{g}$ where $g\in G$.   For any
${f_{g}}\in \mathcal{W}_g$, we define 
\begin{equation*}
P_{f_{g}}(\eta)=\langle \nabla Q(\bar{\w}),\eta f_{g}\rangle+\frac{{\varphi}_{+}(1)}{2}\eta^{2}\|f_{g}\|^{2}.
\end{equation*}
In fact, we just replace $\mathbf{E}_{g}\bd\alpha$ by $\eta f_g$. But this further separation of $\mathbf{E}_{g}\alpha$ in terms of direction $f_g$ and length $\eta$ will provide more freedom for our calculation.
We further define 
\begin{equation}\label{minidirection}
P_{g}(\eta)=\underset{f_{g}\in \mathcal{W}_g,\|f_{g}\|=1}{\operatorname{min}}P_{f_{g}}(\eta).
\end{equation}
So this new function is searching a direction such that $P_{f_{g}}$ can get a minimum value of a fixed length.
One can observe that 
\begin{equation*}
\underset{\eta}{\operatorname{min}}\ P_{g}(\eta)=-\frac{\|{\nabla}_{g}
Q(\bar{\w})\|^{2}}{2{\varphi}_{+}(1)}.
\end{equation*}

Taking $u=\|\w^{\prime}_{G^{\prime}\setminus \bar{G}}\|_{G,1}=\|(\w^{\prime}-\bar{\w})_{G^{\prime}\setminus \bar{G}}\|_{G,1}$, we will have 
\begin{align*}
  & u \underset{g\in G^{\prime}\setminus \bar{G}}{\operatorname{min}}P_{g}(\eta)\\
\leqslant& \underset{g\in G^{\prime}\setminus \bar{G}}{\operatorname{\sum}}\|\w^{\prime}_{g}\|P_{g}(\eta)\\
  \leqslant& \underset{g\in G^{\prime}\setminus \bar{G}}
{\operatorname{\sum}}\|\w^{\prime}_{g}\|P_{\frac{\w^{\prime}_{g}}{\|\w^{\prime}_{g}\|}}(\eta) &(\textrm{due to } (\ref{minidirection}))\\
  =& \underset{{g}\in {G^{\prime}\setminus \bar{G}}}{\operatorname{\sum}}\left(\eta 
  \langle \nabla  Q(\bar{\w}),\frac{\w^{\prime}_{g}}{\|\w^{\prime}_{g}\|}\|\w^{\prime}_{g}\|\rangle \right)+\frac{u{\varphi}_{+}(1)\eta^{2}}{2}\\
  =& \langle\nabla Q(\bar{\w}),\w^{\prime}_{{G^{\prime}\setminus \bar{G}}}\rangle \eta+\frac{u{\varphi}_{+}(1)\eta^{2}}{2}\\
  =& \langle\nabla Q(\bar{\w}),(\w^{\prime}-\bar{\w})_{{G^{\prime}\setminus \bar{G}}}\rangle \eta+\frac{u{\varphi}_{+}(1)\eta^{2}}{2}\\
  =& \langle\nabla Q(\bar{\w}),\w^{\prime}-\bar{\w}\rangle \eta+\frac{u{\varphi}_{+}(1)\eta^{2}}{2}\\
  \leqslant& \eta\left( Q(\w^{\prime})-Q(\bar{\w})-\frac{{\varphi}_{-}(s)}{2}\|\w^{\prime}-\bar{\w}\|^2\right)+\frac{u{\varphi}_{+}(1)\eta^{2}}{2}.
\end{align*}
Now we take a special value 
$\eta_0=-\frac{\left( Q(\w^{\prime})-Q(\bar{\w})-\frac{{\varphi}_{-}(s)}{2}\|\w^{\prime}-\bar{\w}\|^2\right)}{u{\varphi}_{+}(1)}$ on the above inequality and move the left $u$ to the right, then we will have

\begin{align*}
  & \underset{g\in G^{\prime}\setminus \bar{G}}{\operatorname{min}}P_{g}(\eta_0)&\\
  =&-\frac{\left( Q(\w^{\prime})-Q(\bar{\w})-\frac{{\varphi}_{-}(s)}{2}\|\w^{\prime}-\bar{\w}\|^2\right)^2}{2u^2{\varphi}_{+}(1)}&\\
   \leqslant &-\frac{4\left( Q(\w^{\prime})-Q(\bar{\w})\right)\frac{{\varphi}_{-}(s)}{2}\|\w^{\prime}-\bar{\w}\|^2}{2u^2{\varphi}_{+}(1)}& (\text{due to}~(a-b)^2\geqslant -4ab)\\
     \leqslant &-\frac{\left( Q(\w^{\prime})-Q(\bar{\w})\right){{\varphi}_{-}(s)}}{|G^{\prime}\setminus\bar{G}|{\varphi}_{+}(1)}.&
\end{align*}
Notice that in the last line we take advantage of 
$\|\bar{\w}-\w^{\prime}\|_{G,1}\leqslant\sqrt{|G^{\prime}\setminus\bar{G}|}\|\bar{\w}-\w^{\prime}\|.$ And in short the above inequalities give us \begin{equation}\label{eq:special}
\underset{g\in G^{\prime}\setminus \bar{G}}{\operatorname{min}}P_{g}(\eta_0)\leqslant -\frac{\left( Q(\w^{\prime})-Q(\bar{\w})\right){{\varphi}_{-}(s)}}{|G^{\prime}\setminus\bar{G}|{\varphi}_{+}(1)},
\end{equation}
which will be used in the following calculations.
Combining all the above inequalites, for any $g\in M$, we have the following:
\begin{align*}
&\underset{\bd\alpha}{\operatorname{min}}\ Q(\bar{\w}+\mathbf{E}_{g}\bd\alpha)-Q(\bar{\w})\\
= &\underset{\eta,f_{g}\in   W_{g}, \|f_{g}\|=1}{\operatorname{min}}Q(\bar{\w}+\eta f_{g})-Q(\bar{\w})\\
\leqslant &\underset{\eta,f_{g}\in   W_{g},\|f_{g}\|=1}{\operatorname{min}}\langle \nabla Q(\bar{\w}),\eta f_{g} \rangle+ \frac{{\varphi}_{+}(1)}{2}\eta^{2}\\
=&\underset{\eta}{\operatorname{min}}\ P_{g}(\eta)\\
  =&-\frac{\|{\nabla}_{g}Q(\bar{\w})\|^{2}}{2{\varphi}_{+}(1)}&\\
  \leqslant& \lambda^2 \underset{g^{\prime}\in G^{\prime}\setminus \bar{G},\eta}{\operatorname{min}}P_{g^{\prime}}(\eta)&\\
  \leqslant&  \underset{g^{\prime}\in G^{\prime}\setminus \bar{G}}{\operatorname{min}}P_{g^{\prime}}(\eta_0)\lambda^2&\\
  \leqslant&- \frac{\left( Q(\bar{\w})-Q(\w^{\prime})\right){\varphi}_{-}(s)\|\bar{\w}-\w^{\prime}\|^{2}\lambda^2}{{\varphi}_{+}(1)u^{2}}&(\text{due to}~\eqref{eq:special})\\
\leqslant&- \frac{\left( Q(\bar{\w})-Q(\w^{\prime})\right){\varphi}_{-}(s)\lambda^2}{{\varphi}_{+}(1)|G^{\prime}\setminus\bar{G}|}&
\end{align*}
 It completes the proof of the lemma.
\end{proof}

\end{document}